\newtheorem{theorem}{\bf Theorem}[section]
\newtheorem{lemma}[theorem]{\bf Lemma}
\newtheorem{prop}[theorem]{\bf Proposition}
\newtheorem{coro}[theorem]{\bf Corollary}
\newenvironment{proof}{\noindent{\em Proof:}}{\quad \hfill$\Box$\vspace{2ex}}
\def \bN {\Bbb N}
\def \bR {\Bbb R}
\def \bJ {\Bbb J}
\def \bC {\Bbb C}
\def \bz {{\bf z}}
\def \cB {{\cal B}}
\def \cC {{\cal C}}
\def \cF {{\cal F}}
\def \cH {{\cal H}}
\def \cL {{\cal L}}
\def \cS {{\cal S}}
\def \cP {{\cal P}}
\def \cE {{\cal E}}
\def \cZ {{\cal Z}}
\def \cW {{\cal W}}
\def \tK {\tilde{K}}
\def \tG {\tilde{G}}
\def \and {\, \mbox{\rm and}\, }
\def \span {\,{\rm span}\,}
\def \ran {{\rm ran}\,}
\def \Re {\,{\rm Re}\,}
\def \ae {\mbox{a.e. }}
\newcommand{\Rmnum}[1]{\expandafter\@slowromancap\romannumeral #1@}
\begin{document}

\title{\bf
Refinement of Operator-valued Reproducing Kernels\thanks{Supported
by Guangdong Provincial Government of China through the
``Computational Science Innovative Research Team" program.}}
\author{Yuesheng Xu\thanks{Department of Mathematics, Syracuse University, Syracuse, NY 13244,
USA, and Department of Scientific Computing and Computer
Applications, Sun Yat-sen University, Guangzhou 510275, P. R. China.
E-mail address: {\it yxu06@syr.edu}. Supported in part by US Air
Force Office of Scientific Research under grant FA9550-09-1-0511, by
the US National Science Foundation under grants DMS-0712827, by the
Natural Science Foundation of China under grant 11071286.},\quad
Haizhang Zhang\thanks{Corresponding author. School of Mathematics
and Computational Science and Guangdong Province Key Laboratory of Computational Science,
 Sun Yat-sen University, Guangzhou 510275, P. R. China. E-mail address: {\it zhhaizh2@sysu.edu.cn}.},\quad and \quad Qinghui Zhang
\thanks{Department of Scientific Computing and Computer
Applications, School of Mathematics and Computational Science,
 Sun Yat-sen University, Guangzhou 510275, P. R. China. E-mail address: {\it zhqingh@mail2.sysu.edu.cn}.}}
\date{}
\maketitle

\begin{abstract}
This paper studies the construction of a {\it refinement} kernel for
a given operator-valued reproducing kernel such that the
vector-valued reproducing kernel Hilbert space of the refinement
kernel contains that of the given one as a subspace. The study is
motivated from the need of updating the current operator-valued
reproducing kernel in multi-task learning when underfitting or
overfitting occurs. Numerical simulations confirm that the
established refinement kernel method is able to meet this need.
Various characterizations are provided based on feature maps and
vector-valued integral representations of operator-valued
reproducing kernels. Concrete examples of refining translation
invariant and finite Hilbert-Schmidt operator-valued reproducing
kernels are provided. Other examples include refinement of Hessian
of scalar-valued translation-invariant kernels and transformation
kernels. Existence and properties of operator-valued reproducing
kernels preserved during the refinement process are also
investigated.
\end{abstract}

\noindent{\bf Keywords}: vector-valued reproducing kernel Hilbert
spaces, vector-valued reproducing kernels, refinement, embedding,
translation invariant kernels, Hessian of Gaussian kernels,
Hilbert-Schmidt kernels, numerical experiments.

\vspace*{0.3cm}

\section{Introduction}
\setcounter{equation}{0}

Machine learning designs algorithms for the purpose of inferring
from finite empirical data a function dependency which can then be
used to understand or predict generation of new data. Past research
has mainly focused on single task learning problems where the
function to be learned is scalar-valued. Built upon the theory of
scalar-valued reproducing kernels \cite{Aronszajn}, kernel methods
have proven useful in single task learning, \cite{ScSm,ShCr,Vapnik}.
The approach might be justified in three ways. Firstly, as inputs
for learning algorithms are sample data, requiring the sampling
process to be stable seems inevitable. Thanks to the existence of an
inner product, Hilbert spaces are the class of normed vector spaces
that we can handle best. These two considerations lead
immediately to the notion of reproducing kernel Hilbert spaces
(RKHS). Secondly, a reasonable learning scheme is expected to make
use of the similarity between a new input and the existing inputs
for prediction. Inner products provide a natural measurement of
similarities. It is well-known that a bivariate function is a
scalar-valued reproducing kernel if and only if it is representable
as some inner product of the feature of inputs \cite{ScSm}. Finally,
finding a feature map and taking the inner product of the feature of
two inputs are equivalent to choosing a scalar-valued reproducing
kernel and performing function evaluations of it. This brings
computational efficiency and gives birth to the important ``kernel
trick" \cite{ScSm} in machine learning. For references on single
task learning and scalar-valued RKHS, we recommend
\cite{Aronszajn,CuckerSmale,CZ,EPP,ScSm,ShCr,Vapnik,ZXZ}.

In this paper, we are concerned with multi-task learning where the
function to be reconstructed from finite sample data takes range in
a finite-dimensional Euclidean space, or more generally, a Hilbert
space. Motivated by the success of kernel methods in single task
learning, it was proposed in \cite{EMP,MP2005} to develop algorithms
for multi-task learning in the framework of vector-valued RKHS. We
attempt to contribute to the theory of vector-valued RKHS by
studying a special embedding relationship between two vector-valued
RKHS. We shall briefly review existing work on vector-valued RKHS
and the associated operator-valued reproducing kernels. The study of
vector-valued RKHS dates back to \cite{Pedrick}. The notion of
matrix-valued or operator-valued reproducing kernels was also
obtained in \cite{BM}. References \cite{MW,MZ,YC} were devoted to
learning a multi-variate function and its gradient simultaneously.
Reference \cite{CDT} established the Mercer theorem for
vector-valued RKHS and characterized those spaces with elements
being $p$-integrable vector-valued functions. Various
characterizations and examples of universal operator-valued
reproducing kernels were provided in \cite{CMPY,CDTU}. The latter
\cite{CDTU} also examined basic operations of operator-valued
reproducing kernels and extended the Bochner characterization of
translation invariant reproducing kernels to the operator-valued
case.

The purpose of this paper is to study the refinement relationship of
two vector-valued reproducing kernels. We say that a vector-valued
reproducing kernel is a refinement of another kernel of such type if
the RKHS of the first kernel contains that of the latter one as a
linear subspace and their norms coincide on the smaller space. The
precise definition will be given in the next section after we
provide necessary preliminaries on vector-valued RKHS. The study is
motivated by the need of updating a vector-valued reproducing kernel
for multi-task machine learning when underfitting or overfitting
occurs. Detailed explanations of this motivation will be presented
in the next section. Mathematically, a thorough understanding of the
refinement relationship is essential to the establishment of a
multi-scale decomposition of vector-valued RKHS, which in turn is
the foundation for extending multi-scale analysis
\cite{Daubechies,Mallat} to kernel methods. In fact, a special
refinement method by a bijective mapping from the input space to
itself provides such a decomposition. As the procedure is similar to
the scalar-valued case, we refer interested authors to \cite{XZ} for
the details. The notion of refinement of scalar-valued kernels was
initiated and extensively investigated by the first two authors
\cite{XZ,XZ2}. Therefore, a general principle we shall follow is to
briefly mention or even completely omit arguments that are not
essentially different from the scalar-valued case. As we proceed
with the study, it will become clear that nontrivial obstacles in
extending the scalar-valued theory to vector-valued RKHS are mainly
caused by the complexity in the vector-valued integral
representation of the operator-valued reproducing kernels under
investigation, by the complicated form of the feature map involved,
which is also operator-valued, and by the infinite-dimensionality of
the output space in some occasions.

This paper is organized as follows. We shall introduce necessary
preliminaries on vector-valued RKHS and motivate our study from
multi-tasking learning in the next section. In Section 3, we shall
present three general characterizations of the refinement
relationship by examining the difference of two given kernels, the
feature map representation of kernels, and the associated kernels on
the extended input space. Recall that most scalar-valued reproducing
kernels are represented by integrals. In the operator-valued case,
we have two types of integral representations: the integral of
operator-valued reproducing kernels with respect to a scalar-valued
measure, and the integral of scalar-valued reproducing kernels with
respect to an operator-valued measure. As a key part of this paper,
we shall investigate in Section 4 specifications of the general
characterizations when the operator-valued reproducing kernels are
given by such integrals. In Section 5, we present concrete examples
of refinement by looking into translation-invariant operator-valued
kernels, Hessian of a scalar-valued kernels, Hilbert-Schmidt
kernels, etc. Section 6 treats specially the existence of nontrivial
refinements and desirable properties of operator-valued reproducing
kernels that can be preserved during the refinement process. In
Section 7, we perform two numerical simulations to show the effect
of the refinement kernel method in updating operator-valued
reproducing kernels for multi-task learning. Finally, we conclude
the paper in Section 8.

\section{Kernel Refinement}
\setcounter{equation}{0}

To explain our motivation from multi-task learning in details, we first recall the definition of operator-valued reproducing kernels. Throughout the paper, we let $X$ and $\Lambda$ denote a prescribed set and a separable Hilbert space, respectively. We shall call $X$ the input space and $\Lambda$ the output space. To avoid confusion, elements in $X$ and $\Lambda$ will be denoted by $x,y$, and $\xi,\eta$, respectively. Unless specifically mentioned, all the normed vector spaces in the paper are over the field $\bC$ of complex numbers. Let $\cL(\Lambda)$ be the set of all the bounded linear operators from $\Lambda$ to $\Lambda$, and $\cL_+(\Lambda)$ its subset of those linear operators $A$ that are positive, namely,
$$
(A\xi,\xi)_\Lambda\ge0\mbox{ for all }\xi\in \Lambda,
$$
where $(\cdot,\cdot)_\Lambda$ is the inner product on $\Lambda$. The adjoint of $A\in \cL(\Lambda)$ is denoted by $A^*$. An {\it $\cL(\Lambda)$-valued reproducing kernel} on $X$ is a function $K:X\times X\to \cL(\Lambda)$ such that $K(x,y)=K(y,x)^*$ for all $x,y\in X$, and such that for all $x_j\in X$, $\xi_j\in \Lambda$, $j\in\bN_n:=\{1,2,\ldots,n\}$, $n\in\bN$,
\begin{equation}\label{positivity}
\sum_{j=1}^n\sum_{k=1}^n(K(x_j,x_k)\xi_j,\xi_k)_\Lambda\ge0.
\end{equation}
For each $\cL(\Lambda)$-valued reproducing kernel $K$ on $X$, there exists a unique Hilbert space, denoted by $\cH_K$, consisting of $\Lambda$-valued functions on $X$ such that
\begin{equation}\label{inclusion}
K(x,\cdot)\xi\in\cH_K\mbox{ for all }x\in X\mbox{ and }\xi\in\Lambda
\end{equation}
and
\begin{equation}\label{reproducing}
(f(x),\xi)_\Lambda=(f,K(x,\cdot)\xi)_{\cH_K}\mbox{ for all }f\in\cH_K,\ x\in X,\mbox{ and }\xi\in\Lambda.
\end{equation}
It is implied by the above two properties that the point evaluation at each $x\in X$:
$$
\delta_x(f):=f(x),\ \ f\in\cH_K
$$
is continuous from $\cH_K$ to $\Lambda$. In other words, $\cH_K$ is a $\Lambda$-valued RKHS. We call it the RKHS of $K$. Conversely, for each $\Lambda$-valued RKHS on $X$, there exists a unique $\cL(\Lambda)$-valued reproducing kernel $K$ on $X$ that satisfies (\ref{inclusion}) and (\ref{reproducing}). For this reason, we also call $K$ the reproducing kernel (or kernel for short) of $\cH_K$. The bijective correspondence between $\cL(\Lambda)$-valued reproducing kernels and $\Lambda$-valued RKHS is central to the theory of vector-valued RKHS.

Given two $\cL(\Lambda)$-valued reproducing kernels $K,G$ on $X$, we shall investigate in this paper the fundamental embedding relationship $\cH_K\preceq\cH_G$ in the sense that $\cH_K\subseteq\cH_G$ and for all $f\in\cH_K$, $\|f\|_{\cH_K}=\|f\|_{\cH_G}$. Here, $\|\cdot\|_\cW$ denotes the norm of a normed vector space $\cW$. We call $G$ a {\it refinement} of $K$ if there does hold $\cH_K\preceq \cH_G$. Such a refinement is said to be nontrivial if $G\ne K$.

We motivate this study from the kernel methods for multi-task learning and from the multi-scale decomposition of vector-valued RKHS. Let $\bz:=\{(x_j,\xi_j):j\in\bN_n\}\subseteq X\times \Lambda$ be given sample data. A typical kernel method infers from $\bz$ the minimizer $f_\bz$ of
\begin{equation}\label{regularization}
\min_{f\in\cH_K}\frac 1n \sum_{j=1}^n C(x_j,\xi_j,f(x_j))+\sigma \phi(\|f\|_{\cH_K}),
\end{equation}
where $K$ is a selected $\cL(\Lambda)$-valued reproducing kernel on $X$, $C$ a prescribed loss function, $\sigma$ a positive regularization parameter, and $\phi$ a regularizer. The ideal predictor $f_0:X\to \Lambda$ that we are pursuing is the one that minimizes
$$
\cE(f):=\int_{X\times \Lambda}C(x,\xi,f(\xi))dP
$$
among all possible functions $f$ from $X$ to $\Lambda$. Here $P$ is an unknown probability measure on $X\times\Lambda$ that dominates the generation of data from $X\times \Lambda$. We wish that $\cE(f_z)-\cE(f_0)$ can converge to zero in probability as the number $n$ of sampling points tends to infinity. Whether this will happen depends heavily on the choice of the kernel $K$. The error $\cE(f_z)-\cE(f_0)$ can be decomposed into the sum of the {\it approximation error} and {\it sampling error}, \cite{ScSm,Vapnik}. The approximation error occurs as we search the minimizer in a restricted set of candidate functions, namely, $\cH_K$. It becomes smaller as $\cH_K$ enlarges. The sampling error is caused by replacing the expectation $\cE(f)$ of the loss function $C(x,\xi,f(\xi))$ with the sample mean
$$
\frac 1n \sum_{j=1}^nC(x_j,\xi_j,f(x_j)).
$$
By the law of large numbers, the sample mean converges to the expectation in probability as $n\to\infty$ for a fixed $f\in\cH_K$. However, as $f_\bz$ varies according to changes in the sample data $\bz$, we need a uniform version of the law of large number on $\cH_K$ in order to well control the sampling error. Therefore, the sampling error usually increases as $\cH_K$ enlarges, or to be more precisely, as the {\it capacity} of $\cH_K$ increases.

By the above analysis, we might encounter two situations after the choice of an $\cL(\Lambda)$-valued reproducing kernel $K$:
\begin{itemize}
\item[---] overfitting, which occurs when the capacity of $\cH_K$ is too large, forcing the minimizer obtained from (\ref{regularization}) to imitate artificial function dependency in the sample data, and thus causing the sampling error to be out of control;
\item[---] underfitting, which occurs when $\cH_K$ is too small for the minimizer of (\ref{regularization}) to describe the desired function dependency implied in the data, and thus failing in bounding the approximation error.
\end{itemize}
When one of the above situations happens, a remedy is to modify the reproducing kernel. Specifically, one might want to find another $\cL(\Lambda)$-valued reproducing kernel $G$ such that $\cH_K\preceq\cH_G$ when there is underfitting, or such that $\cH_G\preceq \cH_K$ when there is overfitting. We see that in either case, we need to make use of the refinement relationship. We shall verify in the last section through extensive numerical simulations that the refinement kernel method is indeed able to provide an appropriate update of an operator-valued reproducing kernel when underfitting or overfitting occurs.

\section{General Characterizations}
\setcounter{equation}{0}

The relationship between the RKHS of the sum of two operator-valued reproducing kernels and those of the summand kernels has been made clear in Theorem 1 on page 44 of \cite{Pedrick}. Our first characterization of refinement is a direct consequence of this result.

\begin{prop}\label{firstcharacterization}
Let $K,G$ be two $\cL(\Lambda)$-valued reproducing kernels on $X$.
Then $\cH_K\preceq\cH_G$ if and only if $G-K$ is an $\cL(\Lambda)$-valued
reproducing kernel on $X$ and $\cH_K \cap \cH_{G-K} = \{0\}$. If
$\cH_K\preceq\cH_G$ then $\cH_{G-K}$ is the orthogonal complement of ${\cal
H}_K$ in ${\cal H}_G$.
\end{prop}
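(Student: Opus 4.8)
The plan is to derive the whole statement from the vector-valued analogue of Aronszajn's sum theorem (Theorem~1 on page~44 of \cite{Pedrick}), which asserts that if $K_1,K_2$ are $\cL(\Lambda)$-valued reproducing kernels on $X$ then so is $K_1+K_2$, that $\cH_{K_1+K_2}=\{f_1+f_2:f_j\in\cH_{K_j},\ j=1,2\}$, and that
$$
\|f\|_{\cH_{K_1+K_2}}^2=\min\{\|f_1\|_{\cH_{K_1}}^2+\|f_2\|_{\cH_{K_2}}^2:\ f=f_1+f_2,\ f_j\in\cH_{K_j}\},
$$
together with elementary Hilbert space geometry and the uniqueness of the reproducing kernel attached to a given $\Lambda$-valued RKHS.

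For the ``if'' direction I would apply the sum theorem with $K_1:=K$ and $K_2:=G-K$. This immediately gives $\cH_K\subseteq\cH_G$ (write $f=f+0$). The hypothesis $\cH_K\cap\cH_{G-K}=\{0\}$ forces each decomposition $f=f_1+f_2$ with $f_1\in\cH_K$, $f_2\in\cH_{G-K}$ to be unique, so for $f\in\cH_K$ the only admissible decomposition is $f=f+0$ and the min-norm formula collapses to $\|f\|_{\cH_G}^2=\|f\|_{\cH_K}^2$. Hence $\cH_K\preceq\cH_G$.

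For the ``only if'' direction, assume $\cH_K\preceq\cH_G$. Since $(\cH_K,\|\cdot\|_{\cH_K})$ is complete and the inclusion into $\cH_G$ is isometric, $\cH_K$ is a closed subspace of $\cH_G$, and by polarization the two inner products agree on it. Let $\cN$ be the orthogonal complement of $\cH_K$ in $\cH_G$, so $\cH_G=\cH_K\oplus\cN$. As a closed subspace of the RKHS $\cH_G$, $\cN$ has continuous point evaluations, hence is itself a $\Lambda$-valued RKHS; let $N$ be its (unique) $\cL(\Lambda)$-valued reproducing kernel. The key step is to show $N=G-K$. Writing $P$ for the orthogonal projection of $\cH_G$ onto $\cH_K$, the reproducing identity (\ref{reproducing}) in $\cH_G$ gives, for all $x\in X$, $\xi\in\Lambda$ and $f\in\cH_K$,
$$
(f,K(x,\cdot)\xi)_{\cH_G}=(f(x),\xi)_\Lambda=(f,G(x,\cdot)\xi)_{\cH_G}=(f,PG(x,\cdot)\xi)_{\cH_G};
$$
since $K(x,\cdot)\xi$ and $PG(x,\cdot)\xi$ both lie in $\cH_K$ this yields $K(x,\cdot)\xi=PG(x,\cdot)\xi$, so $(I-P)G(x,\cdot)\xi=G(x,\cdot)\xi-K(x,\cdot)\xi$. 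The same computation carried out inside $\cN$ (with $I-P$ in place of $P$) identifies $N(x,\cdot)\xi=(I-P)G(x,\cdot)\xi$; comparing, $N(x,\cdot)\xi=(G-K)(x,\cdot)\xi$ as elements of $\cH_G$, and evaluating at an arbitrary $y\in X$ while letting $\xi$ run over $\Lambda$ gives $N(x,y)=G(x,y)-K(x,y)$ in $\cL(\Lambda)$. Thus $G-K=N$ is an $\cL(\Lambda)$-valued reproducing kernel, $\cH_{G-K}=\cN$ is the orthogonal complement of $\cH_K$ in $\cH_G$ --- which is exactly the last assertion --- and consequently $\cH_K\cap\cH_{G-K}=\cH_K\cap\cN=\{0\}$.

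The only genuinely delicate point is this identification $N=G-K$: one must check that the orthogonal complement $\cN$ really is a reproducing kernel Hilbert space and then pin down its kernel through the projection $P$. Everything else --- closedness of $\cH_K$ in $\cH_G$, the polarization remark, and the bookkeeping around the min-norm formula --- is routine once the sum theorem of \cite{Pedrick} is in hand.
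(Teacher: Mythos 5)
Your proof is correct, and it takes essentially the route the paper intends: the paper gives no explicit argument, stating only that the proposition is a direct consequence of the sum theorem in \cite{Pedrick}, and your "if" direction is precisely that deduction while your "only if" direction supplies the standard orthogonal-complement identification of the kernel of $\cH_K^{\perp}$ in $\cH_G$ via the projection $P$. Both halves are sound, including the delicate step $PG(x,\cdot)\xi=K(x,\cdot)\xi$ that pins down $N=G-K$.
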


Every reproducing kernel has a feature map representation. Specifically, $K$ is an $\cL(\Lambda)$-valued reproducing kernel on $X$ if and only if there exists a Hilbert space $\cW$ and a mapping $\Phi:X\to \cL(\Lambda,\cW)$ such that
\begin{equation}\label{featureK}
K(x,y)=\Phi(y)^*\Phi(x),\ \ x,y\in X,
\end{equation}
where $\cL(\Lambda,\cW)$ denotes the set of bounded linear operators from $\Lambda$ to $\cW$, and $\Phi(y)^*$ is the adjoint operator of $\Phi(y)$. We call $\Phi$ a {\it feature map} of $K$. The following lemma is useful in identifying the RKHS of a reproducing kernel given by a feature map representation (\ref{featureK}).

\begin{lemma}\label{rkhsfeature}
If $K$ is an $\cL(\Lambda)$-valued reproducing kernel on $X$ given by (\ref{featureK}) then
\begin{equation}\label{rkhsfeatureeq1}
\cH_K = \{\Phi(\cdot)^*u:\,u\in\cW\}
\end{equation}
with inner product
$$
(\Phi(\cdot)^*u,\Phi(\cdot)^*v)_{\cH_K}:= (P_\Phi u,P_\Phi v)_\cW,\ \ u,v\in\cW,
$$
where $P_\Phi$ is the orthogonal projection of $\cW$ onto
$$
\cW_\Phi:=\overline{\mbox{span}}\{\Phi(x)\xi:\,x\in
X,\,\xi\in\Lambda\}.
$$
\end{lemma}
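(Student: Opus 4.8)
The plan is to mimic the standard construction of the RKHS from a feature map in the scalar-valued setting, tracking carefully where the operator-valued nature enters. Write $\cW_\Phi:=\overline{\span}\{\Phi(x)\xi:\,x\in X,\,\xi\in\Lambda\}$ and let $P_\Phi$ be the orthogonal projection of $\cW$ onto $\cW_\Phi$. Define the candidate space $\cF:=\{\Phi(\cdot)^*u:\,u\in\cW\}$ of $\Lambda$-valued functions on $X$. The first step is to observe that $\Phi(x)^*u=\Phi(x)^*P_\Phi u$ for every $x\in X$ and $u\in\cW$, since $\Phi(x)\xi\in\cW_\Phi$ forces $\ran\Phi(x)\subseteq\cW_\Phi$, hence $(u-P_\Phi u)\perp\ran\Phi(x)$, i.e. $\Phi(x)^*(u-P_\Phi u)=0$. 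Consequently two representers $\Phi(\cdot)^*u$ and $\Phi(\cdot)^*v$ coincide as functions on $X$ if and only if $P_\Phi u=P_\Phi v$: indeed $\Phi(x)^*u=\Phi(x)^*v$ for all $x$ means $P_\Phi u-P_\Phi v\perp\Phi(x)\xi$ for all $x,\xi$, so $P_\Phi(u-v)\perp\cW_\Phi$, whence $P_\Phi u=P_\Phi v$. This shows the proposed inner product $(\Phi(\cdot)^*u,\Phi(\cdot)^*v)_{\cF}:=(P_\Phi u,P_\Phi v)_\cW$ is well defined on $\cF$; positive-definiteness is immediate because $\Phi(\cdot)^*u=0$ forces $P_\Phi u=0$.

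The second step is to verify that $(\cF,(\cdot,\cdot)_\cF)$ is complete, hence a Hilbert space. The map $u\mapsto\Phi(\cdot)^*u$ descends to a map from $\cW_\Phi$ (identified with $\cW/\ker P_\Phi$) onto $\cF$, and by the previous paragraph this induced map is an isometric bijection from $\cW_\Phi$ onto $\cF$; since $\cW_\Phi$ is closed in $\cW$ it is complete, so $\cF$ is complete. The third step checks the two defining RKHS properties. For $K(x,\cdot)\xi$: using (\ref{featureK}), $K(x,\cdot)\xi=\Phi(\cdot)^*\Phi(x)\xi$, which is of the form $\Phi(\cdot)^*u$ with $u=\Phi(x)\xi\in\cW_\Phi$, so $K(x,\cdot)\xi\in\cF$. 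For the reproducing property, take $f=\Phi(\cdot)^*u\in\cF$, $x\in X$, $\xi\in\Lambda$, and compute
$$
(f,K(x,\cdot)\xi)_{\cF}=(P_\Phi u,P_\Phi\Phi(x)\xi)_\cW=(P_\Phi u,\Phi(x)\xi)_\cW=(u,\Phi(x)\xi)_\cW=(\Phi(x)^*u,\xi)_\Lambda=(f(x),\xi)_\Lambda,
$$
where the third equality uses $\Phi(x)\xi\in\cW_\Phi$ and self-adjointness of $P_\Phi$. Finally, by the uniqueness of the RKHS associated to a given operator-valued reproducing kernel (stated in the preliminaries), $\cF$ with this inner product must equal $\cH_K$, which gives (\ref{rkhsfeatureeq1}) and the asserted formula for the inner product.

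The only genuinely delicate point is the well-definedness argument, i.e. that equality of the $\Lambda$-valued functions $\Phi(\cdot)^*u$ and $\Phi(\cdot)^*v$ is equivalent to $P_\Phi u=P_\Phi v$; everything hinges on the identity $\ran\Phi(x)\subseteq\cW_\Phi$ together with the density that defines $\cW_\Phi$, and on keeping straight that the pairing producing function values is $(\Phi(x)^*u,\xi)_\Lambda=(u,\Phi(x)\xi)_\cW$ rather than an $\Lambda$-valued bracket. Once that is secured the remaining steps are the routine transcription of the scalar-valued proof, with $P_\Phi$ playing the role of the projection onto the closed span of the features and with no new obstruction arising from $\dim\Lambda=\infty$.
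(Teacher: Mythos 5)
Your proof is correct and complete; the paper itself states Lemma \ref{rkhsfeature} without proof (deferring to the scalar-valued argument in \cite{XZ}), and your argument is exactly the standard one being invoked: well-definedness via $\Phi(x)^*u=\Phi(x)^*P_\Phi u$, completeness via the isometry with the closed subspace $\cW_\Phi$, verification of (\ref{inclusion}) and (\ref{reproducing}), and then uniqueness of the RKHS of a given kernel.
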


The second characterization can be proved using Lemma \ref{rkhsfeature} and the same arguments with those for the scalar-valued case \cite{XZ}.

\begin{theorem}\label{secondcharacterization}
Suppose that $\cL(\Lambda)$-valued reproducing kernels $K$ and $G$
are given by the feature maps
 $\Phi:X\to \cL(\Lambda,\cW)$ and $\Phi':X\to
\cL(\Lambda,\cW')$, respectively. Assume that $\cW_\Phi = \cW$ and
$\cW'_{\Phi'} = \cW'$. Then $\cH_K\preceq\cH_G$ if and only if there exists
a bounded linear operator $T$ from $\cW'$ to $\cW$ such that
\begin{equation}\label{Iso}
T\Phi'(x) =\Phi(x)\mbox{ for all }x\in X,
\end{equation}
and the adjoint operator $T^*:\cW\to\cW'$ is isometric. In this case,
$G$ is a nontrivial refinement of $K$ if and only if $T$ is not
injective.
\end{theorem}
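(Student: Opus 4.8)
The plan is to use Lemma \ref{rkhsfeature} to realize $\cH_K$ and $\cH_G$ as isometric copies of $\cW$ and $\cW'$, and then to show that the embedding $\cH_K\preceq\cH_G$ is exactly encoded by an intertwining operator between the feature spaces. Since $\cW_\Phi=\cW$, the projection $P_\Phi$ in Lemma \ref{rkhsfeature} is the identity, so the map $J:\cW\to\cH_K$ defined by $Ju:=\Phi(\cdot)^*u$ is a \emph{surjective isometry}: surjectivity and the norm identity are given by the lemma, and injectivity follows since $\Phi(\cdot)^*u\equiv 0$ forces $u\perp\cW_\Phi=\cW$. Similarly $J':\cW'\to\cH_G$, $J'v:=\Phi'(\cdot)^*v$, is a surjective isometry. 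All the work then reduces to transporting the relation between $\cH_K$ and $\cH_G$ through $J$ and $J'$.

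For the sufficiency direction I would suppose a bounded $T:\cW'\to\cW$ exists with $T\Phi'(x)=\Phi(x)$ for all $x$ and $T^*$ isometric. Taking adjoints in $T\Phi'(x)=\Phi(x)$ gives $\Phi(x)^*=\Phi'(x)^*T^*$. Hence for $u\in\cW$ and $f:=Ju\in\cH_K$ one gets $f(x)=\Phi(x)^*u=\Phi'(x)^*(T^*u)$ for every $x\in X$, i.e. $f=J'(T^*u)\in\cH_G$; thus $\cH_K\subseteq\cH_G$, and $\|f\|_{\cH_G}=\|T^*u\|_{\cW'}=\|u\|_\cW=\|f\|_{\cH_K}$ because $J,J'$ are isometries and $T^*$ is isometric. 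This is precisely $\cH_K\preceq\cH_G$.

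For the necessity direction I would assume $\cH_K\preceq\cH_G$, so the inclusion map $\iota:\cH_K\hookrightarrow\cH_G$ is a well-defined isometry. Put $S:=(J')^{-1}\iota J:\cW\to\cW'$, an isometry as a composition of isometries, and set $T:=S^*$, so that $T$ is bounded and $T^*=S$ is isometric. Evaluating the identity $\iota Ju=J'Su$ at $x\in X$ yields $\Phi(x)^*u=\Phi'(x)^*(Su)$ for all $u\in\cW$, hence $\Phi(x)^*=\Phi'(x)^*S=\Phi'(x)^*T^*$; taking adjoints gives $T\Phi'(x)=\Phi(x)$, as required. (Incidentally $T$ is unique, since $T\Phi'(x)\xi=\Phi(x)\xi$ together with continuity determines $T$ on $\span\{\Phi'(x)\xi:x\in X,\ \xi\in\Lambda\}$, which is dense in $\cW'_{\Phi'}=\cW'$.)

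Finally, for the nontriviality statement: since the kernel-to-RKHS correspondence is bijective and $\cH_K\preceq\cH_G$, one has $G=K$ iff $\cH_K=\cH_G$ iff $\iota$ is onto iff $S=T^*$ is onto; as $T^*$ is an isometry its range is closed, so it is onto iff $(\ran T^*)^\perp=\{0\}$, and $(\ran T^*)^\perp=\ker T$. Therefore $G\neq K$ iff $\ker T\neq\{0\}$, i.e. iff $T$ is not injective. I expect the only point requiring genuine care to be the observation that the hypotheses $\cW_\Phi=\cW$ and $\cW'_{\Phi'}=\cW'$ are what upgrade $J$ and $J'$ from partial isometries to isometric isomorphisms — without them $T^*$ need not be isometric on all of $\cW$ — with the remainder being bookkeeping of adjoints; this is exactly where the argument runs, almost verbatim, parallel to the scalar-valued proof in \cite{XZ}.
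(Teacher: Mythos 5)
Your proof is correct and follows exactly the route the paper prescribes: it uses Lemma \ref{rkhsfeature} (with the hypotheses $\cW_\Phi=\cW$, $\cW'_{\Phi'}=\cW'$ making $P_\Phi$ and $P_{\Phi'}$ the identity, so that $u\mapsto\Phi(\cdot)^*u$ and $v\mapsto\Phi'(\cdot)^*v$ are isometric isomorphisms) together with the standard intertwining-operator argument from the scalar-valued case in \cite{XZ}, which is precisely what the paper's one-line proof defers to. All the steps check out, including the closed-range/kernel argument $(\ran T^*)^\perp=\ker T$ for the nontriviality claim.
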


To illustrate the above useful results, we shall present a concrete example aiming at refining $\cL(\Lambda)$-valued reproducing kernels $K$
with a finite-dimensional RKHS. A simple observation is
made regarding such a kernel.

\begin{prop}
A $\Lambda$-valued RKHS $\cH_K$ is of finite dimension $n\in \bN$ if and
only if there exists an $n\times n$ hermitian and strictly
positive-definite matrix $A$ and $n$ linearly independent functions
$\phi_j:X\to\Lambda$, $j\in \bN_n$ such that
\begin{equation}\label{FiniKer}
K(x,y)\xi = \sum_{j=1}^n\sum_{k=1}^n
A_{jk}(\xi,\phi_j(x))_\Lambda\phi_k(y),\ \ x,y\in X,\ \xi\in
\Lambda.
\end{equation}
\end{prop}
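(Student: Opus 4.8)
The plan is to prove the two implications separately: for ``if'' I would use the feature-map description of reproducing kernels together with Lemma~\ref{rkhsfeature}, and for ``only if'' I would expand $K$ directly in a basis of $\cH_K$.

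For sufficiency, suppose $K$ is given by (\ref{FiniKer}) with $A$ hermitian and strictly positive-definite and $\phi_1,\dots,\phi_n$ linearly independent $\Lambda$-valued functions on $X$. I would exhibit a feature map into $\cW:=\bC^n$ equipped with the inner product $(u,v)_\cW:=\sum_{j,k}A_{jk}u_j\overline{v_k}$, which is a genuine Hilbert-space inner product precisely because $A$ is hermitian and strictly positive-definite. Setting $(\Phi(x)\xi)_j:=(\xi,\phi_j(x))_\Lambda$, a one-line computation gives $\Phi(y)^*u=\sum_{j,k}A_{jk}u_j\phi_k(y)$, hence $\Phi(y)^*\Phi(x)=K(x,y)$; in particular $K$ is an $\cL(\Lambda)$-valued reproducing kernel. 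Linear independence of the $\phi_j$ forces $\cW_\Phi=\cW$ (if $v\perp_\cW\Phi(x)\xi$ for all $x\in X$, $\xi\in\Lambda$, then $\sum_j\overline{w_j}\phi_j\equiv 0$ with $w:=A\overline v$, whence $w=0$ and $v=0$), though this is not needed for the dimension count. By Lemma~\ref{rkhsfeature}, $\cH_K=\{\Phi(\cdot)^*u:u\in\cW\}$, and since $A$ is invertible the map $u\mapsto\bigl(\sum_j A_{jk}u_j\bigr)_k$ is a bijection of $\bC^n$, so this set equals $\span\{\phi_k:k\in\bN_n\}$, which has dimension exactly $n$.

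For necessity, assume $\dim\cH_K=n$ and fix a basis $\phi_1,\dots,\phi_n$ of $\cH_K$; these are in particular $n$ linearly independent $\Lambda$-valued functions on $X$. By (\ref{inclusion}), $K(x,\cdot)\xi\in\cH_K$, so I would write $K(x,\cdot)\xi=\sum_k b_k(x,\xi)\phi_k$ with $b_k(x,\xi)$ linear in $\xi$. Pairing against an arbitrary $f=\sum_m a_m\phi_m\in\cH_K$ and invoking the reproducing property (\ref{reproducing}) gives, for all $(a_m)\in\bC^n$, the identity $\sum_m a_m(\phi_m(x),\xi)_\Lambda=\sum_{m,k}a_m\overline{b_k(x,\xi)}M_{mk}$, where $M_{mk}:=(\phi_m,\phi_k)_{\cH_K}$ is the (hermitian, strictly positive-definite) Gram matrix of the basis. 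Hence $(\phi_m(x),\xi)_\Lambda=\sum_k M_{mk}\overline{b_k(x,\xi)}$; solving this linear system yields $b_k(x,\xi)=\sum_m(M^{-1})_{mk}(\xi,\phi_m(x))_\Lambda$, and substituting back and evaluating at $y$ produces precisely (\ref{FiniKer}) with $A:=M^{-1}$, which is again hermitian and strictly positive-definite. (As a consistency check, running the sufficiency construction on this $K$ recovers the Gram relation $M=A^{-1}$, so the two directions are mutually inverse.)

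The only delicate bookkeeping is the placement of complex conjugates and of the indices of $A$ (respectively $M$) relative to the conjugate-linear slot of $(\cdot,\cdot)_\Lambda$ and $(\cdot,\cdot)_{\cH_K}$: these must be arranged so that hermiticity of $A$ emerges from the symmetry $K(x,y)=K(y,x)^*$ (respectively from hermiticity of the Gram matrix) rather than being an extra hypothesis. The remaining steps — the identity $\Phi(y)^*\Phi(x)=K(x,y)$ and the unique solvability of the linear system for the $b_k$ — are routine.
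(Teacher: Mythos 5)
Your proof is correct, and the sufficiency direction is essentially the paper's argument verbatim: the feature space $\bC^n$ with the $A$-weighted inner product, the feature map $\Phi(x)\xi=((\xi,\phi_j(x))_\Lambda:j\in\bN_n)$, the identity $\Phi(y)^*\Phi(x)=K(x,y)$, and Lemma \ref{rkhsfeature} to conclude $\cH_K=\span\{\phi_j:j\in\bN_n\}$. In the necessity direction both you and the paper expand $K(x,\cdot)\xi$ in a basis of $\cH_K$ and use the reproducing property to pin down the coefficients, but the mechanics differ in a way worth noting. The paper works with an orthonormal basis, first proves that the coefficient functionals $\xi\mapsto c_j(\xi,x)$ are bounded (via equivalence of norms on the finite-dimensional $\cH_K$, estimate (\ref{est1})), invokes the Riesz representation theorem to write $c_j(\xi,x)=(\xi,\psi_j(x))_\Lambda$, and then uses orthonormality to identify $\psi_j=\phi_j$, landing on the special case $A=I$ of (\ref{FiniKer}). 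You instead keep an arbitrary basis and test the reproducing identity against a general $f=\sum_m a_m\phi_m$, which produces a finite linear system governed by the Gram matrix $M$; inverting $M$ hands you $b_k(x,\xi)=\sum_m(M^{-1})_{mk}(\xi,\phi_m(x))_\Lambda$ explicitly, so the boundedness estimate and the appeal to Riesz are rendered unnecessary (the solution of the system is already of Riesz form), and you obtain the general shape of (\ref{FiniKer}) with $A=M^{-1}$ rather than the normalized case. Both routes are sound; the paper's shows one can always normalize $A$ to the identity by choosing the basis orthonormal, while yours makes transparent that $A$ is precisely the inverse Gram matrix of whatever basis is chosen, which is a slightly more informative and marginally more economical way to close that direction.
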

\begin{proof}
Assume that $\cH_K$ is $n$ dimensional with orthogonal basis
$\{\phi_j:j\in \bN_n\}$. As $K(x,\cdot)\xi\in \cH_K$ for all
$x\in X$, $\xi\in\Lambda$, there exist functions
$c_j:X\times\Lambda\to\mathbb C$ such that
$$
K(x,y)\xi =
\sum_{j=1}^nc_j(\xi,x)\phi_j(y),\ \ x,\,y\in X,\ \xi\in \Lambda.
$$
Since $\{\phi_j:j\in \bN_n\}$ is a basis for $\cH_K$, each
function $f\in \cH_K$ has the form
$$
f=\sum_{j=1}^nd_j\phi_j,\,\,d_j\in{\mathbb C},\,\,j\in \bN_n.
$$
Clearly, $\|f\|:=\big(\sum_{j=1}^n|d_j|^2\big)^{1/2}$ is a norm on
$\cH_K$. It is equivalent to the original one on $\cH_K$ as
$\dim\cH_K<\infty$. It is implied that there exists some $C>0$
such that
\begin{equation}\label{est1}
\sum_{j=1}^n|c_j(\xi,x)|^2\le C\|K(x,\cdot)\xi\|_{\cH_K}^2 =
C(K(x,x)\xi,\xi)_\Lambda\le C\|\xi\|_\Lambda^2\|K(x,x)\|.
\end{equation}
Obviously, for each $x\in X$ and $j\in\bN_n$, $c_j(\cdot,x)$
is a linear functional on $\Lambda$. This together with (\ref{est1})
implies that $c_j(\cdot,x)$ are bounded linear functionals on
$\Lambda$. By the Riesz representation theorem, there exists
$\psi_j: X\to\Lambda,\,j\in \bN_n$ such that
$$
c_j(\xi,x) = (\xi,\psi_j(x))_\Lambda.
$$
We conclude that $K$ has the form
\begin{equation}\label{est2}
K(x,y)\xi = \sum_{j=1}^n(\xi,\psi_j(x))_\Lambda\phi_j(y),\ \ x,\,y\in X,\ \xi\in \Lambda.
\end{equation}
Since $\{\phi_j:j\in \bN_n\}$ is an orthogonal basis for $\cH_K$, by
(\ref{reproducing}),
$$
(\xi,\psi_j(x))_\Lambda = (K(x,\cdot)\xi,\phi_j)_{\cH_K} =
(\xi,\phi_j(x))_\Lambda,\,\,\xi\in\Lambda,\,x\in X.
$$
It follows that $\psi_j = \phi_j,\,\,j\in \bN_n$.
Substituting this into (\ref{est2}) yields that
$$
K(x,y)\xi = \sum_{j=1}^n(\xi,\phi_j(x))_\Lambda\phi_j(y),\ \ x,\,y\in X,\ \xi\in \Lambda,
$$
which indeed is a special form of (\ref{FiniKer}).

Conversely, assume that $K$ has the form (\ref{FiniKer}). We set
$\cW_A:=I_A^2(\bN_n):=\{c=(c_j:j\in\bN_n)\in{\mathbb
C}^n\}$ with inner product
$$
(c,d)_{I_A^2(\bN_n)}:= \sum_{j=1}^n\sum_{k=1}^nc_j\bar{d}_kA_{jk}.
$$
Introduce $\Phi:X\to \cL(\Lambda,\cW_A)$ by setting
$\Phi(x)\xi:=\big((\xi,\phi_j(x))_\Lambda:j\in\bN_n\big)$.
Direct computations show that
$$
\Phi^*(x)u = \sum_{j=1}^n\sum_{k=1}^n\phi_j(x)u_kA_{jk},\,\,u=(u_j:j\in\bN_n)\in\cW_A.
$$
Thus, we see that $K(x,y) = \Phi(y)^*\Phi(x)$, $x,\,y\in X$,
implying that $K$ is an $\cL(\Lambda)$-valued reproducing kernel. By
the linear independence of $\phi_j,\,j\in \bN_n$,
$\mbox{span}\{\Phi(x)\xi:x\in X,\,\xi\in \Lambda\} = \cW_A$. We
hence apply Lemma \ref{rkhsfeature} to get that
$$
\cH_K = \{\Phi(\cdot)^*u:u\in\cW_A\}=\mbox{span}\{\phi_j:j\in\bN_n\},
$$
which is of dimension $n$.
\end{proof}

By the above proposition, we let $\phi_j$, $j\in\bN_m$ be linearly
independent functions from $X$ to $\Lambda$, where $m\ge n$ are
fixed positive integers. Let $A$ and $B$ be $n\times n$ and $m\times
m$ hermitian and strictly positive-definite matrices, respectively.
We define $K$ by (\ref{FiniKer}) in terms of matrix $A$ and $G$ by
\begin{equation}\label{FormG}
G(x,y)\xi:=\sum_{j=1}^m\sum_{k=1}^mB_{jk}(\xi,\phi_j(x))_\Lambda\phi_k(y),\,\,x,\,y\in
X
\end{equation}
and shall investigate conditions for $G$ to be a refinement of $K$.

\begin{prop}
Let $K$, $G$ be defined by (\ref{FiniKer}) and (\ref{FormG}),
respectively. Then $\cH_K\preceq \cH_G$ if and only if $B^{-1}$ is
an augmentation of $A^{-1}$, namely, $B^{-1}_{jk} = A^{-1}_{jk}$,
$j,\,k\in \bN_n$. In particular, if $K$, $G$ have the form
$$
K(x,y)\xi = \sum_{j\in \bN_n}a_j(\xi,\phi_j(x))_\Lambda\phi_j(y),\,\,
G(x,y)\xi = \sum_{k\in \bN_m}b_k(\xi,\phi_k(x))_\Lambda\phi_k(y)
$$
for some positive constants $a_j$, $b_k$, then
$\cH_K\preceq \cH_G$ if and only if $a_j = b_j,\,j\in\bN_n$.
In both cases if $\cH_K\preceq \cH_G$ then $G$ is a nontrivial
refinement of $K$ if and only if $m>n$.
\end{prop}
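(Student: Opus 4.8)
The plan is to reduce the statement to an application of Theorem~\ref{secondcharacterization} by exhibiting natural feature maps for $K$ and $G$. Following the construction in the proof of the preceding proposition, let $\cW_A := I_A^2(\bN_n)$ with inner product $(c,d)_{I_A^2(\bN_n)} := \sum_{j,k\in\bN_n} c_j \bar d_k A^{-1}_{jk}$ (note: it is $A^{-1}$, not $A$, that should appear so that the feature-map recovery of $K$ comes out right; I would double-check the normalization from the earlier proof and adjust accordingly), and define $\Phi:X\to\cL(\Lambda,\cW_A)$ by $\Phi(x)\xi := \big((\xi,\phi_j(x))_\Lambda : j\in\bN_n\big)$. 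Similarly set $\cW_B := I_B^2(\bN_m)$ and $\Phi':X\to\cL(\Lambda,\cW_B)$ by $\Phi'(x)\xi := \big((\xi,\phi_j(x))_\Lambda : j\in\bN_m\big)$. By the linear independence of the $\phi_j$, $j\in\bN_m$ (hence also of the first $n$ of them), we get $\cW_\Phi = \cW_A$ and $\cW'_{\Phi'} = \cW_B$, so the hypotheses of Theorem~\ref{secondcharacterization} are met, and $K$, $G$ are indeed the kernels with these feature maps.

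Next I would identify the operator $T:\cW_B\to\cW_A$ demanded by Theorem~\ref{secondcharacterization}. Since $\Phi'(x)\xi$ is the vector of the first $m$ "coordinates" $(\xi,\phi_j(x))_\Lambda$ and $\Phi(x)\xi$ is the vector of the first $n$ of them, the only candidate compatible with $T\Phi'(x)=\Phi(x)$ for all $x$ (using again linear independence of the $\phi_j$, which forces the values $\Phi'(x)\xi$ to span $\cW_B$) is the truncation/restriction map $T c := (c_j : j\in\bN_n)$ for $c=(c_j:j\in\bN_m)\in\cW_B$. Its adjoint $T^*:\cW_A\to\cW_B$ can be computed from the two weighted inner products; $T^*$ sends $u\in\cW_A$ to some vector in $\cW_B$ supported on the first $n$ coordinates in an appropriate sense, and the requirement that $T^*$ be isometric translates, after writing out $(T^* u, T^* v)_{\cW_B} = (u,v)_{\cW_A}$ for all $u,v$, into the matrix identity $B^{-1}_{jk} = A^{-1}_{jk}$ for $j,k\in\bN_n$ — i.e.\ $B^{-1}$ is an augmentation of $A^{-1}$. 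This equivalence is the technical heart of the argument, and the main thing to get right is bookkeeping with the two non-standard inner products and the adjoint; it is routine linear algebra but easy to slip a transpose or an inverse.

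For the particular diagonal case, $A = \mathrm{diag}(a_1,\dots,a_n)$ and $B = \mathrm{diag}(b_1,\dots,b_m)$, so $A^{-1} = \mathrm{diag}(a_1^{-1},\dots)$ and $B^{-1} = \mathrm{diag}(b_1^{-1},\dots)$; the augmentation condition becomes $b_j^{-1} = a_j^{-1}$, i.e.\ $a_j = b_j$ for $j\in\bN_n$, as claimed. Finally, for the nontriviality clause, Theorem~\ref{secondcharacterization} says $G$ is a nontrivial refinement iff $T$ is not injective; since $T$ is the coordinate truncation from $\cW_B\cong\bC^m$ onto $\cW_A\cong\bC^n$, it fails to be injective precisely when $m>n$. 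I expect no real obstacle beyond the careful verification of the $T^*$-isometry computation and confirming the sign/inverse conventions on $\cW_A$, $\cW_B$ are consistent with the feature-map construction used earlier.
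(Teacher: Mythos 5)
Your strategy is exactly the paper's: realize $K$ and $G$ via the feature maps $\Phi(x)\xi=((\xi,\phi_j(x))_\Lambda:j\in\bN_n)$ and $\Phi'(x)\xi=((\xi,\phi_k(x))_\Lambda:k\in\bN_m)$ into weighted $\ell^2$ spaces, invoke Theorem \ref{secondcharacterization}, use linear independence of the $\phi_j$ to force $T$ to be the coordinate truncation $D=[I_n,0]$, and read the augmentation condition off the isometry of $T^*$; the diagonal case and the nontriviality clause are then handled exactly as in the paper. However, your parenthetical ``correction'' of the inner product is backwards, and it matters. The weight must be $A$, not $A^{-1}$: writing $(c,d)_{\cW_A}=\sum_{j,k}c_j\bar d_k W_{jk}$, the identity $(\Phi(x)\xi,\Phi(y)\eta)_{\cW_A}=(K(x,y)\xi,\eta)_\Lambda$ forces $W=A$, since the left side equals $\sum_{j,k}(\xi,\phi_j(x))_\Lambda\overline{(\eta,\phi_k(y))_\Lambda}\,W_{jk}$ and the right side carries the coefficients $A_{jk}$. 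With the correct weights, the paper's computation gives $T^*u=B^{-1}\left[\begin{smallmatrix}A\\0\end{smallmatrix}\right]u$ and the isometry condition $[A,0]B^{-1}\left[\begin{smallmatrix}A\\0\end{smallmatrix}\right]=A$, i.e.\ the upper-left $n\times n$ block of $B^{-1}$ equals $A^{-1}$. With your proposed weights $A^{-1}$, $B^{-1}$, the map $\Phi$ would not be a feature map of $K$ at all, and running the same computation formally would yield $B_{jk}=A_{jk}$ on $\bN_n\times\bN_n$ --- a genuinely different (and incorrect) condition, not equivalent to $B^{-1}_{jk}=A^{-1}_{jk}$ unless $B$ is block-diagonal.

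This is more than a sign convention to be ``double-checked'': the step you defer as routine linear algebra is precisely where the content of the proposition lives, and the answer you assert does not follow from the setup you wrote down. To close the gap you need to (i) fix the weights to $A$ and $B$, (ii) actually compute $T^*$ from $(Tc,u)_{\cW_A}=(c,T^*u)_{\cW_B}$, which gives $T^*u=B^{-1}\left[\begin{smallmatrix}A\\0\end{smallmatrix}\right]u$, and (iii) expand $(T^*u,T^*v)_{\cW_B}=(u,v)_{\cW_A}$ to obtain the augmentation identity. Everything else in your outline (the uniqueness of $T$ from density of $\{\Phi'(x)\xi\}$, the diagonal specialization, and non-injectivity of the truncation iff $m>n$) is correct and matches the paper.
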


\begin{proof}
It suffices to prove the first claim. We observe that $K$, $G$ have
the feature spaces $\cW = I_A^2(\bN_n)$ and $\cW' =
I_B^2(\bN_m)$, respectively, with feature maps
\[\Phi(x)\xi:=\big((\xi,\phi_j(x))_\Lambda:j\in\bN_n\big),\,\,
\Phi'(x)\xi:=\big((\xi,\phi_k(x))_\Lambda:k\in\bN_m\big),\,\,x\in X,\,\xi\in\Lambda.\] Suppose that
$\cH_K\preceq\cH_G$, then by Theorem \ref{secondcharacterization}, there exists a
bounded linear operator $T:\cW'\to\cW$ with properties as described
there. It can be represented by an $n\times m$ matrix $D$ as
\begin{equation}
(T\Phi'(x)\xi)_j = \sum_{k=1}^mD_{jk}(\xi,\phi_k(x))_\Lambda =
(\xi,\phi_j(x))_\Lambda,\ \ x\in X,\xi\in \Lambda,
\end{equation}
which implies that $D = [I_n,0]$, where $I_n$ denotes the $n\times
n$ identity matrix. The adjoint operator $T^*$ of $T$ is then
represented by
\[T^*u = B^{-1}\left[\begin{array}{l}A\\0\end{array}\right]u,\,\,u\in {\mathbb C}^n.\]
Since $T^*$ is isometric, we get that
\[(T^*u,T^*v)_{\cW'} = (u,v)_\cW,\]
which has the form
\[v^*[A,0]B^{-1}BB^{-1}\left[\begin{array}{l}A\\0\end{array}\right]u = v^*Au,\,\,u,\,v\in {\mathbb C}^n.\]
We derive from the above equation that
\[[A,0]B^{-1}\left[\begin{array}{l}A\\0\end{array}\right] = A.\]
Therefore, $B^{-1}$ is an augmentation of $A^{-1}$. Conversely, if
this is true then $T:\cW'\to\cW$ defined by
$$
Tu': = [I_n, 0]u',\,u'\in {\mathbb C}^m
$$
satisfies the two properties in Theorem \ref{secondcharacterization}. As a result,
$\cH_K\preceq \cH_G$.
\end{proof}

It is worthwhile to point out that the above characterization is independent of the Hilbert space $\Lambda$.

Unlike the previous two characterizations, the third one comes as a surprise, telling us that theoretically we are able to reduce our consideration to the scalar-valued case.

Introduce for each $\cL(\Lambda)$-valued reproducing kernel $K$ on $X$ a scalar-valued reproducing kernel $\tilde{K}$ on the {\it extended input space} $\tilde{X}:=X\times\Lambda$ by setting
$$
\tilde{K}((x,\xi),(y,\eta)):=(K(x,y)\xi,\eta)_\Lambda,\ \ x,y\in X,\ \xi,\eta\in\Lambda.
$$
By (\ref{positivity}), $\tilde{K}$ is indeed positive-definite.

\begin{prop}\label{thirdcharacterization}
There holds $\cH_K\preceq\cH_G$ if and only if $\cH_{\tilde{K}}\preceq\cH_{\tilde{G}}$. Furthermore, $G$ is a nontrivial refinement of $K$ on $X$ if and only if $\tG$ is a nontrivial refinement of $\tK$ on $\tilde{X}$.
\end{prop}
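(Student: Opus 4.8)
The plan is to establish a tight dictionary between the $\Lambda$-valued RKHS $\cH_K$ and the scalar-valued RKHS $\cH_{\tK}$ on $\tX = X\times\Lambda$, and then transfer the refinement relation through this dictionary. The natural map to use is $\iota\colon\cH_K\to\cH_{\tK}$ defined by $(\iota f)(x,\xi) := (f(x),\xi)_\Lambda$. First I would check that $\iota f$ really lies in $\cH_{\tK}$ and that $\iota$ is isometric onto $\cH_{\tK}$. The key computation is on the generating elements: $\iota(K(x,\cdot)\xi)$ sends $(y,\eta)\mapsto (K(x,y)\xi,\eta)_\Lambda = \tK((x,\xi),(y,\eta)) = \tK((y,\eta),(x,\xi))$, which is precisely the reproducing-kernel section $\tK((x,\xi),\cdot)$ of $\cH_{\tK}$. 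Using the reproducing identity (\ref{reproducing}) in $\cH_K$ and the reproducing identity in $\cH_{\tK}$, one gets
$$
(\iota(K(x,\cdot)\xi),\iota(K(y,\cdot)\eta))_{\cH_{\tK}} = \tK((y,\eta),(x,\xi)) = (K(x,y)\xi,\eta)_\Lambda = (K(x,\cdot)\xi,K(y,\cdot)\eta)_{\cH_K},
$$
so $\iota$ preserves inner products on the dense span of kernel sections; since these spans are dense in the respective spaces (by the defining properties of each RKHS), $\iota$ extends to a surjective isometry $\cH_K\to\cH_{\tK}$. I would also note $\iota$ is injective: if $(f(x),\xi)_\Lambda=0$ for all $x,\xi$ then $f=0$.

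Second, I would observe that this correspondence is \emph{natural} in the kernel: the very same formula $(\iota f)(x,\xi)=(f(x),\xi)_\Lambda$ simultaneously realizes $\cH_K\cong\cH_{\tK}$ and $\cH_G\cong\cH_{\tG}$, because $\tX$ and the defining formula for $\iota$ do not depend on which kernel we start from. This is the crux of the argument and is what makes the statement work. Concretely: suppose $\cH_K\preceq\cH_G$, i.e.\ $\cH_K\subseteq\cH_G$ as sets of $\Lambda$-valued functions with matching norms. If $g\in\cH_{\tK}$, write $g=\iota f$ with $f\in\cH_K\subseteq\cH_G$; then $g=\iota f\in\cH_{\tG}$ as well, and $\|g\|_{\cH_{\tK}}=\|f\|_{\cH_K}=\|f\|_{\cH_G}=\|g\|_{\cH_{\tG}}$. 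Hence $\cH_{\tK}\preceq\cH_{\tG}$. For the converse, suppose $\cH_{\tK}\preceq\cH_{\tG}$ and take $f\in\cH_K$; then $\iota f\in\cH_{\tK}\subseteq\cH_{\tG}$, so $\iota f=\iota h$ for some $h\in\cH_G$ (using surjectivity of $\iota$ onto $\cH_{\tG}$), whence $f=h\in\cH_G$ by injectivity of $\iota$, and the norms match by the two isometries. This gives $\cH_K\preceq\cH_G$.

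Third, for the "nontrivial" clause I would use the one-to-one correspondence between operator-valued reproducing kernels and their RKHS (recalled in Section 2) together with the analogous correspondence in the scalar-valued case. One direction: if $G\ne K$ then, by this bijectivity, $\cH_G\ne\cH_K$ as normed spaces of functions — but with $\cH_K\preceq\cH_G$ this means the inclusion is strict, so there is $f\in\cH_G\setminus\cH_K$; then $\iota f\in\cH_{\tG}\setminus\cH_{\tK}$, forcing $\tG\ne\tK$. Conversely, if $\tG\ne\tK$ then $\tG((x,\xi),(y,\eta))\ne\tK((x,\xi),(y,\eta))$ for some $x,y,\xi,\eta$, i.e.\ $((G(x,y)-K(x,y))\xi,\eta)_\Lambda\ne0$, so $G(x,y)\ne K(x,y)$ and $G\ne K$. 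The main obstacle I anticipate is purely bookkeeping: verifying carefully that the span of kernel sections $\{\tK((x,\xi),\cdot)\}$ is dense in $\cH_{\tK}$ (standard) and that $\iota$ is genuinely well-defined into $\cH_{\tK}$ rather than merely into the algebraic span — this is handled by the isometry-on-a-dense-set argument above and by continuity of point evaluations. Nothing here is deeper than the scalar-valued case; the content is recognizing that the extended-input-space construction intertwines the two RKHS structures functorially.
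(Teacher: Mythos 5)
Your proposal is correct and follows essentially the same route as the paper: both arguments rest on the isometric identification $f\mapsto (f(\cdot),\cdot)_\Lambda$ of $\cH_K$ with $\cH_{\tK}$ (and of $\cH_G$ with $\cH_{\tG}$), through which the refinement relation transfers immediately. The only cosmetic difference is that the paper obtains this identification by exhibiting $\Phi((x,\xi)):=K(x,\cdot)\xi$ as a feature map and invoking Lemma~\ref{rkhsfeature}, whereas you verify the isometry directly on the dense span of kernel sections; these are the same computation.
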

\begin{proof}
We first explore the close relationship between $\cH_K$ and $\cH_{\tilde{K}}$. By (\ref{reproducing}),
$$
\tilde{K}((x,\xi),(y,\eta))=(K(x,y)\xi,\eta)_\Lambda=(K(x,\cdot)\xi,K(y,\cdot)\eta)_{\cH_K},
$$
which provides a natural feature map $\Phi:\tilde{X}\to\cH_K$ of $\tilde{K}$
$$
\Phi((x,\xi)):=K(x,\cdot)\xi,\ \ x\in X,\ \xi\in\Lambda.
$$
The density condition $\cW_\Phi=\cH_K$ is clearly satisfied by (\ref{reproducing}). We hence obtain by (\ref{rkhsfeature}) that every function $\tilde{f}$ in $\cH_K$ is of the form
$$
\tilde{f}(x,\xi):=(f(x),\xi)_\Lambda\mbox{ for some }f\in\cH_K
$$
with
$$
\|\tilde{f}\|_{\cH_{\tilde{K}}}=\|f\|_{\cH_K}.
$$
Similar observations can be made about $\cH_{\tilde{G}}$.

It follows immediately that $\cH_{\tilde{K}}\preceq\cH_{\tilde{G}}$ if $\cH_K\preceq\cH_G$. On the other hand, suppose that $\cH_{\tilde{K}}\preceq\cH_{\tilde{G}}$. Then for each $f\in\cH_K$ there exists some $g\in\cH_G$ such that
\begin{equation}\label{thirdcharacterizationeq1}
(f(x),\xi)_\Lambda=\tilde{f}(x,\xi)=\tilde{g}(x,\xi)=(g(x),\xi)_\Lambda\mbox{ for all }x\in X,\ \xi\in\Lambda
\end{equation}
and
$$
\|f\|_{\cH_K}=\|\tilde{f}\|_{\cH_{\tilde{K}}}=\|\tilde{g}\|_{\cH_{\tG}}=\|g\|_{\cH_G}.
$$
Equation (\ref{thirdcharacterizationeq1}) implies that $f=g$. Therefore, $\cH_K\preceq\cH_G$.
\end{proof}

It appears by Proposition \ref{thirdcharacterization} that we do not have to bother studying refinement of operator-valued reproducing kernels. Although the strategy sometimes does simplify the problem, the difficulty is generally not reduced significantly. Instead, the result might be viewed as transferring the complexity to the input space. Moreover, desirable properties such as translation invariance of the original kernels might be lost in the process. As a result, an independent study of the operator-valued case remains necessary and challenging.

\section{Integral Representations}
\setcounter{equation}{0}

This section will be built on the theory of vector-valued measures and integrals \cite{Berberian,DU}. Necessary preliminaries on the subjects will be explained in sufficient details.

\subsection{Operator-valued kernels with respect to scalar-valued measures.} Let us first introduce integration of a vector-valued function with respect to a scalar-valued measure. Let $\cF$ be a $\sigma$-algebra of subsets of a fixed set $\Omega$, $\mu$ a finite nonnegative measure on $\cF$, and $\cB$ a Banach space. We are concerned with $\cB$-valued functions on $\Omega$. A function $f:\Omega\to\cB$ is said to be {\it simple} if
\begin{equation}\label{simplefunction}
f=\sum_{j=1}^n a_j \chi_{E_j}
\end{equation}
for some finitely many $a_j\in \cB$ and pairwise disjoint subsets $E_j\in \cF$, $j\in\bN_n$. A function $f:\Omega\to \cB$ is called $\mu$-{\it measurable} if there exists a sequence of $\cB$-valued simple functions $f_n$ on $\Omega$ such that
$$
\lim_{n\to\infty}\|f_n(t)-f(t)\|_{\cB}=0\mbox{ for }\mu-\ae t\in\Omega,
$$
where $\mu-\ae$stands for ``everywhere except for a set of zero $\mu$ measure". Finally, a $\cB$-valued function $f$ on $\Omega$ is called {\it $\mu$-Bochner integrable} if there exists a sequence of simple functions $f_n:\Omega\to\cB$ such that
\begin{equation}\label{bochnerintergable}
\lim_{n\to\infty}\int_\Omega\|f_n(t)-f(t)\|_{\cB}\,d\mu(t)=0.
\end{equation}
The integral of a simple function $f$ of the form (\ref{simplefunction}) on any $E\in\cF$ with respect to $\mu$ is defined by
$$
\int_E fd\mu:=\sum_{j=1}^n a_j\,\mu(E_j\cap E).
$$
In general, suppose that $f$ is a $\mu$-Bochner integrable function from $\Omega$ to $\cB$, that is, (\ref{bochnerintergable}) holds true. Then it is obvious that for each $E\in\cF$, $\int_E f_nd\mu$, $n\in\bN$ form a Cauchy sequence in $\cB$. Therefore,
$$
\int_E fd\mu:=\lim_{n\to\infty}\int_E f_nd\mu.
$$
The resulting integral $\int_E fd\mu$ is an element in $\cB$.

It is known that a $\mu$-measurable function $f:\Omega\to\cB$ is Bochner integrable if and only if
$$
\int_\Omega \|f(t)\|_\cB d\mu(t)<+\infty.
$$
This provides a way for us to comprehend the integral $\int_E fd\mu$ in the most needed case when $f$ is $\cL(\Lambda)$-valued. If $\cB=\cL(\Lambda)$ then we have for each $E\in\cF$ that
\begin{equation}\label{sesquilinear}
\left(\int_E fd\mu\, \xi,\eta\right)_\Lambda=\int_E (f(t)\xi,\eta)_\Lambda d\mu(t),\ \ \xi,\eta\in\Lambda.
\end{equation}
Clearly, the right hand side above defines a sesquilinear form on $\Lambda\times\Lambda$ which is bounded as
$$
\left|\int_E (f(t)\xi,\eta)_\Lambda d\mu(t)\right|\le \int_E \|f(t)\|_{\cL(\Lambda)}d\mu(t)\ \|\xi\|_\Lambda\|\eta\|_\Lambda,
$$
where $\|\cdot\|_{\cL(\Lambda)}$ is the operator norm on $\cL(\Lambda)$. As a result, (\ref{sesquilinear}) gives an equivalent way of defining the integral $\int_E fd\mu$ as a bounded linear operator on $\Lambda$, \cite{Conway}.

We introduce another notation before returning to reproducing kernels. Denote by $L^2(\Omega,\cB,d\mu)$ the Banach space of all the $\mu$-measurable functions $f:\Omega\to\cB$ such that
$$
\|f\|_{L^2(\Omega,\cB,d\mu)}:=\left(\int_\Omega \|f(t)\|_\cB^2d\mu(t)\right)^{1/2}<+\infty.
$$
When $\cB=\bC$, $L^2(\Omega,\bC,d\mu)$ will be abbreviated as $L^2(\Omega,d\mu)$. When $\cB$ is a Hilbert space, $L^2(\Omega,\cB,d\mu)$ is also a Hilbert space with the inner product
$$
(f,g)_{L^2(\Omega,\cB,d\mu)}:=\int_\Omega (f(t),g(t))_\cB d\mu(t),\ \ f,g\in L^2(\Omega,\cB,d\mu).
$$
The discussion in this section by far can be found in \cite{DU}.

Let $\mu,\nu$ be two finite nonnegative measures on a $\sigma$-algebra $\cF$ of subsets of $\Omega$. To introduce our $\cL(\Lambda)$-valued reproducing kernels, we also let $\cW$ be a Hilbert space and $\phi$ a mapping from $X\times \Omega$ to $\cL(\Lambda,\cW)$ such that for each $x\in X$, $\phi(x,\cdot)$ belongs to both $L^2(\Omega,\cL(\Lambda,\cW),d\mu)$ and $L^2(\Omega,\cL(\Lambda,\cW),d\nu)$. We shall investigate conditions that ensure $\cH_K\preceq\cH_G$ where
\begin{equation}\label{kernelkintegral1}
K(x,y)=\int_\Omega \phi(y,t)^*\phi(x,t)d\mu(t),\ \ x,y\in X
\end{equation}
and
\begin{equation}\label{kernelgintegral1}
G(x,y)=\int_\Omega \phi(y,t)^*\phi(x,t)d\nu(t),\ \ x,y\in X,
\end{equation}
where $\phi(y,t)^*$ is the adjoint operator of $\phi(y,t)$. Note that $K,G$ are well-defined as the integrand is Bochner integrable with respect to both $\mu$ and $\nu$. For instance, we observe by the Cauchy-Schwartz inequality for all $x,y\in X$ that
$$
\begin{array}{rl}
\displaystyle{\int_\Omega \|\phi(y,t)^*\phi(x,t)\|_{\cL(\Lambda)}d\mu(t)}&\displaystyle{\le \int_\Omega \|\phi(y,t)^*\|_{\cL(\cW,\Lambda)}\|\phi(x,t)\|_{\cL(\Lambda,\cW)}d\mu(t)}\\
&\displaystyle{=\int_\Omega \|\phi(y,t)\|_{\cL(\Lambda,\cW)}\|\phi(x,t)\|_{\cL(\Lambda,\cW)}d\mu(t)}\\
&\displaystyle{\le \|\phi(y,\cdot)\|_{L^2(\Omega,\cL(\Lambda,\cW),d\mu)}\|\phi(x,\cdot)\|_{L^2(\Omega,\cL(\Lambda,\cW),d\mu)}}.
\end{array}
$$
An alternative of expressing $K,G$ is for all $x,y\in X$, $\xi,\eta\in\Lambda$ that
$$
\tK((x,\xi),(y,\eta))=(K(x,y)\xi,\eta)_\Lambda=\int_\Omega (\phi(x,t)\xi,\phi(y,t)\eta)_\cW d\mu(t)
$$
and
$$
\tG((x,\xi),(y,\eta))=(G(x,y)\xi,\eta)_\Lambda=\int_\Omega (\phi(x,t)\xi,\phi(y,t)\eta)_\cW d\nu(t).
$$

When $\Lambda=\cW=\bC$, a characterization of $\cH_K\preceq\cH_G$ in terms of $\mu,\nu$ has been established in \cite{XZ2}. The relation, between the two measures, which we shall need is absolute continuity. We say that $\mu$ is {\it absolutely continuous} with respect to $\nu$ if for all $E\in\cF$, $\nu(E)=0$ implies $\mu(E)=0$. In this case, by the Radon-Nikodym theorem (see, \cite{Rudin}, page 121) for scalar-valued measures, there exists a nonnegative $\nu$-integrable function, denoted by $d\mu/d\nu$, such that
$$
\mu(E)=\int_E \frac{d\mu}{d\nu}(t)d\nu(t)\mbox{ for all }E\in\cF.
$$
We write $\mu\preceq\nu$ if $\mu$ is absolutely continuous with respect to $\nu$ and $d\mu/d\nu\in \{0,1\}$ $\nu-\ae$

When $\Lambda=\cW=\bC$, it was proved in Theorem 8 of \cite{XZ2} that if $\span\{\phi(x,\cdot):x\in X\}$ is dense in both $L^2(\Omega,d\mu)$ and $L^2(\Omega,d\nu)$ then $G$ is a refinement of $K$ if and only if $\mu\preceq\nu$. If $\mu\preceq\nu$ then $G$ is a nontrivial refinement of $K$ if and only if $\nu(\Omega)>\mu(\Omega)$.

\begin{theorem}\label{chracterizationintergal1}
Let $K,G$ be given by (\ref{kernelkintegral1}) and (\ref{kernelgintegral1}). If $\span\{\phi(x,\cdot)\xi:x\in X,\ \xi\in \Lambda\}$ is dense in both $L^2(\Omega,\cW,d\mu)$ and $L^2(\Omega,\cW,d\nu)$ then $\cH_K\preceq\cH_G$ if and only if $\mu\preceq\nu$. In this case, the refinement $G$ of $K$ is nontrivial if and only if $\nu(\Omega)-\mu(\Omega)>0$.
\end{theorem}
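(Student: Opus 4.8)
The plan is to reduce the statement to the feature–map characterization of Theorem \ref{secondcharacterization}. Since $K$ and $G$ are built from the same map $\phi$, the natural feature maps are $\Phi:X\to\cL(\Lambda,\cW_\mu)$ and $\Phi':X\to\cL(\Lambda,\cW_\nu)$ with $\Phi(x)\xi=\Phi'(x)\xi:=\phi(x,\cdot)\xi$, where $\cW_\mu:=L^2(\Omega,\cW,d\mu)$ and $\cW_\nu:=L^2(\Omega,\cW,d\nu)$; indeed $(\Phi(y)^*\Phi(x)\xi,\eta)_\Lambda=\int_\Omega(\phi(x,t)\xi,\phi(y,t)\eta)_\cW\,d\mu(t)=(K(x,y)\xi,\eta)_\Lambda$, and likewise for $G$. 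The density hypothesis is precisely the requirement that the closed span of $\{\Phi(x)\xi\}$ be all of $\cW_\mu$ and that of $\{\Phi'(x)\xi\}$ be all of $\cW_\nu$, so Theorem \ref{secondcharacterization} applies: $\cH_K\preceq\cH_G$ if and only if there is a bounded operator $T:\cW_\nu\to\cW_\mu$ with $T\Phi'(x)=\Phi(x)$ for all $x\in X$ and with $T^*$ isometric, and the refinement is nontrivial exactly when $T$ is not injective.

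For sufficiency, assume $\mu\preceq\nu$ and put $E:=\{t:(d\mu/d\nu)(t)=1\}$, so that $\mu(\cdot)=\nu(\cdot\cap E)$. Since $\mu\ll\nu$, the assignment $Tf:=f$ is a well–defined contraction $\cW_\nu\to\cW_\mu$ satisfying $T\Phi'(x)=\Phi(x)$, and a direct computation gives $T^*g=\chi_E g$, whence $\|T^*g\|_{\cW_\nu}^2=\int_E\|g\|_\cW^2\,d\nu=\int_\Omega\|g\|_\cW^2\,d\mu=\|g\|_{\cW_\mu}^2$; thus $T^*$ is isometric and Theorem \ref{secondcharacterization} yields $\cH_K\preceq\cH_G$. (Alternatively, writing $\rho:=\nu-\mu\ge0$ one has $G=K+J$ with $J(x,y):=\int_\Omega\phi(y,t)^*\phi(x,t)\,d\rho(t)$ an $\cL(\Lambda)$-valued reproducing kernel, and $\cH_K\cap\cH_J=\{0\}$ follows from the density of $\span\{\phi(x,\cdot)\xi\}$ in $\cW_\nu$, so Proposition \ref{firstcharacterization} applies.) For nontriviality, $G=K$ if and only if $J=0$, i.e. $\phi(x,\cdot)\xi$ vanishes in $L^2(\Omega\setminus E,\cW,d\nu)$ for all $x,\xi$; by density in $\cW_\nu$ this happens if and only if $\nu(\Omega\setminus E)=0$, that is, $\nu(\Omega)=\mu(\Omega)$.

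For necessity, take $T$ as above. From $T\Phi'(x)=\Phi(x)$ and linearity, $Tf=f$ on the dense subspace $S:=\span\{\phi(x,\cdot)\xi:x\in X,\xi\in\Lambda\}$; since $T^*$ is isometric, $T$ is a coisometry, so $\|T\|\le1$, and hence $\|f\|_{\cW_\mu}=\|Tf\|_{\cW_\mu}\le\|f\|_{\cW_\nu}$ for every $f\in S$. The crux is to promote this operator inequality to the measure inequality $\mu\le\nu$: approximating functions $\chi_F v$ ($F$ measurable, $v\in\cW$) by elements of $S$ and passing to the limit gives $\mu(F)\le\nu(F)$ for all $F$, hence $\mu\ll\nu$ with $r:=d\mu/d\nu\le1$ $\nu$-a.e.; this is the step where the density assumptions are genuinely used, since one needs an approximation that is simultaneously good in $\cW_\mu$ and $\cW_\nu$ (most transparently arranged by working in $L^2(\Omega,\cW,d(\mu+\nu))$). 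Once $\mu\le\nu$ is known, $T$ must coincide with the canonical contraction $f\mapsto f$ (two bounded operators agreeing on the dense set $S$), so $T^*g=rg$, and the isometry of $T^*$ then reads $\int_\Omega r^2\|g\|_\cW^2\,d\nu=\int_\Omega r\|g\|_\cW^2\,d\nu$ for every $g\in\cW_\mu$, forcing $r^2=r$, i.e. $r\in\{0,1\}$ $\nu$-a.e., that is, $\mu\preceq\nu$. The nontriviality assertion is then handled exactly as in the sufficiency part. The single genuinely delicate point, and the one I expect to be the main obstacle, is this passage from the operator inequality on $S$ to $\mu\le\nu$; the rest is a routine transcription of the scalar argument of \cite{XZ2}. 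One could instead route the whole proof through Proposition \ref{thirdcharacterization}, reducing to the scalar statement of \cite{XZ2} applied to $\tilde K$ and $\tilde G$, but this replaces $\Omega$ by its product with an orthonormal basis of $\cW$ and thereby with a possibly non-finite measure when $\cW$ is infinite-dimensional, so the direct feature-map route above seems preferable.
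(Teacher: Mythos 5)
Your route is genuinely different from the paper's: the paper proves this theorem by passing to the scalar-valued kernels $\tilde{K},\tilde{G}$ on $X\times\Lambda$ via Proposition \ref{thirdcharacterization} and quoting Theorem 8 of \cite{XZ2} when $\cW=\bC$, and for general $\cW$ it simply asserts that the arguments of \cite{XZ2} carry over; you instead argue directly through the feature-map criterion of Theorem \ref{secondcharacterization} with feature spaces $\cW_\mu:=L^2(\Omega,\cW,d\mu)$ and $\cW_\nu:=L^2(\Omega,\cW,d\nu)$. Your sufficiency direction and the nontriviality statement are complete and correct (and more informative than what the paper writes down), and your closing remark about why the reduction through $\tilde K$ is awkward for infinite-dimensional $\cW$ is well taken.

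The necessity direction, however, has a genuine gap, and it sits exactly where you flag it. From $T\Phi'(x)=\Phi(x)$ and $\|T\|\le 1$ you obtain $\|f\|_{\cW_\mu}\le\|f\|_{\cW_\nu}$ only for $f$ in the span $S$, and the hypothesis gives density of $S$ in $\cW_\mu$ and in $\cW_\nu$ \emph{separately}, by possibly different approximating sequences; that is not enough to pass to $\mu(F)\le\nu(F)$. Concretely, take $\Omega=\{1,2\}$, $\mu=\delta_1$, $\nu=\delta_2$, $\cW=\Lambda=\bC$, $X$ a singleton and $\phi\equiv 1$: then $S=\span\{(1,1)\}$ is dense in both one-dimensional spaces $L^2(d\mu)$ and $L^2(d\nu)$, the norm inequality holds on $S$ with equality, yet $\mu\not\le\nu$ and indeed $\mu$ is not absolutely continuous with respect to $\nu$; worse, in this example $K=G$, so $\cH_K\preceq\cH_G$ holds while $\mu\preceq\nu$ fails, which shows that no argument can close this step from the separate density hypotheses alone. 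The same example defeats your proposed repair of working in $L^2(\Omega,\cW,d(\mu+\nu))$, since density in $L^2(d\mu)$ and $L^2(d\nu)$ does not imply density in $L^2(d(\mu+\nu))$ (there $S$ is one-dimensional while $L^2(d(\mu+\nu))$ is two-dimensional). So the passage from the operator inequality on $S$ to $\mu\ll\nu$ with $d\mu/d\nu\le 1$ is not a routine transcription of the scalar argument: it requires either a joint density assumption or whatever additional structure Theorem 8 of \cite{XZ2} actually invokes, and this is precisely the step the paper declines to reproduce and outsources to that reference. Everything downstream of it in your write-up (identifying $T$ with the canonical contraction, $T^*g=rg$, and $r^2=r$ $\nu$-a.e.) is correct once $\mu\ll\nu$ and $d\mu/d\nu\le1$ are in hand.
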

\begin{proof}
When $\cW=\bC$, as a direct consequence of Theorem 8 in \cite{XZ2}, $\cH_{\tilde{K}}\preceq\cH_{\tilde{G}}$ if and only if $\mu\preceq\nu$. The result hence follows from Proposition \ref{thirdcharacterization}. When $\cW$ is a general Hilbert space, it can be proved by arguments similar to those in \cite{XZ2}.
\end{proof}

\subsection{Scalar-valued kernels with respect to operator-valued measures.}

Again, $\cB$ is a Banach space and $\cF$ denotes a $\sigma$-algebra consisting of subsets of a fixed set $\Omega$. A $\cB$-valued measure on $\cF$ is a function from $\cF$ to $\cB$ that is countably additive in the sense that for every sequence of pairwise disjoint sets $E_j\in\cF$, $j\in\bN$
$$
\mu\biggl(\bigcup_{j=1}^\infty E_j\biggr)=\sum_{j=1}^\infty \mu(E_j),
$$
where the series converges in the norm of $\cB$. Every $\cB$-valued measure $\mu$ on $\cF$ comes with a scalar-valued measure $|\mu|$ on $\cF$ defined by
$$
|\mu|(E):=\sup_{\cP}\sum_{F\in\cP}\|\mu(F)\|_{\cB},\ \ E\in\cF,
$$
where the supremum is taken over all partitions $\cP$ of $E$ into countably many pairwise disjoint members of $\cF$. We call $|\mu|$ the {\it variation} of $\mu$ and shall only work with these vector-valued measures $\mu$ that are of {\it bounded variation}, that is, $|\mu|(\Omega)<+\infty$. We note that $\mu$ vanishes on sets of zero $|\mu|$ measure. It implies that $\mu$ is absolutely continuous with respect to $|\mu|$ in the sense that
$$
\lim_{|\mu(E)|\to0}\mu(E)=0.
$$

The only type of integration that we shall need is to integrate a bounded $\cF$-measurable scalar-valued function with respect to a $\cB$-valued measure of bounded variation. Denote by $L^\infty(\Omega,d|\mu|)$ the Banach space of essentially bounded $\cF$-measurable functions on $\Omega$ with the norm
$$
\|f\|_{L^\infty(\Omega,d|\mu|)}:=\inf\left\{a\ge0:|\mu|(\{|f|>a\})=0\right\}.
$$
For a simple function $f:\Omega\to \bC$ of the form
$$
f=\sum_{j=1}^n \alpha_j\chi_{E_j},
$$
where $\alpha_j\in\bC$ and $E_j$ are pairwise disjoint members in $\cF$, we define
$$
\int_E fd\mu:=\sum_{j=1}^n \alpha_j \mu(E_j\cap E),\ \ E\in\cF.
$$
Clearly,
$$
\left\|\int_E fd\mu\right\|_{\cB}\le \|f\|_{L^\infty(\Omega,d|\mu|)}|\mu|(E).
$$
Therefore, the map sending a simple function $f$ to $\int_E fd\mu$ can be uniquely extended to a bounded linear operator from $L^\infty(\Omega,d|\mu|)$ to $\cB$. The outcome of the application of the resulting operator on a general $f\in L^\infty(\Omega,d|\mu|)$ is still denoted by $\int_E fd\mu$. This is how the $\cB$-valued integral is defined.

It is time to present the second type of reproducing kernels defined by integration:
\begin{equation}\label{generalkernelintegral2}
K(x,y):=\int_\Omega \Psi(x,y,t)d\mu(t),\ \ x,y\in X,
\end{equation}
where $\mu$ is an $\cL_+(\Lambda)$-valued measure on $\cF$ of bounded variation, and $\Psi$ is a scalar-valued function such that $\Psi(\cdot,\cdot,t)$ is a scalar-valued reproducing kernel on $X$ for all $t\in\Omega$ and for all $x,y\in X$, $\Psi(x,y,\cdot)$ is bounded and $\cF$-measurable. We verify that (\ref{generalkernelintegral2}) indeed defines an $\cL(\Lambda)$-valued reproducing kernel.

\begin{prop}\label{generalkernelis}
With the above assumptions on $\Psi$ and $\mu$, the function $K$ defined by (\ref{generalkernelintegral2}) is an $\cL(\Lambda)$-valued reproducing kernel on $X$.
\end{prop}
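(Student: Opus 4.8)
The plan is to verify the two defining properties of an $\cL(\Lambda)$-valued reproducing kernel for the function $K$ given by (\ref{generalkernelintegral2}): the hermitian symmetry $K(x,y)=K(y,x)^*$ and the positivity condition (\ref{positivity}). First I would check that $K(x,y)$ is a well-defined bounded operator on $\Lambda$: since $\Psi(x,y,\cdot)$ is bounded and $\cF$-measurable, it lies in $L^\infty(\Omega,d|\mu|)$, so the integral $\int_\Omega \Psi(x,y,t)d\mu(t)$ is a legitimate element of $\cL(\Lambda)$ by the construction of the $\cB$-valued integral with $\cB=\cL(\Lambda)$, and $\|K(x,y)\|_{\cL(\Lambda)}\le \|\Psi(x,y,\cdot)\|_{L^\infty(\Omega,d|\mu|)}\,|\mu|(\Omega)<\infty$.

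For the symmetry, I would first establish it on simple integrands: if $\Psi(x,y,\cdot)=\sum_j \alpha_j\chi_{E_j}$ then $K(x,y)=\sum_j \alpha_j\mu(E_j)$, and since $\Psi(\cdot,\cdot,t)$ is a scalar-valued reproducing kernel we have $\Psi(y,x,t)=\overline{\Psi(x,y,t)}$, while each $\mu(E_j)\in\cL_+(\Lambda)$ is self-adjoint; hence $K(y,x)=\sum_j \overline{\alpha_j}\mu(E_j)=\big(\sum_j \alpha_j\mu(E_j)\big)^*=K(x,y)^*$. Passing to the limit along simple functions approximating $\Psi(x,y,\cdot)$ in $L^\infty(\Omega,d|\mu|)$ and using continuity of the adjoint together with the bound above gives $K(x,y)=K(y,x)^*$ in general. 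The cleanest way to organize this is via the sesquilinear-form identity $\big(K(x,y)\xi,\eta\big)_\Lambda=\int_\Omega \Psi(x,y,t)\,d(\mu\xi,\eta)_\Lambda(t)$, where $(\mu\,\cdot,\cdot)_\Lambda$ denotes the scalar measure $E\mapsto(\mu(E)\xi,\eta)_\Lambda$; this reduces all computations to scalar integrals.

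For the positivity, fix $x_1,\dots,x_n\in X$ and $\xi_1,\dots,\xi_n\in\Lambda$. Using the scalarization just mentioned,
$$
\sum_{j,k=1}^n\big(K(x_j,x_k)\xi_j,\xi_k\big)_\Lambda=\sum_{j,k=1}^n\int_\Omega \Psi(x_j,x_k,t)\,d(\mu(\cdot)\xi_j,\xi_k)_\Lambda(t).
$$
I would like to move the finite sum inside the integral and conclude nonnegativity of the integrand from the fact that $\Psi(\cdot,\cdot,t)$ is a scalar reproducing kernel for each $t$. The obstacle — and the part that needs genuine care rather than routine manipulation — is that $(\mu(\cdot)\xi_j,\xi_k)_\Lambda$ is a complex measure (not nonnegative) when $j\ne k$, so one cannot argue pointwise in $t$ directly; instead I would approximate each $\Psi(x_j,x_k,\cdot)$ uniformly by a common refinement of simple functions subordinate to a single finite measurable partition $\{F_1,\dots,F_m\}$ of $\Omega$, write the double sum as $\sum_{\ell=1}^m \sum_{j,k} \Psi(x_j,x_k,t_\ell)\,(\mu(F_\ell)\xi_j,\xi_k)_\Lambda + (\text{error})$ for sample points $t_\ell\in F_\ell$, and observe that for each fixed $\ell$ the inner double sum is $\ge 0$ because it equals $\sum_{j,k}\big(\Psi(x_j,x_k,t_\ell)\mu(F_\ell)\xi_j,\xi_k\big)_\Lambda$, which is nonnegative: the matrix $[\Psi(x_j,x_k,t_\ell)]_{j,k}$ is hermitian positive semidefinite (scalar kernel property) and $\mu(F_\ell)\in\cL_+(\Lambda)$, and the Schur/Hadamard-type fact that if $[c_{jk}]\succeq 0$ in $\bC^{n\times n}$ and $B\in\cL_+(\Lambda)$ then $\sum_{j,k}c_{jk}(B\xi_j,\xi_k)_\Lambda\ge0$ (seen by writing $[c_{jk}]=\sum_r \bar v^{(r)}_j v^{(r)}_k$ and getting $\sum_r \big(B(\sum_j v^{(r)}_j\xi_j),\sum_k v^{(r)}_k\xi_k\big)_\Lambda\ge 0$). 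Letting the partition refine so the error tends to $0$ yields (\ref{positivity}), completing the proof.
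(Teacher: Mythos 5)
Your proof is correct and follows essentially the same route as the paper's: uniform approximation of the $\Psi(x_j,x_k,\cdot)$ by simple functions over a common partition, evaluation at sample points $t_\ell$, nonnegativity of the resulting finite sum from the positive semidefiniteness of $[\Psi(x_j,x_k,t_\ell)]_{j,k}$ combined with $\mu(F_\ell)\in\cL_+(\Lambda)$ (the paper cites the Schur product theorem where you give the equivalent rank-one decomposition argument directly), and an error term controlled by the $L^\infty(\Omega,d|\mu|)$ approximation and $|\mu|(\Omega)$. Your additional verification of well-definedness and of the hermitian symmetry $K(x,y)=K(y,x)^*$ is a welcome completeness that the paper's proof omits, but it does not change the substance of the argument.
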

\begin{proof}
Fix finite $x_j\in X$ and $\xi_j\in \Lambda$, $j\in\bN_n$. For any $\varepsilon>0$, there exist simple functions
$$
f_{j,k}:=\sum_{l=1}^m \alpha_{j,k,l}\chi_{E_l},\ \ j,k\in\bN_n
$$
such that
\begin{equation}\label{generalkerneliseq1}
\|\Psi(x_j,x_k,\cdot)-f_{j,k}\|_{L^\infty(\Omega,d|\mu|)}<\varepsilon,\ \ j,k\in\bN_n.
\end{equation}
Here, $\alpha_{j,k,l}\in\bC$ and $E_l$ are pairwise disjoint sets in $\cF$ with $|\mu|(E_l)>0$, $l\in\bN_m$. By (\ref{generalkerneliseq1}) and the definition of integration in this section,
\begin{equation}\label{generalkerneliseq2}
\left|\sum_{j=1}^n\sum_{k=1}^n (K(x_j,x_k)\xi_j,\xi_k)_\Lambda-\sum_{j=1}^n\sum_{k=1}^n\left(\biggl(\int_\Omega f_{j,k}d\mu\biggr)\,\xi_j,\xi_k\right)_\Lambda\right|\le \varepsilon |\mu|(\Omega) \biggl(\sum_{j=1}^n\|\xi_j\|_\Lambda\biggr)^2.
\end{equation}
We may choose by (\ref{generalkerneliseq1}) for each $l\in\bN_m$ some $t_l\in E_l$ such that
$$
\left|\Psi(x_j,x_k,t_l)-\alpha_{j,k,l}\right|\le \varepsilon.
$$
Letting
$$
S:=\sum_{j=1}^n\sum_{k=1}^n\sum_{l=1}^m \Psi(x_j,x_k,t_l)(\mu(E_l)\xi_j,\xi_k)_\Lambda,
$$
we get by the above equation that
\begin{equation}\label{generalkerneliseq3}
\begin{array}{l}
\displaystyle{\left|\sum_{j=1}^n\sum_{k=1}^n\left(\biggl(\int_\Omega f_{j,k}d\mu\biggr)\,\xi_j,\xi_k\right)_\Lambda-S\right|}\displaystyle{\le \left|\sum_{j=1}^n\sum_{k=1}^n\sum_{l=1}^m |\alpha_{j,k,l}-\Psi(x_j,x_k,t_l)|(\mu(E_l)\xi_j,\xi_k)_\Lambda\right|}\\
\quad\quad\quad\quad\quad\quad\quad\quad\quad\quad\displaystyle{\le \varepsilon\sum_{j=1}^n\sum_{k=1}^n\sum_{l=1}^m \|\mu(E_l)\|_{\cL(\Lambda)}\|\xi_j\|_\Lambda\|\xi_k\|_\Lambda\le \varepsilon |\mu|(\Omega) \biggl(\sum_{j=1}^n\|\xi_j\|_\Lambda\biggr)^2. }
\end{array}
\end{equation}
Combining (\ref{generalkerneliseq2}) and (\ref{generalkerneliseq3}) yields that
\begin{equation}\label{generalkerneliseq4}
\left|\sum_{j=1}^n\sum_{k=1}^n (K(x_j,x_k)\xi_j,\xi_k)_\Lambda-S\right|\le 2\varepsilon |\mu|(\Omega) \biggl(\sum_{j=1}^n\|\xi_j\|_\Lambda\biggr)^2.
\end{equation}
Since $\Psi(\cdot,\cdot,t_l)$ is a scalar-valued reproducing kernel on $X$, $[\Psi(x_j,x_k,t_l):j,k\in\bN_n]$ is a positive semi-definite matrix for each $l\in\bN_m$. So are $[(\mu(E_l)\xi_j,\xi_k)_\Lambda:j,k\in\bN_n]$, $l\in\bN_m$ as $\mu(E_l)\in\cL_+(\Lambda)$. By the Schur product theorem (see, for example, \cite{HornJohnson}, page 309), the Hadamard product of two positive semi-definite matrices remains positive semi-definite. We obtain by this fact that $S>0$, which together with (\ref{generalkerneliseq4}), and the fact that $\varepsilon$ can be arbitrarily small, proves (\ref{positivity}).
\end{proof}

To investigate the refinement relationship, we shall consider a simplified version of (\ref{generalkernelintegral2}) that covers a large class of operator-valued reproducing kernels. Let $\phi:X\times \Omega\to\bC$ be such that $\phi(x,\cdot)$ is a bounded $\cF$-measurable function for every $x\in X$ and such that
\begin{equation}\label{densitycondition}
\overline{\span}\{\phi(x,\cdot):x\in X\}=L^2(\Omega,d\gamma)\mbox{ for any finite nonnegative measure }\gamma\mbox{ on }\cF.
\end{equation}
We shall see by the concrete examples in the next section that the denseness requirement (\ref{densitycondition}) is not too restricted in applications. The kernels we shall consider are
\begin{equation}\label{kernelkintegral2}
K(x,y):=\int_\Omega \phi(x,t)\overline{\phi(y,t)}d\mu(t),\ \ x,y\in X
\end{equation}
and
\begin{equation}\label{kernelgintegral2}
G(x,y):=\int_\Omega \phi(x,t)\overline{\phi(y,t)}d\nu(t),\ \ x,y\in X,
\end{equation}
where $\mu,\nu$ are two $\cL_+(\Lambda)$-valued measures on $\cF$ of bounded variation. By Proposition \ref{generalkernelis}, $K,G$ are $\cL(\Lambda)$-valued reproducing kernels on $X$. Our idea is to use the Radon-Nikodym property of vector-valued measures to study the refinement property.

Let $\cB$ be a Banach space and $\gamma$ a finite nonnegative measure on $\cF$. We say that a $\cB$-valued measure $\rho$ on $\cF$ of bounded variation has the {\it Radon-Nikodym property} with respect to $\gamma$ if there is a $\gamma$-Bochner integrable function $\Gamma:\Omega\to\cL_+(\Lambda)$ such that for all $E\in\cF$
$$
\rho(E)=\int_E \Gamma d\gamma.
$$
Apparently, this could only be true when $\rho$ is absolutely continuous with respect to $\gamma$. For this reason, we also say that the space $\cB$ has the Radon-Nikodym property with respect to $\gamma$ if every $\cB$-valued measure of bounded variation that is absolutely continuous with respect to $\gamma$ has the Radon-Nikodym property with respect to $\gamma$. Moreover, $\cB$ is said to have the Radon-Nikodym property if it has it with respect to any finite nonnegative measure on any measure space $\cF$.

Strikingly different from the scalar-valued case, a Banach space $\cB$ may not have the Radon-Nikodym property. For instance, the Banach space $c_0$ of all sequences $\alpha:=(\alpha_j\in\bC:j\in\bN)$ with
$$
\lim_{j\to\infty}|\alpha_j|=0
$$
under the norm $\|\alpha\|_{c_0}:=\sup\{|\alpha_j|:j\in\bN\}$ does not have the property with respect to the Lebesgue measure (see, \cite{DU}, page 60). Consequently, the space $\cL(\Lambda)$ does not have the Radon-Nikodym property when $\Lambda$ is infinite-dimensional. To see this, since $\Lambda$ is separable we let $\{e_j:j\in\bN\}$ be an orthonormal basis for $\Lambda$. Denote by $\cL_0(\Lambda)$ the set of all the operators $T\in \cL(\Lambda)$ such that
$$
Te_j=\alpha_j e_j,\ \ j\in\bN
$$
for some $\alpha\in c_0$. One sees that $\|T\|_{\cL(\Lambda)}=\|\alpha\|_{c_0}$, \cite{Conway}. As a result, $\cL_0(\Lambda)$ is a closed subspace of $\cL(\Lambda)$ that is isometrically isomorphic to $c_0$. Since $c_0$ does not have the Radon-Nikodym property, neither does $\cL_0(\Lambda)$. A Banach space has the Radon-Nikodym property if and only if each of its closed linear subspaces does \cite{DU}. By this fact, $\cL(\Lambda)$ does not have Radon-Nikodym property.

We shall focus on the situation where this desired property holds. For example, reflexive Banach spaces have the Radon-Nikodym property \cite{DU}. In applications, $\Lambda$ is usually finite-dimensional. In this case, $\cL(\Lambda)$ is of finite dimension as well. Any two norms on a finite-dimensional Banach space are equivalent and a finite-dimensional $\cL(\Lambda)$ can be endowed with a norm that makes it a Hilbert space. It yields that  $\cL(\Lambda)$ is reflexive. The conclusion is that when $\Lambda$ is finite-dimensional, $\cL(\Lambda)$ does have the Radon-Nikodym property. Another way of overcoming the difficulty is to confine to a subclass of $\cL(\Lambda)$, for example, to the Schatten class \cite{BS}. Denote for each compact operator $T\in\cL(\Lambda)$ by $s_j(T)$, $j\in\bN$, the nonnegative square root of the $j$-th largest eigenvalue of $T^*T$. It is called the $j$-th {\it singular number} of $T$. For $p\in(1,+\infty)$, the $p$-th Schatten class $\cS_p(\Lambda)$ consists of all the compact linear operators $T\in\cL(\Lambda)$ with the norm
$$
\|T\|_{\cS_p(\Lambda)}:=\biggl(\sum_{j=1}^\infty (s_j(T))^p\biggr)^{1/p}<+\infty.
$$
The $p$-th Schatten class $S_p(\Lambda)$ is a reflexive Banach space and hence has the Radon-Nikodym property. When $p=2$, $S_2(\Lambda)$ is the class of Hilbert-Schmidt operators and
$$
\|T\|_{\cS_2(\Lambda)}=\biggl(\sum_{j=1}^\infty \|Te_j\|_{\Lambda}\biggr)^{1/2}.
$$
We shall not go into further details about the Radon-Nikodym property. Interested readers are referred to Chapter III of \cite{DU} and the references therein.

The assumption we shall need is that there exists a finite nonnegative measure $\gamma$ on $\cF$ such that both $\mu$ and $\nu$ have the Radon-Nikodym property with respect to $\gamma$. In other words, there exist $\gamma$-Bochner integrable functions $\Gamma_\mu,\Gamma_\nu:\Omega\to\cL_+(\Lambda)$ such that
\begin{equation}\label{radonnikodym}
\mu(E)=\int_E \Gamma_\mu d\gamma\quad\mbox{ and }\quad\nu(E)=\int_E \Gamma_\nu d\gamma\quad\mbox{ for all }E\in\cF.
\end{equation}
Such two functions exist if $\gamma:=|\mu|+|\nu|$ and $\mu,\nu$ take values in the $p$-th Schatten class of $\cL(\Lambda)$, $1<p<+\infty$.

Suppose that $K,G$ are given by (\ref{kernelkintegral2}) and (\ref{kernelgintegral2}), where $\phi,\mu,\nu$ satisfy (\ref{densitycondition}) and (\ref{radonnikodym}). Our purpose is to investigate $\cH_K\preceq\cH_G$. To this end, let us first identify $\cH_{\tilde{K}}$ and $\cH_{\tG}$. We shall only present results for $\cH_{\tK}$ as those for $\cH_{\tG}$ have a similar form.

\begin{lemma}\label{chracterizationintergal2lemma1}
The RKHS $\cH_{\tK}$ consists of functions $F_f$ of the form
$$
F_f(x,\xi):=\int_\Omega (\Gamma_\mu(t) f(t),\xi)_\Lambda\overline{\phi(x,t)}d\gamma(t),\ \ x\in X,\ \xi\in\Lambda,
$$
where $f$ can be an arbitrary element from the Hilbert space $\cW_\mu$ of $\gamma$-measurable functions from $\Omega$ to $\Lambda$ such that
$$
\|f\|_{\cW_\mu}:=\left(\int_\Omega (\Gamma_\mu (t) f(t),f(t))_\Lambda d\gamma(t)\right)^{1/2}<+\infty.
$$
Moreover, $\|F_f\|_{\cH_{\tK}}=\|f\|_{\cW_\mu}$ for all $f\in \cW_\mu$.
\end{lemma}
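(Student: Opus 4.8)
The plan is to realize the scalar-valued kernel $\tK$ through a feature map whose feature space is $\cW_\mu$ itself, and then read off $\cH_{\tK}$ from Lemma \ref{rkhsfeature}. For $x\in X$ and $\xi\in\Lambda$ set
$$
\Phi((x,\xi)):=\phi(x,\cdot)\,\xi,
$$
the $\Lambda$-valued function $t\mapsto\phi(x,t)\xi$ on $\Omega$. It lies in $\cW_\mu$: since $\phi(x,\cdot)$ is bounded and $\cF$-measurable and $\int_\Omega(\Gamma_\mu(t)\xi,\xi)_\Lambda\,d\gamma(t)=(\mu(\Omega)\xi,\xi)_\Lambda<+\infty$, we have $\int_\Omega|\phi(x,t)|^2(\Gamma_\mu(t)\xi,\xi)_\Lambda\,d\gamma(t)<+\infty$. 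The first thing to check is that $\Phi$ is a feature map of $\tK$, that is,
$$
(\Phi((x,\xi)),\Phi((y,\eta)))_{\cW_\mu}=\tK((x,\xi),(y,\eta)),\qquad x,y\in X,\ \xi,\eta\in\Lambda.
$$
I would obtain this from the scalar identity $\bigl(\int_\Omega\psi\,d\mu\,\xi,\eta\bigr)_\Lambda=\int_\Omega\psi(t)(\Gamma_\mu(t)\xi,\eta)_\Lambda\,d\gamma(t)$, valid for bounded $\cF$-measurable $\psi:\Omega\to\bC$ (verify it on simple functions and extend by continuity in $L^\infty(\Omega,d|\mu|)$, using the Radon--Nikodym representation $\mu(E)=\int_E\Gamma_\mu\,d\gamma$), applied with $\psi(t)=\phi(x,t)\overline{\phi(y,t)}$, together with the pointwise identity $\phi(x,t)\overline{\phi(y,t)}(\Gamma_\mu(t)\xi,\eta)_\Lambda=(\Gamma_\mu(t)\phi(x,t)\xi,\phi(y,t)\eta)_\Lambda$.

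Granting this, Lemma \ref{rkhsfeature} (applied with output space $\bC$, feature space $\cW_\mu$, and feature map $\Phi$) gives $\cH_{\tK}=\{\Phi(\cdot)^*u:u\in\cW_\mu\}$ with $\|\Phi(\cdot)^*u\|_{\cH_{\tK}}=\|P_\Phi u\|_{\cW_\mu}$, where $P_\Phi$ is the orthogonal projection of $\cW_\mu$ onto the closed linear span of $\{\Phi((x,\xi)):x\in X,\ \xi\in\Lambda\}$. A direct computation then identifies $\Phi(\cdot)^*u$ with $F_u$:
$$
(\Phi(\cdot)^*u)(x,\xi)=(u,\Phi((x,\xi)))_{\cW_\mu}=\int_\Omega(\Gamma_\mu(t)u(t),\xi)_\Lambda\,\overline{\phi(x,t)}\,d\gamma(t)=F_u(x,\xi).
$$
Hence $\cH_{\tK}=\{F_f:f\in\cW_\mu\}$, and the lemma will follow once we show that the closed linear span of $\{\Phi((x,\xi)):x\in X,\ \xi\in\Lambda\}$ is all of $\cW_\mu$, for then $P_\Phi$ is the identity and $\|F_f\|_{\cH_{\tK}}=\|f\|_{\cW_\mu}$.

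This density statement is the crux, and the step I expect to be the main obstacle, because of an $L^1$-versus-$L^2$ mismatch. Suppose $f\in\cW_\mu$ is orthogonal in $\cW_\mu$ to every $\Phi((x,\xi))$; then $\int_\Omega\overline{\phi(x,t)}(\Gamma_\mu(t)f(t),\xi)_\Lambda\,d\gamma(t)=0$ for all $x\in X$ and $\xi\in\Lambda$. Fix $\xi$ and set $h(t):=(\Gamma_\mu(t)f(t),\xi)_\Lambda$. Writing $\Gamma_\mu=\Gamma_\mu^{1/2}\Gamma_\mu^{1/2}$ and combining the Cauchy--Schwarz inequality with the $\gamma$-integrability of $\|\Gamma_\mu(\cdot)\|_{\cL(\Lambda)}$ shows $h\in L^1(\Omega,d\gamma)$ --- but \emph{not}, in general, $h\in L^2(\Omega,d\gamma)$, so hypothesis (\ref{densitycondition}) cannot be used against $L^2(\Omega,d\gamma)$ directly. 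The remedy is to use (\ref{densitycondition}) for the weighted finite nonnegative measure $d\rho:=|h|\,d\gamma$: there $\span\{\overline{\phi(x,\cdot)}:x\in X\}$ is dense in $L^2(\Omega,d\rho)$, so we may choose $\psi_n$ in this span with $\psi_n\to\overline h/|h|$ in $L^2(\Omega,d\rho)$ (the target set to $0$ where $h=0$). Since $\psi_n h\,d\gamma=\psi_n(h/|h|)\,d\rho$ and $g\mapsto\int_\Omega g\,(h/|h|)\,d\rho$ is a bounded functional on $L^2(\Omega,d\rho)$, we obtain $0=\int_\Omega\psi_n h\,d\gamma\to\int_\Omega(\overline h/|h|)(h/|h|)\,d\rho=\rho(\Omega)=\|h\|_{L^1(\Omega,d\gamma)}$, so $h=0$ $\gamma$-a.e. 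Letting $\xi$ run over a countable dense subset of $\Lambda$ forces $\Gamma_\mu(t)f(t)=0$ for $\gamma$-a.e. $t$, i.e. $f=0$ in $\cW_\mu$. This proves the density and hence the lemma; the corresponding description of $\cH_{\tG}$ is obtained by the same argument with $\nu,\Gamma_\nu,\cW_\nu$ in place of $\mu,\Gamma_\mu,\cW_\mu$.
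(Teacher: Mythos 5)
Your proposal is correct and follows the same route as the paper: the same feature map $\Phi_\mu(x,\xi)=\phi(x,\cdot)\xi$ into $\cW_\mu$, the same identification of $\tK$ through the inner product of $\cW_\mu$, and the same appeal to Lemma \ref{rkhsfeature} once density of $\span\{\Phi_\mu(x,\xi)\}$ in $\cW_\mu$ is established. The one place where you go beyond the paper is the density step: the paper deduces $(\Gamma_\mu(t)f(t),\xi)_\Lambda=0$ $\gamma$-a.e.\ directly ``by (\ref{densitycondition})'', even though $h(t):=(\Gamma_\mu(t)f(t),\xi)_\Lambda$ is only guaranteed to lie in $L^1(\Omega,d\gamma)$ and not in $L^2(\Omega,d\gamma)$, so orthogonality to a dense subspace of $L^2(\Omega,d\gamma)$ cannot be invoked verbatim. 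Your workaround --- applying (\ref{densitycondition}) to the finite measure $|h|\,d\gamma$ and testing against an approximation of $\overline h/|h|$ --- is exactly the justification the paper's one-line claim needs, and it is the reason the hypothesis is stated for \emph{every} finite nonnegative measure rather than for $\gamma$ alone; this is a legitimate and worthwhile tightening rather than a different proof.
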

\begin{proof}
We observe for all $x,y\in X$ and $\xi,\eta\in\Lambda$ that
$$
\tilde{K}((x,\xi),(y,\eta))=\int_\Omega\phi(x,t)\overline{\phi(y,t)}(\Gamma_\mu(t) \xi,\eta)_\Lambda d\gamma(t).
$$
Thus, we may choose $\cW_\mu$ as a feature space for $\tK$. The associated feature map $\Phi_\mu:X\times\Lambda\to\cW_\mu$ is then selected as
$$
\Phi_\mu(x,\xi)(t):=\phi(x,t)\xi,\ \ t\in\Omega.
$$
We next verify the denseness condition that $\overline{\span}\{\Phi_\mu(x,\xi):\ x\in X,\ \xi\in\Lambda\}=\cW_\mu$. Suppose that $f\in\cW_\mu$ is orthogonal to $\Phi_\mu(x,\xi)$ for all $x\in X$ and $\xi\in\Lambda$, that is,
$$
\int_\Omega (\Gamma_\mu f(t),\xi)_\Lambda\overline{\phi(x,t)}d\gamma(t)=0\mbox{ for all }x\in X,\ \xi\in\Lambda.
$$
By (\ref{densitycondition}),
$$
(\Gamma_\mu(t)f(t),\xi)_\Lambda=0\ \gamma-\ae
$$
As this holds for an arbitrary $\xi\in\Lambda$, $\Gamma_\mu(t)f(t)=0$ $\gamma-\ae$ It implies that $\|f\|_{\cW_\mu}=0$. The result now follows immediately from Lemma \ref{rkhsfeature}.
\end{proof}

For two operators $A,B\in \cL_+(\Lambda)$, we write $A\preceq B$ if for all $\xi\in\Lambda$ there exists some $\eta\in \Lambda$ such that
$$
A\xi=B\eta\mbox{ and }(A\xi,\xi)_\Lambda=(B\eta,\eta)_\Lambda.
$$
We make a simple observation about this special relationship between two linear operators.

Let $\ker(A)$ and $\ran(A)$ be the kernel and range of $A$, respectively. If $\ran(A)$ is closed then as $A$ is self-adjoint, there holds the direct sum decomposition
\begin{equation}\label{directsum}
\Lambda=\ker(A)\oplus \ran(A).
\end{equation}
Thus, $A$ is bijective and bounded from $\ran(A)$ to $\ran(A)$. By the open mapping theorem, it has a bounded inverse on $\ran(A)$, which we denote by $A^{-1}$.

\begin{prop}\label{chracterizationintergal2prop}
Suppose that $A,B\in \cL_+(\Lambda)$ have closed range. Then $A\preceq B$ if and only if
\begin{equation}\label{chracterizationintergal2propeq1}
\ran(A)\subseteq \ran(B)
\end{equation}
and
\begin{equation}\label{chracterizationintergal2propeq2}
P_{B,A}B^{-1}=A^{-1}\mbox{ on }\ran(A),
\end{equation}
where $P_{B,A}$ denotes the orthogonal projection from $\ran(B)$ to $\ran(A)$. Particularly, if $A$ is onto then $A\preceq B$ if and only if $A=B$.
\end{prop}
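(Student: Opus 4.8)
The plan is to reduce the relation $A\preceq B$ to a pair of concrete operator conditions, and then to a quadratic-form identity on $\ran(A)$ that complex polarization forces to be exactly (\ref{chracterizationintergal2propeq2}). Throughout I would use the bounded inverses $A^{-1}$ on $\ran(A)$ and $B^{-1}$ on $\ran(B)$ supplied, as in the paragraph preceding the statement, by closedness of the ranges together with self-adjointness of $A,B$ (positive operators on a complex Hilbert space are self-adjoint) and the open mapping theorem; the orthogonal decomposition (\ref{directsum}) and its analogue for $B$ will be used repeatedly.

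First I would establish the reformulation: $A\preceq B$ holds if and only if $\ran(A)\subseteq\ran(B)$ and $(A\xi,\xi)_\Lambda=(A\xi,B^{-1}A\xi)_\Lambda$ for every $\xi\in\Lambda$. For the forward implication, a witness $\eta$ associated with $\xi$ satisfies $A\xi=B\eta\in\ran(B)$, which gives (\ref{chracterizationintergal2propeq1}); writing $\eta=\eta_0+k$ with $\eta_0\in\ran(B)$ and $k\in\ker(B)$, self-adjointness of $B$ annihilates the cross term, so $(B\eta,\eta)_\Lambda=(B\eta_0,\eta_0)_\Lambda=(A\xi,B^{-1}A\xi)_\Lambda$, the last equality because $\eta_0=B^{-1}(A\xi)$ is forced. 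For the converse, $\eta:=B^{-1}(A\xi)$ is an admissible witness. The main obstacle is precisely this point: the witness $\eta$ is determined only modulo $\ker(B)$, so one must check that $(B\eta,\eta)_\Lambda$ is independent of the choice of $\eta$, which is exactly what the orthogonal decomposition and self-adjointness provide.

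Next, putting $\zeta:=A\xi$, the correspondence $\xi\mapsto\zeta$ is onto $\ran(A)$, and by the same kernel-ambiguity argument applied to $A$ one has $(A\xi,\xi)_\Lambda=(\zeta,A^{-1}\zeta)_\Lambda$ regardless of the preimage chosen. Hence the second condition in the reformulation becomes $(\zeta,A^{-1}\zeta)_\Lambda=(\zeta,B^{-1}\zeta)_\Lambda$ for all $\zeta\in\ran(A)$. Since $\zeta\in\ran(A)$ while $B^{-1}\zeta-P_{B,A}B^{-1}\zeta$ is orthogonal to $\ran(A)$, I may replace $B^{-1}\zeta$ by $P_{B,A}B^{-1}\zeta$ on the right, so the condition reads
\[
\bigl(\zeta,(A^{-1}-P_{B,A}B^{-1})\zeta\bigr)_\Lambda=0\qquad\text{for all }\zeta\in\ran(A).
\]
The operator $A^{-1}-P_{B,A}B^{-1}$ maps $\ran(A)$ into itself, and over $\bC$ a bounded operator $C$ on a Hilbert space with $(C\zeta,\zeta)=0$ for all $\zeta$ must vanish (polarization identity); therefore the display is equivalent to (\ref{chracterizationintergal2propeq2}), and together with (\ref{chracterizationintergal2propeq1}) this proves the equivalence.

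For the last assertion, if $A$ is onto then $\ran(A)=\Lambda$, hence $\ker(A)=\{0\}$ by (\ref{directsum}) and $A$ is invertible; if moreover $A\preceq B$, then $\Lambda=\ran(A)\subseteq\ran(B)$ makes $B$ invertible and $P_{B,A}$ the identity, so (\ref{chracterizationintergal2propeq2}) collapses to $B^{-1}=A^{-1}$, i.e.\ $A=B$. The reverse implication is immediate by taking $\eta=\xi$ in the definition of $\preceq$.
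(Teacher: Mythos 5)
Your proof is correct and follows essentially the same route as the paper's: both reduce to the canonical witness $\eta=B^{-1}A\xi$ and then polarize the resulting quadratic-form identity over $\bC$ (you package the polarization as the standard fact that a bounded operator on a complex Hilbert space with vanishing quadratic form must be zero, while the paper carries the polarization out by hand to get $\xi-\eta_\xi\in\ker(A)$). Your explicit check that $(B\eta,\eta)_\Lambda$ does not depend on the choice of witness modulo $\ker(B)$ fills in a step the paper leaves implicit.
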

\begin{proof}
Let $A,B$ have closed range. Suppose first that $A\preceq B$. Then (\ref{chracterizationintergal2propeq1}) clearly holds true. Set for each $\xi\in\ran(A)$
$$
\eta_\xi:=B^{-1}A\xi.
$$
Clearly, the mapping $\xi\to\eta_\xi$ is linear from $\ran(A)$ to $\ran(B)$. Thus, we have for arbitrary $\xi,\xi'\in\Lambda$ that
$$
(A\xi'+A\xi,\xi'+\xi)_\Lambda=(B\eta_{\xi'+\xi},\eta_{\xi'+\xi})_\Lambda=(B\eta_{\xi'}+B\eta_\xi,\eta_{\xi'}+\eta_\xi)_\Lambda,
$$
which implies that
$$
\Re(A\xi',\xi)_\Lambda=\Re(B\eta_{\xi'},\eta_\xi)_\Lambda.
$$
A textbook trick yields that for all $\xi,\xi'\in \ran(A)$,
$$
(A\xi',\xi)_\Lambda=(B\eta_{\xi'},\eta_\xi)_\Lambda=(A\xi',\eta_{\xi})_\Lambda.
$$
We hence obtain that $\xi-\eta_{\xi}\in\ker(A)$ for all $\xi\in\ran(A)$. Consequently,
$$
A\xi-AB^{-1}A\xi=A\xi-A\eta_{\xi}=0\mbox{ for all }\xi\in\ran(A),
$$
from which (\ref{chracterizationintergal2propeq2}) follows.

On the other hand, suppose that (\ref{chracterizationintergal2propeq1}) and (\ref{chracterizationintergal2propeq2}) hold true. Then we choose for each $\xi\in \Lambda$
$$
\eta:=B^{-1}A\xi
$$
and verify that $B\eta=A\xi$ and
$$
(B\eta,\eta)_\Lambda=(A\xi,B^{-1}A\xi)_\Lambda=(A\xi,P_{B,A}B^{-1}A\xi)_\Lambda=(A\xi,A^{-1}A\xi)_\Lambda=(A\xi,\xi)_\Lambda.
$$

Finally, if $A$ is onto then by (\ref{chracterizationintergal2propeq1}), $\ran(A)=\ran(B)=\Lambda$. According to (\ref{directsum}), both $A$ and $B$ are injective. Therefore, they possess a bounded inverse on $\Lambda$. It implies that $P_{B,A}$ is the identity operator on $\Lambda$. By equation (\ref{chracterizationintergal2propeq2}), $A=B$. The proof is complete.
\end{proof}

We are ready to present the main result of this section.

\begin{theorem}\label{chracterizationintergal2}
Let $K,G$ be given by (\ref{kernelkintegral2}) and (\ref{kernelgintegral2}), where $\phi,\mu,\nu$ satisfy (\ref{densitycondition}) and (\ref{radonnikodym}). Then $\cH_K\preceq\cH_G$ if and only if $\Gamma_\mu\preceq \Gamma_\nu$ $\gamma-\ae$
\end{theorem}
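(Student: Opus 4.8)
The plan is to combine the reduction to the scalar-valued case (Proposition \ref{thirdcharacterization}) with the explicit description of the spaces $\cH_{\tK}$ and $\cH_{\tG}$ from Lemma \ref{chracterizationintergal2lemma1} and the pointwise characterization of $\preceq$ on $\cL_+(\Lambda)$ given by Proposition \ref{chracterizationintergal2prop}. By Proposition \ref{thirdcharacterization}, $\cH_K\preceq\cH_G$ holds if and only if $\cH_{\tK}\preceq\cH_{\tG}$, so throughout I work with the scalar-valued kernels $\tK,\tG$ on $\tilde{X}=X\times\Lambda$. By Lemma \ref{chracterizationintergal2lemma1}, $\cH_{\tK}=\{F_f^\mu:f\in\cW_\mu\}$ with $\|F_f^\mu\|_{\cH_{\tK}}=\|f\|_{\cW_\mu}$, and analogously $\cH_{\tG}=\{F_g^\nu:g\in\cW_\nu\}$ with $\|F_g^\nu\|_{\cH_{\tG}}=\|g\|_{\cW_\nu}$, where $F_f^\mu(x,\xi)=\int_\Omega(\Gamma_\mu(t)f(t),\xi)_\Lambda\overline{\phi(x,t)}\,d\gamma(t)$ and $\cW_\mu$ carries the (semi-)norm induced by $\Gamma_\mu$; similarly for $\nu$.

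First I would prove the ``if'' direction. Assume $\Gamma_\mu(t)\preceq\Gamma_\nu(t)$ for $\gamma$-a.e.\ $t$. Given $F_f^\mu\in\cH_{\tK}$, I want to produce $g\in\cW_\nu$ with $F_g^\nu=F_f^\mu$ and $\|g\|_{\cW_\nu}=\|f\|_{\cW_\mu}$. The natural candidate is a measurable pointwise selection $g(t)=\Gamma_\nu(t)^{-1}\Gamma_\mu(t)f(t)$ on the closed range of $\Gamma_\nu(t)$ (extended by $0$ on the kernel), which the relation $\Gamma_\mu(t)\preceq\Gamma_\nu(t)$ makes legitimate via Proposition \ref{chracterizationintergal2prop}: it gives $\Gamma_\nu(t)g(t)=\Gamma_\mu(t)f(t)$ and $(\Gamma_\nu(t)g(t),g(t))_\Lambda=(\Gamma_\mu(t)f(t),f(t))_\Lambda$ pointwise a.e. Integrating the first identity against $\overline{\phi(x,t)}$ gives $F_g^\nu=F_f^\mu$; integrating the second against $d\gamma$ gives $\|g\|_{\cW_\nu}=\|f\|_{\cW_\mu}<+\infty$, so $g\in\cW_\nu$. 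Hence $\cH_{\tK}\subseteq\cH_{\tG}$ with matching norms, i.e.\ $\cH_{\tK}\preceq\cH_{\tG}$.

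For the ``only if'' direction, assume $\cH_{\tK}\preceq\cH_{\tG}$. Fix $\xi\in\Lambda$ and a measurable subset $E\subseteq\Omega$ of finite positive $\gamma$-measure, and apply the hypothesis to the element $F_f^\mu$ with $f=\chi_E\,\xi$ (a constant-$\xi$ function on $E$): there exists $g\in\cW_\nu$ with $F_g^\nu=F_f^\mu$ and $\|g\|_{\cW_\nu}=\|f\|_{\cW_\mu}$. The identity $F_g^\nu=F_f^\mu$ means $\int_\Omega(\Gamma_\nu(t)g(t)-\Gamma_\mu(t)\chi_E(t)\xi,\eta)_\Lambda\overline{\phi(x,t)}\,d\gamma(t)=0$ for all $x\in X$, $\eta\in\Lambda$; invoking the density condition (\ref{densitycondition}) in $L^2(\Omega,d\gamma)$ and arbitrariness of $\eta$ yields $\Gamma_\nu(t)g(t)=\chi_E(t)\Gamma_\mu(t)\xi$ for $\gamma$-a.e.\ $t$, so in particular $\Gamma_\mu(t)\xi\in\ran(\Gamma_\nu(t))$ a.e.\ on $E$. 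The norm identity $\int_E(\Gamma_\nu(t)g(t),g(t))_\Lambda d\gamma=\int_E(\Gamma_\mu(t)\xi,\xi)_\Lambda d\gamma$, together with the localization argument (running over all measurable $E$, using that a.e.\ equality of the integrands follows from equality of all their integrals over sets $E$), gives $(\Gamma_\nu(t)g(t),g(t))_\Lambda=(\Gamma_\mu(t)\xi,\xi)_\Lambda$ a.e. Thus for a.e.\ $t$, for every $\xi$ in a countable dense subset of $\Lambda$ there is $\eta=g(t)$ with $\Gamma_\nu(t)\eta=\Gamma_\mu(t)\xi$ and $(\Gamma_\nu(t)\eta,\eta)_\Lambda=(\Gamma_\mu(t)\xi,\xi)_\Lambda$; by continuity and linearity in $\xi$ this extends to all $\xi\in\Lambda$, giving $\Gamma_\mu(t)\preceq\Gamma_\nu(t)$ a.e.

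The main obstacle I expect is measurability: one must check that the pointwise selection $t\mapsto g(t)=\Gamma_\nu(t)^{-1}\Gamma_\mu(t)f(t)$ is $\gamma$-measurable (a.e.\ strong measurability into $\Lambda$), since Proposition \ref{chracterizationintergal2prop} only supplies $g(t)$ pointwise. Here one leans on the $\gamma$-Bochner integrability of $\Gamma_\mu,\Gamma_\nu$ (hence a.e.\ approximability by simple functions), separability of $\Lambda$, and a measurable-selection argument for the pseudo-inverse on the closed ranges; the reflexivity/Radon--Nikodym setting guarantees the relevant objects are well behaved. A secondary technical point is the ``localization'' step — passing from equality of integrals $\int_E(\cdot)\,d\gamma$ over all $E\in\cF$ to a.e.\ equality of integrands — which is routine but must be applied carefully to the countable dense set of $\xi$'s simultaneously so that a single $\gamma$-null set works for all of them.
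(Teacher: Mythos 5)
Your proposal is correct and follows the same skeleton as the paper's proof: reduce to the scalar-valued kernels via Proposition \ref{thirdcharacterization}, identify $\cH_{\tK}$ and $\cH_{\tG}$ via Lemma \ref{chracterizationintergal2lemma1}, and use the density condition (\ref{densitycondition}) to convert the integral identity $F^\nu_g=F^\mu_f$ into the pointwise identity $\Gamma_\nu(t)g(t)=\Gamma_\mu(t)f(t)$ $\gamma$-a.e. The two arguments part ways only in the ``only if'' direction: you test against indicator functions $f=\chi_E\,\xi$ and localize over measurable sets $E$, whereas the paper chooses $f\mapsto g_f$ linear, applies a polarization trick to get $\int_\Omega(\Gamma_\mu(t) f'(t),f(t)-g_f(t))_\Lambda\,d\gamma(t)=0$ for all $f'\in\cW_\mu$, hence $\Gamma_\mu(t)(f(t)-g_f(t))=0$ a.e., and then reads off $(\Gamma_\nu g_f,g_f)_\Lambda=(\Gamma_\mu f,f)_\Lambda$ pointwise by self-adjointness. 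Note that your localization step secretly needs the same self-adjointness observation: the functions $g$ associated with $E'\subseteq E$ and with $E$ differ a.e.\ on $E'$ by an element of $\ker\Gamma_\nu(t)$, and one must check this forces their quadratic-form integrands to agree a.e.\ on $E'$ before ``equality of integrals over all sets'' applies; so the two routes cost about the same. Your explicit handling of the null sets via a countable dense set of $\xi$'s is actually more careful than the paper, which passes from ``for each $f$, a.e.\ $t$'' to ``a.e.\ $t$, for all $\xi$'' without comment. One small caution in your ``if'' direction: Proposition \ref{chracterizationintergal2prop} presupposes closed range, which $\Gamma_\nu(t)$ need not have, so you should not define $g(t)=\Gamma_\nu(t)^{-1}\Gamma_\mu(t)f(t)$ via that proposition; the definition of $\Gamma_\mu(t)\preceq\Gamma_\nu(t)$ already hands you a pointwise $\eta$ for $\xi=f(t)$ satisfying both required identities, which is all the paper uses. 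The measurability of the resulting selection $t\mapsto g(t)$, which you rightly flag as the main technical obstacle, is glossed over in the paper as well.
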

\begin{proof}
By Proposition \ref{thirdcharacterization} and Lemma \ref{chracterizationintergal2lemma1}, $\cH_K\preceq\cH_G$ if and only if for all $f\in\cW_\mu$, there exists some $g\in\cW_\nu$ such that
\begin{equation}\label{chracterizationintergal2eq1}
\int_\Omega (\Gamma_\mu(t)f(t),\xi)_\Lambda\overline{\phi(x,t)}d\gamma(t)=\int_\Omega (\Gamma_\nu(t)g(t),\xi)_\Lambda\overline{\phi(x,t)}d\gamma(t)\mbox{ for all }x\in X,\ \xi\in\Lambda
\end{equation}
and
\begin{equation}\label{chracterizationintergal2eq2}
\int_\Omega (\Gamma_\mu(t)f(t),f(t))_\Lambda d\gamma(t)=\int_\Omega (\Gamma_\nu(t)g(t),g(t))_\Lambda d\gamma(t).
\end{equation}
By the denseness condition (\ref{densitycondition}), (\ref{chracterizationintergal2eq1}) holds true if and only if
$$
(\Gamma_\mu(t)f(t),\xi)_\Lambda=(\Gamma_\nu(t)g(t),\xi)_\Lambda\ \mbox{ for }\gamma-\ae t\in\Omega\mbox{ and all }\xi\in\Lambda,
$$
which is equivalent to
\begin{equation}\label{chracterizationintergal2eq3}
\Gamma_\mu(t)f(t)=\Gamma_\nu(t)g(t)\mbox{ for }\gamma-\ae t\in\Omega.
\end{equation}
We conclude that $\cH_K\preceq\cH_G$ if and only if for every $f\in\cW_\mu$, there exists some $g\in\cW_\nu$ such that equations (\ref{chracterizationintergal2eq2}) and (\ref{chracterizationintergal2eq3}) hold true.

Suppose that $\Gamma_\mu\preceq\Gamma_\nu$ $\gamma-\ae$ Then clearly, for each $f\in\cW_\mu$, we can find a function $g:\Omega\to\Lambda$ which is defined $\gamma$-almost everywhere and satisfies (\ref{chracterizationintergal2eq3}) and
$$
(\Gamma_\mu(t)f(t),f(t))_\Lambda=(\Gamma_\nu(t)g(t),g(t))_\Lambda\mbox{ for }\gamma-\ae t\in\Omega.
$$
The above equation implies (\ref{chracterizationintergal2eq2}). Therefore, $\cH_K\preceq\cH_G$.

On the other hand, suppose that we can find for every $f\in\cW_\mu$ some $g_f\in\cW_\nu$ satisfying (\ref{chracterizationintergal2eq2}) and (\ref{chracterizationintergal2eq3}). The function $g_f$ can be chosen so that $f\to g_f$ is linear from $\cW_\mu$ to $\cW_\nu$. A trick similar to that used in Lemma \ref{chracterizationintergal2lemma1} enables us to obtain from (\ref{chracterizationintergal2eq2}) and (\ref{chracterizationintergal2eq3}) that
$$
\int_\Omega(\Gamma_\mu(t) f'(t),f(t)-g_f(t))_\Lambda d\gamma(t)=0\mbox{ for all }f'\in\cW_\mu.
$$
Letting $f':=\phi(x,\cdot)\xi$ for arbitrary $x\in X$ and $\xi\in\Lambda$ in the above equation and invoking (\ref{densitycondition}), we have that
$$
\Gamma_\mu(t) (f(t)-g_f(t))=0\mbox{ for }\gamma-\ae t\in\Omega.
$$
By the above equation and (\ref{chracterizationintergal2eq3}), we get for $\gamma$-almost every $t\in\Omega$ that
$$
(\Gamma_\nu(t) g_f(t),g_f(t))_\Lambda=(\Gamma_\mu(t)f(t),g_f(t))_\Lambda=(f(t),\Gamma_\mu(t)g_f(t))_\Lambda=(f(t),\Gamma_\mu(t)f(t))_\Lambda=(\Gamma_\mu(t)f(t),f(t))_\Lambda.
$$
Since (\ref{chracterizationintergal2eq3}) and the above equation are true for an arbitrary $f\in\cW_\mu$, $\Gamma_\mu\preceq\Gamma_\nu$ $\gamma-\ae$
\end{proof}

\section{Examples}
\setcounter{equation}{0}

We present in this section several concrete examples of refinement of operator-valued reproducing kernels. They are built on the general characterizations established in the last two sections.

\subsection{Translation invariant reproducing kernels}

Let $d\in\bN$ and $K$ be an $\cL(\Lambda)$-valued reproducing kernel on $\bR^d$. We say that $K$ is {\it translation invariant} if for all $x,y,a\in\bR^d$
$$
K(x-a,y-a)=K(x,y).
$$
A celebrated characterization due to Bochner \cite{Bochner2} states that every continuous scalar-valued translation invariant reproducing kernel on $\bR^d$ must be the Fourier transform of a finite nonnegative Borel measure on $\bR^d$, and vice versa. This result has been generalized to the operator-valued case \cite{Berberian,CDTU,Fillmore}. Specifically, a continuous function $K$ from $\bR^d\times\bR^d$ to $\cL(\Lambda)$ is a translation invariant reproducing kernel if and only if it has the form
\begin{equation}\label{translationinavriantkernelk}
K(x,y)=\int_{\bR^d}e^{i(x-y)\cdot t}d\mu(t),\ \ x,y\in\bR^d,
\end{equation}
for some $\mu\in\cB(\bR^d,\Lambda)$, the set of all the $\cL_+(\Lambda)$-valued measures of bounded variation on the $\sigma$-algebra of Borel subsets in $\bR^d$. Let $G$ be the kernel given by
\begin{equation}\label{translationinavriantkernelg}
G(x,y)=\int_{\bR^d}e^{i(x-y)\cdot t}d\nu(t),\ \ x,y\in\bR^d,
\end{equation}
where $\nu\in\cB(\bR^d,\Lambda)$. The purpose of this subsection is to characterize $\cH_K\preceq\cH_G$ in terms of $\mu,\nu$. To this end, we first investigate the structure of the RKHS of a translation invariant $\cL(\Lambda)$-valued reproducing kernel.

Let $\gamma$ be an arbitrary measure in $\cB(\bR^d,\Lambda)$ and $L$ the associated translation invariant reproducing kernel defined by
\begin{equation}\label{kernelL}
L(x,y)=\int_{\bR^d}e^{i(x-y)\cdot t}d\gamma(t),\ \ x,y\in\bR^d.
\end{equation}
There exists a decomposition of $\gamma$ with respect to the Lebesgue measure $dx$ on $\bR^d$ \cite{DU} as follows:
$$
\gamma=\gamma_c+\gamma_s,
$$
where $\gamma_c,\gamma_s$ are the unique measures in $\cB(\bR^d,\Lambda)$ such that $\gamma_c$ is absolutely continuous with respect to $dx$, and for each continuous linear functional $\lambda$ on $\cL(\Lambda)$, the scalar-valued measure $\lambda\gamma_s$ and $dx$ are mutually singular. It follows from this decomposition of measures a decomposition of $L$:
$$
L=L_c+L_s,
$$
where
\begin{equation}\label{kernellcls}
L_c(x,y)=\int_{\bR^d}e^{i(x-y)\cdot t}d\gamma_c(t),\ \ L_s(x,y)=\int_{\bR^d}e^{i(x-y)\cdot t}d\gamma_s(t),\ \ \ x,y\in\bR^d.
\end{equation}
Our first observation is that $\cH_L$ is the orthogonal direct sum of $\cH_{L_c}$ and $\cH_{L_s}$. Two lemmas are needed to prove this useful fact.

\begin{lemma}\label{orthogonaldirectsumlemma1}
Let $L_c,L_s$ be given by (\ref{kernellcls}). Then for all $\xi\in \Lambda$ and $x,y\in\bR^d$
\begin{equation}\label{orthogonaldirectsumlemma1eq}
(L_a(x,y)\xi,\xi)_\Lambda=\int_{\bR^d}e^{i(x-y)\cdot t}d\gamma_{a,\xi}(t),\ \ a=c\mbox{ or }s,
\end{equation}
where $\gamma_{a,\xi}$ is a scalar-valued Borel measure on $\bR^d$ defined for each Borel set $E\subseteq\bR^d$ by
$$
\gamma_{a,\xi}(E):=(\gamma_a(E)\xi,\xi)_\Lambda,\ \ a=c\mbox{ or }s.
$$
\end{lemma}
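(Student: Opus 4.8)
\emph{Proof proposal.} The plan is to recognize $\gamma_{a,\xi}$ as the scalar-valued measure obtained from the $\cL(\Lambda)$-valued measure $\gamma_a$ by composition with the bounded linear functional $\ell_\xi:\cL(\Lambda)\to\bC$ defined by $\ell_\xi(A):=(A\xi,\xi)_\Lambda$, and then to push $\ell_\xi$ through the integral in (\ref{kernellcls}) that defines $L_a$. Since $a\in\{c,s\}$ is fixed and the two cases are identical, I would treat them uniformly.

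First I would check that $\gamma_{a,\xi}$ is a well-defined finite nonnegative Borel measure on $\bR^d$. Nonnegativity is immediate because $\gamma_a(E)\in\cL_+(\Lambda)$, so $\gamma_{a,\xi}(E)=(\gamma_a(E)\xi,\xi)_\Lambda\ge0$. For countable additivity, if $E=\bigcup_{j}E_j$ with the $E_j$ pairwise disjoint Borel sets, then $\gamma_a(E)=\sum_j\gamma_a(E_j)$ with convergence in the operator norm; since $|\ell_\xi(A)|\le\|A\|_{\cL(\Lambda)}\|\xi\|_\Lambda^2$, the functional $\ell_\xi$ is norm-continuous, hence $\gamma_{a,\xi}(E)=\ell_\xi(\gamma_a(E))=\sum_j\ell_\xi(\gamma_a(E_j))=\sum_j\gamma_{a,\xi}(E_j)$. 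Finiteness (and in fact $\gamma_{a,\xi}(\bR^d)\le|\gamma_a|(\bR^d)\|\xi\|_\Lambda^2<+\infty$) follows the same way, and since $\gamma_{a,\xi}$ is nonnegative its variation coincides with itself, so $|\gamma_{a,\xi}|(E)\le|\gamma_a|(E)\|\xi\|_\Lambda^2$ for every Borel $E$.

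The heart of the argument is the interchange identity $\ell_\xi\bigl(\int_{\bR^d}g\,d\gamma_a\bigr)=\int_{\bR^d}g\,d\gamma_{a,\xi}$ for every bounded Borel function $g$ on $\bR^d$. For a simple function $g=\sum_{l}\alpha_l\chi_{E_l}$ this is a one-line computation from the definition of the $\cL(\Lambda)$-valued integral given in Subsection 4.2 together with the linearity of $\ell_\xi$. For general bounded Borel $g$, approximate it in $L^\infty(\bR^d,d|\gamma_a|)$ by simple functions $g_n$; the map $g\mapsto\int_{\bR^d}g\,d\gamma_a$ is bounded from $L^\infty(\bR^d,d|\gamma_a|)$ to $\cL(\Lambda)$, so $\int_{\bR^d}g_n\,d\gamma_a\to\int_{\bR^d}g\,d\gamma_a$ in norm, and composing with the norm-continuous $\ell_\xi$ passes the limit through; on the right-hand side, $|\gamma_{a,\xi}|\le|\gamma_a|\,\|\xi\|_\Lambda^2$ ensures $g_n\to g$ in $L^\infty(\bR^d,d|\gamma_{a,\xi}|)$ as well, so $\int_{\bR^d}g_n\,d\gamma_{a,\xi}\to\int_{\bR^d}g\,d\gamma_{a,\xi}$. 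Applying this identity with $g(t)=e^{i(x-y)\cdot t}$, which is bounded and Borel, and recalling the definition (\ref{kernellcls}) of $L_a$ yields exactly (\ref{orthogonaldirectsumlemma1eq}).

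I do not expect a genuine obstacle here: the only mildly delicate point is the interchange of the bounded linear functional $\ell_\xi$ with the vector-valued integral, and that is handled by the standard simple-function-then-density routine just described; everything else is bookkeeping around the definitions of Subsection 4.2.
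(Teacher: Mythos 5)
Your proposal is correct and follows essentially the same route as the paper: approximate $e^{i(x-y)\cdot t}$ by simple functions in $L^\infty$, observe that the identity $\bigl(\bigl(\int s_n\,d\gamma_a\bigr)\xi,\xi\bigr)_\Lambda=\int s_n\,d\gamma_{a,\xi}$ holds for simple functions by definition of the vector-valued integral, and pass to the limit on both sides. Your write-up is somewhat more careful than the paper's (you verify that $\gamma_{a,\xi}$ is a finite nonnegative measure and use the bound $|\gamma_{a,\xi}|\le|\gamma_a|\,\|\xi\|_\Lambda^2$ to justify the limit on the scalar side), but the underlying argument is the same.
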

\begin{proof} Let $a\in\{c,s\}$, $\xi\in\Lambda$, $x,y\in \bR^d$, and $s_n$ be a sequence of simple functions on $\bR^d$ that converges to $e^{i(x-y)\cdot t}$ in $L^\infty(\bR^d,dx)$. Then
$$
\lim_{n\to\infty}\left(\biggl(\int_{\bR^d}s_nd\gamma_a\biggr)\, \xi,\xi\right)_\Lambda=(L_a(x,y)\xi,\xi)_\Lambda.
$$
By definition, we have for each $n\in\bN$ that
$$
\lim_{n\to\infty}\left(\biggl(\int_{\bR^d}s_nd\gamma_a\biggr)\, \xi,\xi\right)_\Lambda=\int_{\bR^d}s_n d\gamma_{a,\xi}.
$$
As
$$
\lim_{n\to\infty}\int_{\bR^d}s_n d\gamma_{a,\xi}=\int_{\bR^d}e^{i(x-y)\cdot t}d\gamma_{a,\xi}(t),
$$
we conclude from the previous two equations that (\ref{orthogonaldirectsumlemma1eq}) holds true.
\end{proof}

\begin{lemma}\label{zerointersection}
There holds $\cH_{L_c}\cap\cH_{L_s}=\{0\}$.
\end{lemma}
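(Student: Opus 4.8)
The plan is to reduce the statement to the scalar-valued case by testing against a single output vector $\eta\in\Lambda$, and then to invoke the corresponding result for scalar translation invariant kernels from \cite{XZ2}. So fix $\eta\in\Lambda$ and introduce the scalar-valued kernels $L_c^\eta(x,y):=(L_c(x,y)\eta,\eta)_\Lambda$ and $L_s^\eta(x,y):=(L_s(x,y)\eta,\eta)_\Lambda$ on $\bR^d$. By Lemma \ref{orthogonaldirectsumlemma1} these are the translation invariant kernels $\int_{\bR^d}e^{i(x-y)\cdot t}\,d\gamma_{c,\eta}(t)$ and $\int_{\bR^d}e^{i(x-y)\cdot t}\,d\gamma_{s,\eta}(t)$. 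Since $A\mapsto(A\eta,\eta)_\Lambda$ is a continuous linear functional on $\cL(\Lambda)$, the defining property of the splitting $\gamma=\gamma_c+\gamma_s$ shows that $\gamma_{c,\eta}$ is absolutely continuous, and $\gamma_{s,\eta}$ mutually singular, with respect to the Lebesgue measure $dx$.

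Next I would use the elementary fact that restricting outputs by $\eta$ sends $\cH_{L_a}$ into $\cH_{L_a^\eta}$ for $a\in\{c,s\}$: choosing a feature map $\Phi_a:\bR^d\to\cL(\Lambda,\cW_a)$ of $L_a$ with $\overline{\span}\{\Phi_a(x)\xi:x\in\bR^d,\ \xi\in\Lambda\}=\cW_a$, one checks that $x\mapsto\Phi_a(x)\eta$ is a feature map of $L_a^\eta$, so that by Lemma \ref{rkhsfeature} every $f\in\cH_{L_a}$ has the form $f(\cdot)=\Phi_a(\cdot)^*u$ for some $u\in\cW_a$, whence $(f(\cdot),\eta)_\Lambda=(u,\Phi_a(\cdot)\eta)_{\cW_a}\in\cH_{L_a^\eta}$. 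Consequently, if $f\in\cH_{L_c}\cap\cH_{L_s}$ then the scalar function $f_\eta:=(f(\cdot),\eta)_\Lambda$ belongs to $\cH_{L_c^\eta}\cap\cH_{L_s^\eta}$.

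Finally I would invoke the scalar-valued version of the lemma from \cite{XZ2}: because $\gamma_{c,\eta}$ and $\gamma_{s,\eta}$ are respectively absolutely continuous and singular with respect to $dx$, one has $\cH_{L_c^\eta}\cap\cH_{L_s^\eta}=\{0\}$. (Self-contained argument: an element of $\cH_{L_c^\eta}$ is the Fourier transform of a finite complex measure absolutely continuous with respect to $\gamma_{c,\eta}$, hence with respect to $dx$, while an element of $\cH_{L_s^\eta}$ is the Fourier transform of a finite complex measure carried by a Borel set of zero Lebesgue measure; uniqueness of the Fourier transform on finite measures forces any common element to be the transform of the zero measure.) Hence $f_\eta\equiv0$ for every $\eta\in\Lambda$, which means $(f(x),\eta)_\Lambda=0$ for all $x\in\bR^d$ and $\eta\in\Lambda$, so $f=0$ and $\cH_{L_c}\cap\cH_{L_s}=\{0\}$.

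I expect the main obstacle to be the second step: making the feature-map descriptions of $\cH_{L_c}$ and $\cH_{L_s}$ compatible with the output restriction and verifying the density hypothesis of Lemma \ref{rkhsfeature}; once this is settled, the conclusion is immediate from Lemma \ref{orthogonaldirectsumlemma1} and the scalar theory. An alternative route, valid when $\cL(\Lambda)$ has the Radon-Nikodym property (in particular when $\dim\Lambda<\infty$), would describe $\cH_{L_c}$ and $\cH_{L_s}$ via Lemma \ref{chracterizationintergal2lemma1} as Fourier transforms of $\Lambda$-valued measures absolutely continuous with respect to $\gamma_c$, resp.\ $\gamma_s$, and argue directly; the scalar reduction above has the advantage of avoiding the Radon-Nikodym hypothesis.
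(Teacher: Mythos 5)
Your proposal is correct and follows essentially the same route as the paper's own proof: both reduce to the scalar case by pairing with a fixed output vector, apply Lemma \ref{orthogonaldirectsumlemma1} together with the Lebesgue decomposition to see that the two scalar kernels come from mutually singular measures, invoke the scalar-valued intersection result of \cite{XZ2}, and conclude via a feature-map identification of $\{(f(\cdot),\eta)_\Lambda : f\in\cH_{L_a}\}$ inside $\cH_{L_a^\eta}$. The only cosmetic difference is that the paper uses the canonical feature map $\Phi_a(x)=L_a(x,\cdot)\xi$ with feature space $\cH_{L_a}$, where the density condition is automatic, whereas you work with an arbitrary feature map; both give the containment you need.
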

\begin{proof}
We introduce for each $\xi\in \Lambda$ two scalar-valued translation invariant reproducing kernels on $\bR^d$ by setting
$$
A_a(x,y):=(L_a(x,y)\xi,\xi)_\Lambda,\ \ x,y\in\bR^d,\ \ a\in\{c,s\}.
$$
By Lemma \ref{orthogonaldirectsumlemma1}, we have the alternative representations for $A_c$ and $A_s$
$$
A_a(x,y)=\int_{\bR^d}e^{i(x-y)\cdot t}d\gamma_{a,\xi}(t),\ \ x,y\in\bR^d,\ \ a=c\mbox{ or }s.
$$
By the Lebesgue decomposition of $\gamma$, $\gamma_{c,\xi}$ is absolutely continuous with respect to $dx$ while $\gamma_{s,\xi}$ and $dx$ are mutually singular. As a consequence, $\cH_{A_c}\cap\cH_{A_s}=\{0\}$ by Lemma 17 in \cite{XZ2}.

Let $a\in\{c,s\}$. By (\ref{reproducing}),
$$
A_a(x,y)=(L_a(x,\cdot)\xi,L_a(y,\cdot)\xi)_{\cH_{L_a}},\ \ x,y\in\bR^d.
$$
A feature map for $A_a$ may hence be chosen as
$$
\Phi_a(x):=L_a(x,\cdot)\xi,\ \ x\in\bR^d
$$
with the feature space being $\cH_{L_a}$. We identify by Lemma \ref{rkhsfeature} that
\begin{equation}\label{zerointersectioneq1}
\cH_{A_a}=\{(\tilde{f}(\cdot),\xi)_\Lambda:\tilde{f}\in\cH_{L_a}\}.
\end{equation}

Assume that $\cH_{L_c}\cap\cH_{L_s}\ne\{0\}$. Then there exist nontrivial functions $\tilde{f}\in\cH_{L_c}$ and $\tilde{g}\in\cH_{L_s}$ such that $\tilde{f}=\tilde{g}$. As a result, there exists some $\xi\in\Lambda$ such that $(\tilde{f}(\cdot),\xi)_\Lambda$ is not the trivial function. By equation (\ref{zerointersectioneq1})
$$
(\tilde{f}(\cdot),\xi)_\Lambda=(\tilde{g}(\cdot),\xi)_\Lambda\in \cH_{A_c}\cap\cH_{A_s},
$$
contradicting the fact that $\cH_{A_c}\cap\cH_{A_s}=\{0\}$.
\end{proof}

\begin{theorem}\label{orthogonaldirectsum}
The space $\cH_L$ is the orthogonal direct sum of $\cH_{L_c}$ and $\cH_{L_s}$, namely, $\cH_L=\cH_{L_c}\bigoplus\cH_{L_s}$.
\end{theorem}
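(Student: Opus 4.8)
The plan is to invoke Proposition~\ref{firstcharacterization} with the pair $L_c$ and $L_s$ in the roles of $K$ and $G-K$. Concretely, I would first verify that $L_c$ and $L_s$ are each $\cL(\Lambda)$-valued reproducing kernels on $\bR^d$: this is immediate since $\gamma_c,\gamma_s\in\cB(\bR^d,\Lambda)$ and every kernel of the form (\ref{kernelL}) built from a measure in $\cB(\bR^d,\Lambda)$ is a reproducing kernel by the Bochner-type characterization recalled above (or directly by Proposition~\ref{generalkernelis} with $\Psi(x,y,t)=e^{i(x-y)\cdot t}$). Their sum is $L$, so $L-L_c=L_s$ is a reproducing kernel. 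By Lemma~\ref{zerointersection} we have $\cH_{L_c}\cap\cH_{L_s}=\{0\}$. Hence Proposition~\ref{firstcharacterization} applies and yields $\cH_{L_c}\preceq\cH_L$ with $\cH_{L_s}$ being the orthogonal complement of $\cH_{L_c}$ in $\cH_L$. In particular $\cH_L=\cH_{L_c}\oplus\cH_{L_s}$ as an orthogonal direct sum, which is exactly the claim.

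There is one point that needs care: Proposition~\ref{firstcharacterization}, as stated, gives that $\cH_{L-L_c}=\cH_{L_s}$ is the orthogonal complement of $\cH_{L_c}$ in $\cH_L$, and that the norms agree on $\cH_{L_c}$. To conclude a genuine orthogonal \emph{direct sum} decomposition I also need that $\cH_{L_s}\subseteq\cH_L$ isometrically and that together $\cH_{L_c}$ and $\cH_{L_s}$ span $\cH_L$; but both of these are part of the content of Pedrick's theorem (Theorem~1, p.~44 of \cite{Pedrick}) underlying Proposition~\ref{firstcharacterization} — when $G-K$ is a kernel and the intersection is trivial, $\cH_G$ is precisely the internal orthogonal direct sum of $\cH_K$ and $\cH_{G-K}$, with the norm of $f=f_K+f_{G-K}$ being $(\|f_K\|_{\cH_K}^2+\|f_{G-K}\|_{\cH_{G-K}}^2)^{1/2}$. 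So the decomposition follows directly once the two hypotheses of Proposition~\ref{firstcharacterization} are checked.

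The only real work, therefore, is supplied by the preceding lemmas: Lemma~\ref{orthogonaldirectsumlemma1} reduces the measures to the scalar-valued Lebesgue decomposition of $\gamma_{a,\xi}$, and Lemma~\ref{zerointersection} leverages the scalar-valued result (Lemma~17 of \cite{XZ2}) together with the feature-map identification (\ref{zerointersectioneq1}) to transfer the triviality of the intersection from the scalar case to the operator-valued case. I expect no new obstacle in the theorem's proof itself; the genuine difficulty — showing $\cH_{L_c}\cap\cH_{L_s}=\{0\}$ despite $\cH_{L_c}$ and $\cH_{L_s}$ being $\Lambda$-valued function spaces, where distinct functions can collapse after pairing with a fixed $\xi$ — has already been dispatched in Lemma~\ref{zerointersection} by choosing a $\xi$ for which the pairing is nontrivial. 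Thus the proof is short: cite that $L_s$ is a reproducing kernel, cite Lemma~\ref{zerointersection}, and apply Proposition~\ref{firstcharacterization}.
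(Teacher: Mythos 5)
Your proposal is correct and follows exactly the paper's own route: the paper's proof of Theorem~\ref{orthogonaldirectsum} is a one-line citation of Lemma~\ref{zerointersection} together with Proposition~\ref{firstcharacterization}, which is precisely the argument you give (with the additional, harmless care about why $L_s=L-L_c$ is a kernel and why Pedrick's theorem yields the full orthogonal direct sum). No gaps.
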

\begin{proof}
The result follows directly from Lemma \ref{zerointersection} and Proposition \ref{firstcharacterization}.
\end{proof}

We are now in a position to study the refinement relationship $\cH_K\preceq\cH_G$, where $K,G$ are defined by (\ref{translationinavriantkernelk}) and (\ref{translationinavriantkernelg}). Firstly, the task can be separated into two related ones according to the Lebesgue decomposition of measures $\mu,\nu$.

\begin{prop}\label{separation}
There holds $\cH_K\preceq\cH_G$ if and only if $\cH_{K_c}\preceq\cH_{G_c}$ and $\cH_{K_s}\preceq\cH_{G_s}$.
\end{prop}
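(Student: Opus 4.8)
The plan is to leverage the orthogonal direct sum decompositions $\cH_K = \cH_{K_c}\bigoplus\cH_{K_s}$ and $\cH_G = \cH_{G_c}\bigoplus\cH_{G_s}$ furnished by Theorem \ref{orthogonaldirectsum} (applied with $L=K$, $\gamma=\mu$, and with $L=G$, $\gamma=\nu$), and then to reduce the claim to a statement about refinement of the summand kernels. The ``if'' direction is the easy one: if $\cH_{K_c}\preceq\cH_{G_c}$ and $\cH_{K_s}\preceq\cH_{G_s}$, then any $f\in\cH_K$ decomposes uniquely as $f=f_c+f_s$ with $f_c\in\cH_{K_c}$, $f_s\in\cH_{K_s}$, $\|f\|_{\cH_K}^2=\|f_c\|_{\cH_{K_c}}^2+\|f_s\|_{\cH_{K_s}}^2$; since $f_c\in\cH_{G_c}\subseteq\cH_G$ and $f_s\in\cH_{G_s}\subseteq\cH_G$ with matching norms, we get $f\in\cH_G$, and the orthogonality of $\cH_{G_c}$ and $\cH_{G_s}$ inside $\cH_G$ gives $\|f\|_{\cH_G}^2=\|f_c\|_{\cH_{G_c}}^2+\|f_s\|_{\cH_{G_s}}^2=\|f\|_{\cH_K}^2$. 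Hence $\cH_K\preceq\cH_G$.

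For the ``only if'' direction I would argue as follows. Assume $\cH_K\preceq\cH_G$. By Proposition \ref{firstcharacterization}, $G-K$ is an $\cL(\Lambda)$-valued reproducing kernel and $\cH_{G-K}$ is the orthogonal complement of $\cH_K$ in $\cH_G$. Now $G-K$ is itself translation invariant with representing measure $\nu-\mu$ (which is a difference of measures in $\cB(\bR^d,\Lambda)$, hence of bounded variation), and its Lebesgue decomposition is $(\nu-\mu)_c=\nu_c-\mu_c$, $(\nu-\mu)_s=\nu_s-\mu_s$ by uniqueness of the decomposition. Here I must be careful: $\nu-\mu$ need not be $\cL_+(\Lambda)$-valued, so strictly speaking I should first establish that $G-K$ \emph{is} a (positive) reproducing kernel from $\cH_K\preceq\cH_G$, which Proposition \ref{firstcharacterization} already gives, and then observe that the translation-invariance characterization and the splitting $\cH_{G-K}=\cH_{(G-K)_c}\bigoplus\cH_{(G-K)_s}$ of Theorem \ref{orthogonaldirectsum} still apply to it. The key structural point is then: inside $\cH_G=\cH_{G_c}\bigoplus\cH_{G_s}$ we have the orthogonal splitting $\cH_G=\cH_K\oplus\cH_{G-K}$, and each of $\cH_K$, $\cH_{G-K}$ themselves split along the continuous/singular dichotomy, with $\cH_{K_c},\cH_{(G-K)_c}\subseteq\cH_{G_c}$ and $\cH_{K_s},\cH_{(G-K)_s}\subseteq\cH_{G_s}$. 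Because $G_c=K_c+(G-K)_c$, Proposition \ref{firstcharacterization} in the reverse direction — or rather the characterization of the RKHS of a sum — forces $\cH_{G_c}=\cH_{K_c}\oplus\cH_{(G-K)_c}$ orthogonally provided $\cH_{K_c}\cap\cH_{(G-K)_c}=\{0\}$; and this last intersection is trivial because it sits inside $\cH_K\cap\cH_{G-K}=\{0\}$. This yields $\cH_{K_c}\preceq\cH_{G_c}$, and symmetrically $\cH_{K_s}\preceq\cH_{G_s}$.

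The step I expect to be the main obstacle is justifying that the Lebesgue decomposition of the (a priori only formally defined) difference measure $\nu-\mu$ really is $\nu_c-\mu_c$ and $\nu_s-\mu_s$, and more importantly that $G-K$ fits the hypotheses of Theorem \ref{orthogonaldirectsum} even though its representing measure is a signed/operator difference rather than an element of $\cB(\bR^d,\Lambda)$. One clean way around this is to avoid decomposing $G-K$ directly: instead, let $H$ be the orthogonal complement of $\cH_{K_c}$ in $\cH_{G_c}$ and $H'$ the orthogonal complement of $\cH_{K_s}$ in $\cH_{G_s}$ (these make sense once we know $\cH_{K_c}\subseteq\cH_{G_c}$ and $\cH_{K_s}\subseteq\cH_{G_s}$ with equal norms, which I would extract first by projecting the defining relation $\|f\|_{\cH_K}=\|f\|_{\cH_G}$ onto the two components using that the decomposition of $K$ as $K_c+K_s$ is ``compatible'' with that of $G$), and then check $\cH_K\oplus H\oplus H'=\cH_G$. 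The remaining bookkeeping — that the $\cH_{K_c}\subseteq\cH_{G_c}$ inclusion is isometric — is where I would spend the care, using Lemma \ref{orthogonaldirectsumlemma1} to pass to scalar sections $(L(x,\cdot)\xi,\xi)_\Lambda$ and then citing the scalar-valued result from \cite{XZ2} if a direct operator-valued argument proves awkward; the rest is routine Hilbert-space orthogonality.
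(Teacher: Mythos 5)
Your proof of the ``if'' direction coincides with the paper's, which simply invokes the orthogonal decompositions from Theorem \ref{orthogonaldirectsum}. For the ``only if'' direction you take a genuinely different, but correct, route. The paper fixes $f\in\cH_{K_c}$, writes $f=g+h$ with $g\in\cH_{G_c}$ and $h\in\cH_{G_s}$ inside $\cH_G$, and shows $h=0$ by noting that $h=f-g\in\cH_{K_c+G_c}$, whose representing measure $\mu_c+\nu_c$ is absolutely continuous, so that $\cH_{K_c+G_c}\cap\cH_{G_s}=\{0\}$ by Lemma \ref{zerointersection}; it never touches the difference kernel $G-K$. You instead apply Proposition \ref{firstcharacterization} twice: first to conclude that $G-K$ is a positive kernel with $\cH_K\cap\cH_{G-K}=\{0\}$, and then---after identifying, via the operator-valued Bochner theorem, uniqueness of the representing measure, and uniqueness of the absolutely-continuous/singular splitting, that $G-K$ is represented by $\nu-\mu\in\cB(\bR^d,\Lambda)$ with Lebesgue parts $\nu_c-\mu_c$ and $\nu_s-\mu_s$---in the converse direction to the pair $K_c$ and $G_c=K_c+(G_c-K_c)$, using $\cH_{K_c}\cap\cH_{G_c-K_c}\subseteq\cH_K\cap\cH_{G-K}=\{0\}$, where the two inclusions come from Theorem \ref{orthogonaldirectsum} applied to $K$ and to $G-K$. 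You correctly flag the one delicate point, namely that $\nu-\mu$ and its Lebesgue parts must be shown to be genuinely $\cL_+(\Lambda)$-valued before $G_c-K_c$ and $G_s-K_s$ can be called reproducing kernels; your resolution (positivity of $G-K$ from Proposition \ref{firstcharacterization}, then Bochner and uniqueness) is sound. What your argument buys is the full orthogonal decomposition $\cH_G=\cH_{K_c}\oplus\cH_{(G-K)_c}\oplus\cH_{K_s}\oplus\cH_{(G-K)_s}$, with the triviality of the relevant intersection coming from the refinement hypothesis itself; what the paper's argument buys is complete avoidance of any claim about the sign or Lebesgue decomposition of $\nu-\mu$, at the cost of a pointwise contradiction argument resting on Lemma \ref{zerointersection}. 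The closing alternative you sketch (defining $H$, $H'$ as orthogonal complements) is left vague, but the main argument in your second paragraph is complete enough to stand on its own.
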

\begin{proof}
By Theorem \ref{orthogonaldirectsum}, $\cH_K=\cH_{K_c}\bigoplus\cH_{K_s}$ and $\cH_G=\cH_{G_c}\bigoplus\cH_{G_s}$. Therefore, if $\cH_{K_c}\preceq\cH_{G_c}$ and $\cH_{K_s}\preceq\cH_{G_s}$ then $\cH_K\preceq\cH_G$.

On the other hand, suppose that $\cH_K\preceq\cH_G$. Let $f\in\cH_{K_c}$. Then $f\in\cH_K$ and $\|f\|_{\cH_{K_c}}=\|f\|_{\cH_K}$. Since $\cH_K\preceq\cH_G$, there exists $g\in\cH_{G_c}$ and $h\in\cH_{G_s}$ such that
$$
f=g+h
$$
and
$$
\|f\|_{\cH_{K_c}}^2=\|f\|_{\cH_K}^2=\|g+h\|_{\cH_G}^2=\|g\|_{\cH_{G_c}}^2+\|h\|_{\cH_{G_s}}^2.
$$
Therefore, to show that $\cH_{K_c}\preceq\cH_{G_c}$ it suffices to show that $h=0$. Assume that $h\ne0$. Note that $f-g\in\cH_{K_c+G_c}$ \cite{Pedrick}, we get that
\begin{equation}\label{separationeq1}
\cH_{K_c+G_c}\cap\cH_{G_s}\ne\{0\}.
\end{equation}
However,
$$
(K_c+G_c)(x,y)=\int_{\bR^d}e^{i(x-y)\cdot t}d(\mu_c+\nu_c)(t),\ \ x,y\in\bR^d
$$
and $\mu_c+\nu_c$ is absolutely continuous with respect to $dx$. Thus, equation (\ref{separationeq1}) contradicts Lemma \ref{zerointersection}. The contradiction proves that $\cH_{K_c}\preceq\cH_{G_c}$. Likewise, one can prove that $\cH_{K_s}\preceq\cH_{G_s}$.
\end{proof}

By Proposition \ref{separation}, we shall study $\cH_{K_c}\preceq\cH_{G_c}$ and $\cH_{K_s}\preceq\cH_{G_s}$ separately. The kernels to be considered are of the following special forms:
\begin{equation}\label{kernelkcgc}
K_c(x,y):=\int_{\bR^d}e^{i(x-y)\cdot t}\varphi_1(t)dt,\ \ G_c(x,y):=\int_{\bR^d}e^{i(x-y)\cdot t}\varphi_2(t)dt,\ \ x,y\in\bR^d
\end{equation}
and
\begin{equation}\label{kernelksgs}
K_s(x,y):=\sum_{j\in\bJ_1}e^{i(x-y)\cdot t_j}A_j,\ \ G_s(x,y):=\sum_{k\in\bJ_2}e^{i(x-y)\cdot t_k}B_k,\ \ x,y\in\bR^d.
\end{equation}
Here, $\varphi_1,\varphi_2$ are two $dx$-Bochner integrable functions from $\bR^d$ to $\cL_+(\Lambda)$, $\{t_j:j\in\bJ_1\}$ and $\{t_k:k\in\bJ_2\}$ are countable sets of pairwise distinct points in $\bR^d$, and $A_j,B_j$ are nonzero operators in $\cL_+(\Lambda)$ such that
$$
\sum_{j\in\bJ_1}\|A_j\|_{\cL(\Lambda)}<+\infty,\ \ \sum_{k\in\bJ_2}\|B_k\|_{\cL(\Lambda)}<+\infty.
$$
The following characterization is a direct consequence of Theorem \ref{chracterizationintergal2}.

\begin{prop}\label{characterizekcgc}
Let $K_c,G_c$ be given by (\ref{kernelkcgc}). Then $\cH_{K_c}\preceq\cH_{G_c}$ if and only if $\varphi_1(t)\preceq\varphi_2(t)$ for almost every $t\in\bR^d$ except for a subset in $\bR^d$ of zero Lebesgue measure.
\end{prop}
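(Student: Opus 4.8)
The plan is to recognize $K_c$ and $G_c$ as kernels of the form (\ref{kernelkintegral2}) and (\ref{kernelgintegral2}) and then to invoke Theorem \ref{chracterizationintergal2}. I would take $\Omega:=\bR^d$ with $\cF$ the Borel $\sigma$-algebra, $\phi(x,t):=e^{ix\cdot t}$ (bounded and $\cF$-measurable in $t$ for each $x$), and the set functions $\mu(E):=\int_E\varphi_1(t)\,dt$ and $\nu(E):=\int_E\varphi_2(t)\,dt$. Since $\phi(x,t)\overline{\phi(y,t)}=e^{i(x-y)\cdot t}$, the kernels in (\ref{kernelkcgc}) are precisely (\ref{kernelkintegral2}) and (\ref{kernelgintegral2}) for this data. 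The set functions $\mu,\nu$ take values in $\cL_+(\Lambda)$ because $(\mu(E)\xi,\xi)_\Lambda=\int_E(\varphi_1(t)\xi,\xi)_\Lambda\,dt\ge0$, and they are of bounded variation because $\varphi_1,\varphi_2$ are $dx$-Bochner integrable, so that $|\mu|(\bR^d)=\int_{\bR^d}\|\varphi_1(t)\|_{\cL(\Lambda)}\,dt<+\infty$ and likewise for $|\nu|$.

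Next I would verify the denseness requirement (\ref{densitycondition}) for $\phi(x,t)=e^{ix\cdot t}$. Let $\gamma$ be any finite nonnegative Borel measure on $\bR^d$ and suppose $g\in L^2(\bR^d,d\gamma)$ is orthogonal to $\phi(x,\cdot)$ for every $x\in\bR^d$, i.e.\ $\int_{\bR^d}e^{ix\cdot t}\overline{g(t)}\,d\gamma(t)=0$ for all $x$. Since $\gamma$ is finite, $g\in L^2(\bR^d,d\gamma)\subseteq L^1(\bR^d,d\gamma)$, so $\overline{g}\,d\gamma$ is a finite complex Borel measure whose Fourier--Stieltjes transform vanishes identically; by uniqueness of the Fourier transform of measures, $\overline{g}\,d\gamma=0$, hence $g=0$ $\gamma$-a.e.\ This gives $\overline{\span}\{\phi(x,\cdot):x\in\bR^d\}=L^2(\bR^d,d\gamma)$, which is (\ref{densitycondition}).

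It remains to supply the common Radon--Nikodym representation (\ref{radonnikodym}). Here one cannot use the Lebesgue measure on $\bR^d$ directly, as Theorem \ref{chracterizationintergal2} requires $\gamma$ to be finite; instead I would set $\gamma:=|\mu|+|\nu|$, which equals $w\,dx$ with $w(t):=\|\varphi_1(t)\|_{\cL(\Lambda)}+\|\varphi_2(t)\|_{\cL(\Lambda)}$ and is finite by the estimate in the first paragraph. Define $\Gamma_\mu(t):=\varphi_1(t)/w(t)$ and $\Gamma_\nu(t):=\varphi_2(t)/w(t)$ where $w(t)>0$, and $\Gamma_\mu(t):=\Gamma_\nu(t):=0$ where $w(t)=0$. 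These are $\gamma$-measurable, take values in $\cL_+(\Lambda)$, and satisfy $\|\Gamma_\mu(t)\|_{\cL(\Lambda)},\|\Gamma_\nu(t)\|_{\cL(\Lambda)}\le1$, hence are $\gamma$-Bochner integrable; moreover $\mu(E)=\int_E\varphi_1\,dt=\int_E\Gamma_\mu\,d\gamma$ and $\nu(E)=\int_E\varphi_2\,dt=\int_E\Gamma_\nu\,d\gamma$ for every Borel $E$, since $\varphi_1=\varphi_2=0$ on $\{w=0\}$.

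With all hypotheses in place, Theorem \ref{chracterizationintergal2} gives $\cH_{K_c}\preceq\cH_{G_c}$ if and only if $\Gamma_\mu(t)\preceq\Gamma_\nu(t)$ for $\gamma$-a.e.\ $t$. To finish I would translate this to $\varphi_1,\varphi_2$: the relation $A\preceq B$ on $\cL_+(\Lambda)$ is homogeneous, $A\preceq B\iff cA\preceq cB$ for $c>0$ (the defining conditions $A\xi=B\eta$ and $(A\xi,\xi)_\Lambda=(B\eta,\eta)_\Lambda$ scale, with the same witness $\eta$), so on $\{w>0\}$ the relation $\Gamma_\mu(t)\preceq\Gamma_\nu(t)$ is equivalent to $\varphi_1(t)\preceq\varphi_2(t)$, while on $\{w=0\}$ we have $\varphi_1(t)=\varphi_2(t)=0$ and $0\preceq0$ holds. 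Since $\gamma$ and $dx$ share the same null sets inside $\{w>0\}$ and $\gamma(\{w=0\})=0$, ``$\Gamma_\mu\preceq\Gamma_\nu$ $\gamma$-a.e.'' is equivalent to ``$\varphi_1\preceq\varphi_2$ Lebesgue-a.e.\ on $\bR^d$'', which is the assertion. Most of this is bookkeeping; the only two points that take a moment's thought are the Fourier-uniqueness argument for (\ref{densitycondition}) and the fact that one must invoke Theorem \ref{chracterizationintergal2} with the finite measure $|\mu|+|\nu|$ rather than with Lebesgue measure.
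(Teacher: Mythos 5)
Your proposal is correct and follows essentially the same route as the paper: both reduce to Theorem \ref{chracterizationintergal2} by taking $\gamma(E)=\int_E\bigl(\|\varphi_1(t)\|_{\cL(\Lambda)}+\|\varphi_2(t)\|_{\cL(\Lambda)}\bigr)dt$ as the reference measure, defining the normalized densities $\Gamma_\mu,\Gamma_\nu$ exactly as you do, and then translating the $\gamma$-a.e.\ condition back to a Lebesgue-a.e.\ condition on $\varphi_1,\varphi_2$ via homogeneity of $\preceq$ and the identity of null sets on $\{w>0\}$. The only difference is that you spell out the Fourier-uniqueness argument for the denseness condition (\ref{densitycondition}), which the paper simply asserts as clear.
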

\begin{proof}
As $\varphi_1,\varphi_2$ are $dx$-Bochner integrable,
$$
\int_{\bR^d}\|\varphi_j(t)\|_{\cL(\Lambda)}dt<+\infty,\ \ j=1,2.
$$
Define a finite nonnegative Borel measure $\gamma$ on $\bR^d$ by setting for each Borel subset $E$ in $\bR^d$
$$
\gamma(E):=\int_E \|\varphi_1(t)\|_{\cL(\Lambda)}+\|\varphi_2(t)\|_{\cL(\Lambda)}dt.
$$
Evidently, $K_c,G_c$ have the form
$$
K_c(x,y)=\int_{\bR^d}e^{i(x-y)\cdot t}\Gamma_1(t)d\gamma(t),\ \ G_c(x,y)=\int_{\bR^d}e^{i(x-y)\cdot t}\Gamma_2(t)d\gamma(t),\ \ \ x,y\in\bR^d,
$$
where for $j=1,2$,
$$
\Gamma_j(t):=\left\{
\begin{array}{ll}\displaystyle{\frac{\varphi_j(t)}{\|\varphi_1(t)\|_{\cL(\Lambda)}+\|\varphi_2(t)\|_{\cL(\Lambda)}}},& \mbox{ if }\|\varphi_1(t)\|_{\cL(\Lambda)}+\|\varphi_2(t)\|_{\cL(\Lambda)}>0,\\
0,&\mbox{otherwise.}
\end{array}
\right.
$$
It is also clear that $\span\{e^{ix\cdot t}:x\in\bR^d\}$ is dense in $L^2(\bR^d,d\gamma)$. By Theorem \ref{chracterizationintergal2}, $\cH_{K_c}\preceq\cH_{G_c}$ if and only if $\Gamma_1\preceq\Gamma_2$ $\gamma-\ae$ Note that $\Gamma_1(t)\preceq\Gamma_2(t)$ if and only if $\varphi_1(t)\preceq\varphi_2(t)$. If $\varphi_1\preceq\varphi_2$ $dx-\ae$then $\Gamma_1\preceq\Gamma_2$ $\gamma-\ae$as $\gamma$ is absolutely continuous with respect to the Lebesgue measure. On the other hand, suppose that $\Gamma_1\preceq\Gamma_2$ $\gamma-\ae$ Set
$$
E:=\{t\in\bR^d: \|\varphi_1(t)\|_{\cL(\Lambda)}+\|\varphi_2(t)\|_{\cL(\Lambda)}>0\}.
$$
For $t\in E^c$, $\varphi_1(t)=\varphi_2(t)=0$, and thus, $\varphi_1(t)\preceq\varphi_2(t)$. Assume that there exists a Borel subset $F\subseteq\bR^d$ with a positive Lebesgue measure on which $\varphi_1(t)\npreceq\varphi_2(t)$. Then $F\subseteq E$. We reach that $\gamma(F)>0$ and $\Gamma_1(t)\npreceq\Gamma_2(t)$ for $t\in F$, contradicting the fact that $\Gamma_1\preceq\Gamma_2$ $\gamma-\ae$
\end{proof}

For $K_s,G_s$, we have the following result.

\begin{prop}\label{characterizeksgs}
There holds $\cH_{K_s}\preceq\cH_{G_s}$ if and only if
\begin{description}
\item[(1)] $\{t_j:j\in\bJ_1\}\subseteq \{t_k:k\in\bJ_2\}$;

\item[(2)] for each $j\in\bJ_1$, $A_j\preceq B_j$. Here, re-indexing by condition (1) if necessary, we may assume that $\bJ_1\subseteq\bJ_2$.
\end{description}
\end{prop}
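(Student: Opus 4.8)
The plan is to reduce the statement to the scalar-valued refinement theory of \cite{XZ2} exactly as in Proposition \ref{characterizekcgc}, but using the purely atomic structure of the measures instead of absolute continuity. The two measures underlying $K_s$ and $G_s$ are $\mu_s=\sum_{j\in\bJ_1}A_j\delta_{t_j}$ and $\nu_s=\sum_{k\in\bJ_2}B_k\delta_{t_k}$, both $\cL_+(\Lambda)$-valued of bounded variation. I would first fix the common ground measure $\gamma:=\sum_{j\in\bJ_1}\|A_j\|_{\cL(\Lambda)}\delta_{t_j}+\sum_{k\in\bJ_2}\|B_k\|_{\cL(\Lambda)}\delta_{t_k}$, which is a finite nonnegative (purely atomic) Borel measure supported on the countable set $\{t_j:j\in\bJ_1\}\cup\{t_k:k\in\bJ_2\}$. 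Then $\mu_s$ and $\nu_s$ both have the Radon--Nikodym property with respect to $\gamma$ since on each atom the density is just the value of the operator divided by the scalar weight; call the densities $\Gamma_\mu,\Gamma_\nu:\Omega\to\cL_+(\Lambda)$. Also $\span\{e^{ix\cdot t}:x\in\bR^d\}$ is dense in $L^2(\bR^d,d\gamma)$ because on a countable set of atoms a linear combination of characters can match any prescribed finite pattern (the relevant exponential-sum argument, or a direct Vandermonde-type argument on finitely many atoms followed by an $\ell^2$ approximation). Hence the hypotheses of Theorem \ref{chracterizationintergal2} are met with $\phi(x,t)=e^{ix\cdot t}$, and $\cH_{K_s}\preceq\cH_{G_s}$ if and only if $\Gamma_\mu(t)\preceq\Gamma_\nu(t)$ for $\gamma$-a.e.\ $t$.

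Next I would translate the $\gamma$-a.e.\ condition on densities into the two stated conditions on the data. Since $\gamma$ is atomic, ``$\gamma$-a.e.'' means ``at every atom $t$ of $\gamma$''. If $t=t_j$ for some $j\in\bJ_1$ but $t\notin\{t_k:k\in\bJ_2\}$, then $\Gamma_\nu(t)=0$ while $\Gamma_\mu(t)$ is a positive multiple of the nonzero operator $A_j$; and $A_j\preceq 0$ forces $A_j=0$ (take $\xi$ in a direction where $(A_j\xi,\xi)_\Lambda>0$: one needs $\eta$ with $0\cdot\eta=A_j\xi$ and $(0\eta,\eta)=(A_j\xi,\xi)$, i.e.\ $(A_j\xi,\xi)_\Lambda=0$, a contradiction). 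This is impossible, so necessarily $\{t_j:j\in\bJ_1\}\subseteq\{t_k:k\in\bJ_2\}$, which is condition (1). After re-indexing so that $\bJ_1\subseteq\bJ_2$ and the labels agree on shared points, at the atom $t_j$ ($j\in\bJ_1$) we have $\Gamma_\mu(t_j)=c_jA_j$ and $\Gamma_\nu(t_j)=c_jB_j$ for the same positive scalar $c_j=1/(\|A_j\|_{\cL(\Lambda)}+\|B_j\|_{\cL(\Lambda)})$, and I would observe that $c_jA_j\preceq c_jB_j\Longleftrightarrow A_j\preceq B_j$ directly from the definition of $\preceq$ (the witnessing $\eta$ for a scaled pair is the witnessing $\eta$ for the unscaled pair). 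At atoms $t_k$ with $k\in\bJ_2\setminus\bJ_1$ we have $\Gamma_\mu(t_k)=0\preceq\Gamma_\nu(t_k)$ automatically (choose $\eta=0$). So the density condition is equivalent to (1) together with $A_j\preceq B_j$ for all $j\in\bJ_1$, which is (2). The converse direction is the same chain of equivalences run backwards.

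The main obstacle I anticipate is the density claim that $\span\{e^{ix\cdot t}:x\in\bR^d\}$ is dense in $L^2(\bR^d,d\gamma)$ when $\gamma$ has countably (possibly infinitely) many atoms: for finitely many distinct frequencies this is an elementary Vandermonde/interpolation fact, but with infinitely many atoms one must argue that the closure still exhausts the space. The clean way is: if $h\in L^2(\bR^d,d\gamma)$ is orthogonal to all $e^{ix\cdot t}$, then for each fixed finite subset of atoms one gets, by Vandermonde nonsingularity, that $h$ vanishes there; letting the finite set grow shows $h=0$ $\gamma$-a.e. A secondary, minor point to be careful about is the re-indexing bookkeeping in passing from the unordered families $\{t_j\}$, $\{t_k\}$ to aligned index sets $\bJ_1\subseteq\bJ_2$, which is exactly what condition (1) licenses; everything else is a routine unwinding of Theorem \ref{chracterizationintergal2} and the definition of $\preceq$ for operators.
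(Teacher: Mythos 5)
Your proposal is correct and follows essentially the same route as the paper: both introduce the discrete scalar dominating measure $\gamma$ weighted by $\|A_j\|_{\cL(\Lambda)}$ and $\|B_k\|_{\cL(\Lambda)}$ on the union of the two supports, take Radon--Nikodym densities, invoke Theorem \ref{chracterizationintergal2}, and then unwind the atomwise condition $\Gamma_A\preceq\Gamma_B$ into conditions (1) and (2) --- a step the paper dismisses as ``straightforward'' and you carry out explicitly. One small caveat: your density verification should rest on the injectivity of the Fourier transform of finite discrete (complex) measures or on Bohr averaging rather than on a finite Vandermonde argument, since the orthogonality relation couples all (possibly infinitely many) atoms at once; the paper omits this verification altogether.
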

\begin{proof}
Introduce a discrete scalar-valued Borel measure $\gamma$ that is supported on $\{t_j:j\in\bJ_1\}\cup\{t_k:k\in\bJ_2\}$ by setting
$$
\gamma(\{t_k\}):=\left\{\begin{array}{ll}
\|A_k\|_{\cL(\Lambda)}+\|B_k\|_{\cL(\Lambda)},& k\in\bJ_1\cap\bJ_2,\\
\|B_k\|_{\cL(\Lambda)},& k\in\bJ_2\setminus \bJ_1,\\
\|A_k\|_{\cL(\Lambda)},& k\in\bJ_1\setminus \bJ_2.
\end{array}
\right.
$$
We also let
$$
\Gamma_A(t_j):=\frac{A_j}{\gamma(\{t_j\})},\ \ j\in\bJ_1\mbox{ and }\Gamma_A(t_k):=\frac{B_k}{\gamma(\{t_k\})},\ \ k\in\bJ_2.
$$
They are discrete $\cL(\Lambda)$-valued functions supported on $\{t_j:j\in\bJ_1\}$ and $\{t_k:k\in\bJ_2\}$, respectively. We reach the following integral representation:
$$
K_s(x,y)=\int_{\bR^d}e^{i(x-y)\cdot t}\Gamma_A(t)d\gamma(t)\mbox{ and }G_s(x,y)=\int_{\bR^d}e^{i(x-y)\cdot t}\Gamma_B(t)d\gamma(t),\ \ x,y\in\bR^d.
$$
By Theorem \ref{chracterizationintergal2}, $\cH_{K_s}\preceq\cH_{G_s}$ if and only if $\Gamma_A\preceq\Gamma_B$ $\gamma-\ae$ It is straightforward to verify that the latter is equivalent to conditions (1)-(2).
\end{proof}

\subsection{Hessian of Scalar-valued reproducing kernels}

Propositions \ref{characterizekcgc} and \ref{characterizeksgs} were established based on Theorem \ref{chracterizationintergal2}. In this subsection, we shall consider special translation invariant reproducing kernels and establish the characterization of refinement using Theorem \ref{chracterizationintergal1}.

Let $k$ be a continuously differentiable translation invariant reproducing kernel on $\bR^d$. We consider the following matrix-valued functions
\begin{equation}\label{hessian1}
K(x,y):=\nabla_{xy}^2k(x,y):=\biggl[\frac{\partial^2k}{\partial_{x_j}\partial_{y_k}}(x,y):j,k\in\bN_d\biggr],\ \ x,y\in\bR^d.
\end{equation}
To ensure that $K$ is an $\cL(\bC^d)$-valued reproducing kernels on $\bR^d$, we make use of the Bochner theorem to get some finite nonnegative Borel measure $\mu$ on $\bR^d$ such that
\begin{equation}\label{hessiank}
k(x,y)=\int_{\bR^d}e^{i(x-y)\cdot t}d\mu(t),\ \ x,y\in\bR^d
\end{equation}
and impose the requirement that
\begin{equation}\label{differentiable}
\int_{\bR^d}tt^Td\mu(t)<+\infty.
\end{equation}
One sees by the Lebesgue dominated convergence theorem that
\begin{equation}\label{hessian1re2}
K(x,y)=\int_{\bR^d}e^{i(x-y)\cdot t} tt^Td\mu(t),\ \ x,y\in\bR^d,
\end{equation}
where we view $t\in\bR^d$ as a $d\times 1$ vector and $t^T$ denotes its transpose $[t_1,t_2,\ldots,t_d]$. By the general integral representation (\ref{kernelkintegral1}) of operator-valued reproducing kernels, $K$ defined by (\ref{hessian1}) is an $\cL(\bC^d)$-valued reproducing kernel on $\bR^d$. Matrix-valued translation invariant reproducing kernels of the form (\ref{hessian1}) are useful for the development of divergence-free kernel methods for solving some special partial differential equations (see, for example, \cite{Lowitzsh,Wendland} and the references therein). Another class of kernels constructed from the Hessian of a scalar-valued translation invariant reproducing kernel is widely applied to the learning of a multivariate function together with its gradient simultaneously \cite{MW,MZ,YC}. Such applications make use of kernels of the form
\begin{equation}\label{hessian2}
\overline{K}(x,y):=\left[
\begin{array}{cc}
k(x,y)&(\nabla_y k(x,y))^*\\
\nabla_xk(x,y)&\nabla^2_{xy}k(x,y)
\end{array}
\right].
\end{equation}
One sees that under condition (\ref{differentiable})
$$
\overline{K}(x,y)=\int_{\bR^d}e^{i(x-y)\cdot t} \rho(t)\rho(t)^*d\mu(t),\ \ x,y\in\bR^d,
$$
where
$$
\rho(t)=[1,it_1,it_2,\ldots,it_d]^T,\ \ t\in\bR^d.
$$
We aim at refining matrix-valued reproducing kernels of the forms (\ref{hessian1}) and (\ref{hessian2}) in this subsection. Specifically, we let $\nu$ be another finite nonnegative Borel measure on $\bR^d$ satisfying
\begin{equation}\label{differentiable2}
\int_{\bR^d}tt^Td\nu(t)<+\infty
\end{equation}
and define for $x,y\in\bR^d$
\begin{equation}\label{hessiang}
g(x,y):=\int_{\bR^d}e^{i(x-y)\cdot t}d\nu(t),\ G(x,y):=\nabla^2_{xy}g(x,y),\ \overline{G}(x,y):=\left[
\begin{array}{cc}
g(x,y)&(\nabla_y g(x,y))^*\\
\nabla_xg(x,y)&\nabla^2_{xy}g(x,y)
\end{array}
\right].
\end{equation}
Our purpose is to characterize $\cH_K\preceq\cH_G$ and $\cH_{\overline{K}}\preceq\cH_{\overline{G}}$ in terms of $k,g$ and $\mu,\nu$.

\begin{theorem}\label{refinehessian}
Let $\mu,\nu$ be finite nonnegative Borel measures on $\bR^d$ satisfying (\ref{differentiable}) and (\ref{differentiable2}), and $k,g$ defined by (\ref{hessiank}) and (\ref{hessiang}). Then $K,G,\overline{K},\overline{G}$ are matrix-valued translation invariant reproducing kernels on $\bR^d$. The four relationships $\cH_K\preceq\cH_G$, $\cH_{\overline{K}}\preceq\cH_{\overline{G}}$, $\cH_k\preceq\cH_g$, and $\mu\preceq\nu$ are equivalent.
\end{theorem}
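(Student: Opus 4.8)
The natural route is through Theorem \ref{chracterizationintergal1}: I would present each of the three pairs $(k,g)$, $(K,G)$, and $(\overline{K},\overline{G})$ as a pair of kernels of the form (\ref{kernelkintegral1})--(\ref{kernelgintegral1}) built from one common feature map together with the two measures $\mu$ and $\nu$, then verify the two hypotheses of that theorem (square-integrability of the feature map and denseness of the span of the $\phi(x,\cdot)\xi$). Once this is carried out, Theorem \ref{chracterizationintergal1} says that \emph{each} of $\cH_k\preceq\cH_g$, $\cH_K\preceq\cH_G$, $\cH_{\overline{K}}\preceq\cH_{\overline{G}}$ is equivalent to $\mu\preceq\nu$, and the asserted four-way equivalence is then immediate by transitivity.

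The feature maps, all with $\Omega=\bR^d$ and $\cW=\bC$, would be as follows. For $k,g$: $\Lambda=\bC$ and $\phi(x,t):=e^{ix\cdot t}$, so that $\phi(y,t)^*\phi(x,t)=e^{i(x-y)\cdot t}$ and (\ref{hessiank}), (\ref{hessiang}) are instances of (\ref{kernelkintegral1}), (\ref{kernelgintegral1}). For $K,G$: viewing $t\in\bR^d$ as a column vector, $\Lambda=\bC^d$ and $\phi(x,t):=e^{ix\cdot t}\,t^T\in\cL(\bC^d,\bC)$; since $t$ is real, $\phi(y,t)^*\phi(x,t)=e^{i(x-y)\cdot t}\,t\,t^T$, matching (\ref{hessian1re2}). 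For $\overline{K},\overline{G}$: $\Lambda=\bC^{d+1}$ and $\phi(x,t):=e^{ix\cdot t}\,\rho(t)^*$ with $\rho(t)=[1,it_1,\dots,it_d]^T$, so that $\phi(y,t)^*\phi(x,t)=e^{i(x-y)\cdot t}\,\rho(t)\rho(t)^*$, the integrand appearing in $\overline{K}$. That $K,G,\overline{K},\overline{G}$ are (translation invariant) reproducing kernels is then contained in the general integral representation (\ref{kernelkintegral1}), the $e^{i(x-y)\cdot t}$ factor giving translation invariance.

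Checking the hypotheses of Theorem \ref{chracterizationintergal1} amounts to two routine verifications and one that needs care. First, $\phi(x,\cdot)$ lies in $L^2(\bR^d,\cL(\Lambda,\bC),d\mu)$ and in $L^2(\bR^d,\cL(\Lambda,\bC),d\nu)$: the squared feature-map norms $\|\phi(x,t)\|^2$ equal $1$, $|t|^2$, and $1+|t|^2$ in the three cases, so integrability is immediate from the finiteness of $\mu,\nu$ and the moment conditions (\ref{differentiable}), (\ref{differentiable2}). Second, denseness of $\span\{\phi(x,\cdot)\xi:x\in\bR^d,\ \xi\in\Lambda\}$ in $L^2(\bR^d,\bC,d\mu)$ and $L^2(\bR^d,\bC,d\nu)$ is easy for $k,g$ and for $\overline{K},\overline{G}$: with $\xi=1$, respectively $\xi=e_0$, the span already contains $\{e^{ix\cdot t}:x\in\bR^d\}$, and a finite complex Borel measure on $\bR^d$ all of whose Fourier coefficients $\int e^{ix\cdot t}\,d\lambda(t)$ vanish must be zero.

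The point requiring care, and the one I expect to be the main obstacle, is denseness for the Hessian kernel $K$, whose feature map $t\mapsto e^{ix\cdot t}\,t^T$ degenerates at $t=0$. If $h$ in the relevant $L^2$ space is orthogonal to every $\phi(x,\cdot)\xi$, then for each fixed $\xi$ the finite complex measure $(t\cdot\xi)\overline{h(t)}\,d\mu(t)$ has vanishing Fourier coefficients, so $(t\cdot\xi)h(t)=0$ $\mu$-a.e.; running $\xi$ through $e_1,\dots,e_d$ gives $t_jh(t)=0$ $\mu$-a.e.\ for every $j$, whence $h$ vanishes $\mu$-a.e.\ away from the origin. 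Since the origin contributes nothing to $K$ or $G$ (there $e^{i(x-y)\cdot t}\,t\,t^T=0$), this causes no trouble provided $\mu$ and $\nu$ assign no mass to $\{0\}$, which holds for the Hessian kernels of interest such as those built from Gaussians; then denseness follows. With all hypotheses in place, three applications of Theorem \ref{chracterizationintergal1} yield $\cH_k\preceq\cH_g\Leftrightarrow\mu\preceq\nu$, $\cH_K\preceq\cH_G\Leftrightarrow\mu\preceq\nu$, and $\cH_{\overline{K}}\preceq\cH_{\overline{G}}\Leftrightarrow\mu\preceq\nu$, which is the theorem.
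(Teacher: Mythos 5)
Your proposal follows the paper's proof of this theorem essentially verbatim: the paper likewise reduces all three pairs to Theorem \ref{chracterizationintergal1} via the feature maps $e^{ix\cdot t}$, $e^{ix\cdot t}t^T$, and $e^{ix\cdot t}\rho(t)^*$, checks square-integrability from (\ref{differentiable})--(\ref{differentiable2}), and invokes denseness of the span of the $\phi(x,\cdot)\xi$. The one point where you go beyond the paper is the denseness check for the Hessian pair, which the paper dismisses as ``straightforward''; your observation that the feature map $t\mapsto e^{ix\cdot t}t^T$ degenerates at the origin is correct and in fact necessary --- if $\mu$ had an atom at $0$ one could take $\nu:=\mu-\mu(\{0\})\delta_0$, giving $K=G$ (hence $\cH_K\preceq\cH_G$) while $\mu\not\preceq\nu$, so the stated four-way equivalence for $K,G$ implicitly requires $\mu(\{0\})=\nu(\{0\})=0$ (no such issue arises for $\overline{K},\overline{G}$, since $\rho(0)\ne 0$). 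Your added hypothesis repairs this, and with it the argument is complete.
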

\begin{proof}
By Theorem \ref{chracterizationintergal1} or a result in \cite{XZ2}, $\cH_k\preceq\cH_g$ if and only if $\mu\preceq\nu$. We shall show by Theorem \ref{chracterizationintergal1} that $\cH_K\preceq\cH_G$ if and only if $\mu\preceq\nu$. The equivalence of $\cH_{\overline{K}}\preceq\cH_{\overline{G}}$ and $\mu\preceq\nu$ can be proved similarly. Set
$$
\phi(x,t):=e^{ix\cdot t}t^T,\ \ x,t\in\bR^d.
$$
Then for each $x,t\in\bR^d$, $\phi(x,t)$ is a linear functional from $\bC^d$ to $\bC$. We observe by (\ref{hessian1re2}) that (\ref{kernelkintegral1}) holds true. So does (\ref{kernelgintegral1}). To apply Theorem \ref{chracterizationintergal1}, it remains to verify that $\span\{\phi(x,\cdot)\xi:x\in\bR^d,\ \xi\in\bC^d\}$ is dense in the Hilbert space $L^2(\bR^d,d\mu)$, which is straightforward. The claim follows immediately from Theorem \ref{chracterizationintergal1}.
\end{proof}

\subsection{Transformation reproducing kernels}

Let us consider a particular class of matrix-valued reproducing kernels whose universality was studied in \cite{CMPY}. The kernels we shall construct are from an input space $X$ to output space $\Lambda=\bC^n$, where $n\in\bN$. To this end, we let $k,g$ be two scalar-valued reproducing kernels on another input space $Y$ and $T_p$ be mappings from $X$ to $Y$, $p\in\bN_n$. Set
\begin{equation}\label{transformationkernel}
K(x,y):=[k(T_px,T_qy):p,q\in\bN_n],\ \ G(x,y):=[g(T_px,T_qy):p,q\in\bN_n],\ \ x,y\in X.
\end{equation}
It is known that $K,G$ defined above are indeed $\cL(\bC^n)$-valued reproducing kernels \cite{CMPY}. This also becomes clear in the proof below. We are interested in the conditions for $\cH_K\preceq\cH_G$ to hold.

\begin{prop}\label{characterizetransformation}
Let $K,G$ be defined by (\ref{transformationkernel}). Then $\cH_K\preceq\cH_G$ if and only if $\cH_{\overline{k}}\preceq\cH_{\overline{g}}$, where $\bar{k},\bar{g}$ are the restriction of $k,g$ on $\cup_{p=1}^n T_p(X)$. In particular, if
\begin{equation}\label{fullmap}
\bigcup_{p=1}^n T_p(X)=Y
\end{equation}
then $\cH_K\preceq\cH_G$ if and only if $\cH_k\preceq\cH_g$.
\end{prop}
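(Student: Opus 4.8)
\noindent\emph{Proof strategy.}
Set $Z:=\bigcup_{p=1}^n T_p(X)\subseteq Y$, so that $\bar k,\bar g$ are scalar-valued reproducing kernels on $Z$. The plan is to realize $K$ and $G$ through feature maps whose feature spaces are exactly $\cH_{\bar k}$ and $\cH_{\bar g}$, and then to quote Theorem \ref{secondcharacterization}. Recalling that $\cH_{\bar k}=\overline{\span}\{\bar k(u,\cdot):u\in Z\}$, put $\phi(u):=\bar k(u,\cdot)\in\cH_{\bar k}$ and $\psi(u):=\bar g(u,\cdot)\in\cH_{\bar g}$ for $u\in Z$, and define
$$
\Phi(x)\xi:=\sum_{p=1}^n\xi_p\,\phi(T_px)\in\cH_{\bar k},\qquad
\Phi'(x)\xi:=\sum_{p=1}^n\xi_p\,\psi(T_px)\in\cH_{\bar g},\qquad x\in X,\ \xi\in\bC^n.
$$
These are well-defined bounded operators because $T_px\in Z$. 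A short computation based on the reproducing property (it yields $(\Phi(y)^*w)_p=w(T_py)$) shows that $K(x,y)=\Phi(y)^*\Phi(x)$, which both re-proves, via (\ref{featureK}), that $K,G$ are $\cL(\bC^n)$-valued reproducing kernels as in \cite{CMPY}, and puts us in the setting of Theorem \ref{secondcharacterization}. The decisive observation is that $\cW_\Phi=\overline{\span}\{\phi(T_px):x\in X,\ p\in\bN_n\}=\overline{\span}\{\bar k(u,\cdot):u\in Z\}=\cH_{\bar k}$, and likewise $\cW'_{\Phi'}=\cH_{\bar g}$; this is exactly why one is forced to take $\cH_{\bar k},\cH_{\bar g}$ --- rather than $\cH_k,\cH_g$ --- as feature spaces.

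With the density conditions $\cW_\Phi=\cH_{\bar k}$ and $\cW'_{\Phi'}=\cH_{\bar g}$ in hand, Theorem \ref{secondcharacterization} says that $\cH_K\preceq\cH_G$ holds if and only if there is a bounded operator $T:\cH_{\bar g}\to\cH_{\bar k}$ with $T\Phi'(x)=\Phi(x)$ for all $x\in X$ and with $T^*$ isometric. Evaluating $T\Phi'(x)=\Phi(x)$ on the standard basis vectors $\xi=e_p$ shows the intertwining relation is equivalent to $T\psi(T_px)=\phi(T_px)$ for all $x\in X$ and $p\in\bN_n$, that is, to $T\bar g(u,\cdot)=\bar k(u,\cdot)$ for every $u\in Z$, because $Z=\{T_px:x\in X,\ p\in\bN_n\}$. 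But the existence of a bounded $T:\cH_{\bar g}\to\cH_{\bar k}$ with this last property and with $T^*$ isometric is precisely what Theorem \ref{secondcharacterization}, applied in the scalar case $\Lambda=\bC$ to the canonical (automatically dense-range) feature maps $u\mapsto\bar g(u,\cdot)$ and $u\mapsto\bar k(u,\cdot)$ of $\bar g$ and $\bar k$, characterizes as $\cH_{\bar k}\preceq\cH_{\bar g}$. Hence $\cH_K\preceq\cH_G$ if and only if $\cH_{\bar k}\preceq\cH_{\bar g}$. Finally, if $\bigcup_{p=1}^n T_p(X)=Y$ then $Z=Y$, so $\bar k=k$, $\bar g=g$, $\cH_{\bar k}=\cH_k$, $\cH_{\bar g}=\cH_g$, and the special case follows immediately.

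I expect the main obstacle to be bookkeeping rather than conceptual: correctly verifying the feature identity $K=\Phi(\cdot)^*\Phi(\cdot)$ with its index conventions and, above all, pinning down the density condition, which is what forces the passage from $\cH_k,\cH_g$ to $\cH_{\bar k},\cH_{\bar g}$; one must also check that the bounded operator $T$ produced by Theorem \ref{secondcharacterization} in the operator-valued setting is literally the same operator it produces in the scalar setting, so that the two characterizations match term by term. Minor care is needed to confirm $\phi(T_px)\in\cH_{\bar k}$ (which holds since $T_px\in Z$) and that $Z$ is exactly the union of the ranges of the $T_p$, which is what licenses restating the intertwining relation pointwise on $Z$.
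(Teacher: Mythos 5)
Your proof is correct, and it reaches the conclusion by a route that is recognizably parallel to, but not identical with, the paper's. The shared core is the feature map $\Phi(x)\xi=\sum_{p}\xi_p\,\bar k(T_px,\cdot)$ and the density verification $\cW_\Phi=\overline{\span}\{\bar k(u,\cdot):u\in Z\}=\cH_{\bar k}$; the paper performs exactly this computation (with arbitrary feature maps $\Phi_1,\Phi_2$ in place of the canonical ones). Where you diverge is in which general characterization you then invoke: the paper first scalarizes via Proposition \ref{thirdcharacterization}, passing to $\tilde K,\tilde G$ on the extended input space $X\times\bC^n$, and then uses Lemma \ref{rkhsfeature} to match each $u\in\cW_1$ with a $v\in\cW_2$ having the same induced function and the same norm, checking that this matching condition is the same for $(\tilde K,\tilde G)$ as for $(k,g)$. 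You instead stay entirely in the operator-valued setting and apply Theorem \ref{secondcharacterization} twice -- once to $K,G$ with feature spaces $\cH_{\bar k},\cH_{\bar g}$, once to $\bar k,\bar g$ with their canonical feature maps -- and observe that the two intertwining conditions $T\Phi'(x)=\Phi(x)$ and $T\bar g(u,\cdot)=\bar k(u,\cdot)$, $u\in Z$, are literally the same because $Z=\{T_px:x\in X,\,p\in\bN_n\}$, so the same operator $T$ witnesses both refinements. Your version buys a slightly cleaner endgame (one condition visibly equal to another, rather than an element-by-element matching argument), at the cost of committing to the specific feature spaces $\cH_{\bar k},\cH_{\bar g}$; the paper's version is marginally more flexible in allowing arbitrary feature maps and is the template it reuses for Proposition \ref{characterizetransformation2}. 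All the delicate points you flagged -- the index convention in $K=\Phi(\cdot)^*\Phi(\cdot)$, the density condition forcing $\cH_{\bar k}$ rather than $\cH_k$, and the identification of $Z$ with the set of points $T_px$ -- do check out.
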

\begin{proof}
It is legitimate to assume that (\ref{fullmap}) holds true as otherwise, we may replace $Y$ by $\cup_{p=1}^n T_p(X)$, and $k,g$ by $\bar{k},\bar{g}$, respectively.

Choose arbitrary feature maps and feature spaces $\Phi_1:Y\to \cW_1$ for $k$ and $\Phi_2:Y\to\cW_2$ for $g$ such that
\begin{equation}\label{characterizetransformationeq1}
\overline{\span}\Phi_j(Y)=\cW_j,\ \ j=1,2.
\end{equation}
By Proposition \ref{thirdcharacterization}, $\cH_K\preceq\cH_G$ if and only if $\cH_{\tilde{K}}\preceq\cH_{\tilde{G}}$. We observe for all $x,y\in X$ and $\xi,\eta\in\bC^n$ that
$$
\begin{array}{rl}
\tilde{K}((x,\xi),(y,\eta))&=(K(x,y)\xi,\eta)_{\bC^n}=\displaystyle{\sum_{p=1}^n\sum_{q=1}^n \xi_p\overline{\eta_q}k(T_px,T_qy)}\\
&\displaystyle{=\sum_{p=1}^n\sum_{q=1}^n\xi_p\overline{\eta_q}(\Phi_1(T_px),\Phi_1(T_qy))_{\cW_1}}\\
&\displaystyle{=\biggl(\sum_{p=1}^n\xi_p\Phi_1(T_px),\sum_{q=1}^n\eta_q\Phi_1(T_qy)\biggr)_{\cW_1}}.
\end{array}
$$
Thus, $\tilde{\Phi}_1:X\times\bC^n\to \cW_1$ defined by
$$
\tilde{\Phi}_1(x,\xi):=\sum_{p=1}^n \xi_p\Phi_1(T_px),\ \ x\in X,\ \xi\in\bC^n
$$
is a feature map for $\tilde{K}$. We next verify that $\span\{\tilde{\Phi}_1(x,\xi):x\in X,\ \xi\in\bC^n\}$ is dense in $\cW_1$. Assume that $u\in\cW_1$ is orthogonal to this linear span, that is,
$$
\biggl(u,\sum_{p=1}^n\xi_p\Phi_1(T_px)\biggr)_{\cW_1}=0\mbox{ for all }x\in X,\ \xi\in \bC^n.
$$
Then we have $(u,\Phi_1(T_px))_{\cW_1}=0$ for all $x\in X$ and $p\in\bN_n$. It follows from (\ref{fullmap}) and (\ref{characterizetransformationeq1}) that $u=0$. Similar facts hold for $\tilde{G}$.

By Lemma \ref{rkhsfeature}, $\cH_{\tilde{K}}\preceq\cH_{\tilde{G}}$ if and only if for every $u\in\cW_1$, there exists $v\in\cW_2$ such that
\begin{equation}\label{characterizetransformationeq2}
\biggl(u,\sum_{p=1}^n\xi_p\Phi_1(T_px)\biggr)_{\cW_1}=\biggl(v,\sum_{p=1}^n\xi_p\Phi_2(T_px)\biggr)_{\cW_2}\mbox{ for all }x\in X
\end{equation}
and
\begin{equation}\label{equalnorm}
\|u\|_{\cW_1}=\|v\|_{\cW_2}.
\end{equation}
Recall also that $\cH_k\preceq\cH_g$ if and only if for all $u\in\cW_1$ there exists some $v\in\cW_2$ satisfying (\ref{equalnorm}) and
\begin{equation}\label{characterizetransformationeq3}
(u,\Phi_1(y))_{\cW_1}=(v,\Phi_2(y))_{\cW_2}\mbox{ for all }y\in Y.
\end{equation}
Clearly, (\ref{characterizetransformationeq3}) implies (\ref{characterizetransformationeq2}). Conversely, if (\ref{characterizetransformationeq2}) holds true then we get that
$$
(u,\Phi_1(T_px))_{\cW_1}=(v,\Phi_2(T_px))_{\cW_2}\mbox{ for all }x\in X\mbox{ and }p\in\bN_n,
$$
which together with (\ref{fullmap}) implies (\ref{characterizetransformationeq3}). We conclude that $\cH_{\tilde{K}}\preceq\cH_{\tilde{G}}$ if and only if $\cH_k\preceq\cH_g$.
\end{proof}

A more general case of refinement of transformation reproducing kernels is discussed below. It can be proved by arguments similar to those for the previous proposition.

\begin{prop}\label{characterizetransformation2}
Let $T_p,S_p$ be mappings from $X$ to $Y$ and $k,g$ be scalar-valued reproducing kernels on $Y$. Define
$$
K(x,y):=[k(T_px,T_qy):p,q\in\bN_n],\ \ G(x,y):=[g(S_px,S_qy):p,q\in\bN_n],\ \ x,y\in X.
$$
Suppose that for all $p\in\bN_n$, $\span\{k(T_px,\cdot):x\in X\}$ and $\span\{g(S_px,\cdot):x\in X\}$ are dense in $\cH_k$ and $\cH_g$, respectively. Then $\cH_K\preceq\cH_G$ if and only if $\cH_{k_p}\preceq\cH_{g_p}$ for all $p\in\bN_n$, where
$$
k_p(x,y):=k(T_px,T_py),\ \ g_p(x,y):=g(S_px,S_py),\ \ \ x,y\in X.
$$
\end{prop}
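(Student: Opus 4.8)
The plan is to follow the proof of Proposition \ref{characterizetransformation} almost verbatim, replacing the single family $\{T_p\}$ by the two families $\{T_p\}$ and $\{S_p\}$. First I would use Proposition \ref{thirdcharacterization} to replace $\cH_K\preceq\cH_G$ by $\cH_{\tilde K}\preceq\cH_{\tilde G}$ on the extended input space $X\times\bC^n$. Computing $\tilde K((x,\xi),(y,\eta))=(K(x,y)\xi,\eta)_{\bC^n}=\sum_{p,q}\xi_p\overline{\eta_q}\,k(T_px,T_qy)$ and invoking the reproducing property of $k$ shows that $\tilde\Phi_1(x,\xi):=\sum_{p=1}^n\xi_p k(T_px,\cdot)\in\cH_k$ is a feature map for $\tilde K$ with feature space $\cW_1=\cH_k$, and likewise $\tilde\Phi_2(x,\xi):=\sum_{p=1}^n\xi_p g(S_px,\cdot)\in\cH_g$ is a feature map for $\tilde G$ with feature space $\cW_2=\cH_g$. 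Taking $\xi=e_p$ (the $p$-th coordinate vector) shows $\span\{\tilde\Phi_1(x,\xi):x\in X,\ \xi\in\bC^n\}=\span\{k(T_px,\cdot):x\in X,\ p\in\bN_n\}$, which is dense in $\cH_k$ by hypothesis, and similarly for $\tilde\Phi_2$ in $\cH_g$, so the denseness requirements of Theorem \ref{secondcharacterization} are met.

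Next I would apply Theorem \ref{secondcharacterization} (equivalently, Lemma \ref{rkhsfeature} as in the proof of Proposition \ref{characterizetransformation}): $\cH_{\tilde K}\preceq\cH_{\tilde G}$ holds if and only if there is a bounded $T:\cH_g\to\cH_k$ with $T\tilde\Phi_2(x,\xi)=\tilde\Phi_1(x,\xi)$ for all $x,\xi$ and with $T^*$ isometric. By linearity of $T$, the intertwining identity for all $\xi\in\bC^n$ is equivalent to the family of identities $T(g(S_px,\cdot))=k(T_px,\cdot)$ for all $x\in X$, for each fixed $p\in\bN_n$. On the other hand, using the canonical feature maps $x\mapsto k(T_px,\cdot)$ of $k_p$ and $x\mapsto g(S_px,\cdot)$ of $g_p$ — whose denseness conditions are exactly the stated hypotheses — Theorem \ref{secondcharacterization} gives that $\cH_{k_p}\preceq\cH_{g_p}$ if and only if there is a bounded $T_p:\cH_g\to\cH_k$ with $T_p(g(S_px,\cdot))=k(T_px,\cdot)$ for all $x\in X$ and $T_p^*$ isometric. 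The ``only if'' direction of the Proposition is then immediate: a single $T$ witnessing $\cH_{\tilde K}\preceq\cH_{\tilde G}$ serves as every $T_p$.

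The substance of the argument — and the step I expect to be the main obstacle — is the ``if'' direction: given the operators $T_p$, $p\in\bN_n$, one must produce a single bounded $T$ intertwining all $n$ families at once and having isometric adjoint. Since $\span\{g(S_px,\cdot):x\in X\}$ is dense in $\cH_g$ and each $T_p$ is bounded, $T_p$ is uniquely determined by the relations $T_p(g(S_px,\cdot))=k(T_px,\cdot)$, $x\in X$; the crux is to show these constraints force $T_1=\cdots=T_n$, for then the common operator intertwines every family and $\cH_{\tilde K}\preceq\cH_{\tilde G}$ follows, whence $\cH_K\preceq\cH_G$ by Proposition \ref{thirdcharacterization}. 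I would try to establish this coincidence by writing $T_p=R_p^*$, where $R_p:\cH_k\to\cH_g$ is the isometry sending $u$ to the unique $v\in\cH_g$ with $v(S_px)=u(T_px)$ for all $x\in X$ (uniqueness again from density), and then arguing that $R_p(u)(S_qx)=u(T_qx)$ must in fact hold for every $q$, not merely $q=p$. Proving this cross-index compatibility from the per-$p$ isometries and the denseness of each family $\{g(S_px,\cdot):x\}$ and $\{k(T_px,\cdot):x\}$ is the delicate point; it is precisely here that the isometry of each $T_p^*$ and the full strength of the denseness hypotheses would have to enter. Once the $T_p$ are shown to coincide, the rest is a routine transcription of the proof of Proposition \ref{characterizetransformation}.
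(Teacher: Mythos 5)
Your reduction to the extended input space, the identification of the feature maps $\tilde\Phi_1,\tilde\Phi_2$ with feature spaces $\cH_k,\cH_g$, the verification of the denseness conditions, and the ``only if'' direction are all correct and follow the route the paper intends (the paper itself only remarks that the proposition ``can be proved by arguments similar to those for'' Proposition \ref{characterizetransformation}). You have also put your finger on exactly the right spot: the ``if'' direction needs a \emph{single} bounded $T:\cH_g\to\cH_k$ with isometric adjoint satisfying $T(g(S_px,\cdot))=k(T_px,\cdot)$ for all $x$ and \emph{all} $p$ simultaneously, whereas the hypotheses only supply one such operator $T_p$ per index $p$.

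The gap, however, is not merely delicate: the coincidence $T_1=\cdots=T_n$ you hope to extract cannot be proved, because it fails under the stated hypotheses. Take $X=Y=\bR$, $n=2$, $k=g$ the Gaussian kernel $e^{-(x-y)^2}$, $T_1=T_2=S_1=\mathrm{id}$ and $S_2x:=x+1$. All denseness hypotheses hold, and $k_p=g_p$ for $p=1,2$ (translation invariance gives $g(x+1,y+1)=k(x,y)$), so $\cH_{k_p}\preceq\cH_{g_p}$ trivially; but the associated intertwiners are $T_1=\mathrm{id}$ and the translation unitary $T_2$ determined by $T_2(k(x+1,\cdot))=k(x,\cdot)$, which are distinct. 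A common intertwiner would have to satisfy both $T(k(x,\cdot))=k(x,\cdot)$ and $T(k(x+1,\cdot))=k(x,\cdot)$ for all $x$, forcing $k(x+1,\cdot)=k(x,\cdot)$, which is false. Concretely, Lemma \ref{rkhsfeature} gives $\cH_K=\{x\mapsto(u(x),u(x)):u\in\cH_k\}$ and $\cH_G=\{x\mapsto(v(x),v(x+1)):v\in\cH_g\}$, and a member of the first set lies in the second only if $u$ is $1$-periodic, which no nonzero function of the Gaussian RKHS is; hence $\cH_K\not\preceq\cH_G$. So the ``if'' direction cannot be completed along the lines you propose, and in fact this example shows that the per-$p$ conditions alone do not imply $\cH_K\preceq\cH_G$: what is actually equivalent to $\cH_K\preceq\cH_G$ is that for every $u\in\cH_k$ there exist one $v\in\cH_g$ with $\|v\|_{\cH_g}=\|u\|_{\cH_k}$ and $v(S_px)=u(T_px)$ for all $x$ and all $p$ at once. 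Proposition \ref{characterizetransformation} escapes this because with $S_p=T_p$ the $n$ interpolation conditions collapse to a single condition on $\bigcup_{p=1}^nT_p(X)$, so the same $v$ automatically serves every $p$; that collapse is exactly what is lost when the two families of maps are allowed to differ.
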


\subsection{Finite Hilbert-Schmidt reproducing kernels}

We consider refinement of finite Hilbert-Schmidt reproducing kernels in this subsection. Let $B_j,C_j$ be invertible operators in $\cL_+(\Lambda)$, $n\le m\in\bN$, and $\Psi_j$, $j\in\bN_m$, be scalar-valued reproducing kernels on the input space $X$. Define
\begin{equation}\label{hilbertschmidt1}
K(x,y):=\sum_{j=1}^n B_j \Psi_j(x,y),\ \ G(x,y)=\sum_{j=1}^m C_j\Psi_j(x,y),\ \ \ x,y\in X.
\end{equation}
By the general integral representation (\ref{generalkernelintegral2}) and Proposition \ref{generalkernelis}, $K,G$ above are $\cL(\Lambda)$-valued reproducing kernels on $X$. To ensure that representation (\ref{hilbertschmidt1}) can not be further simplified, we shall work under the assumption that
\begin{equation}\label{exclusive}
\cH_{\Psi_j}\cap\cH_{\overline{\Psi}_j}=\{0\}\mbox{ for all }j\in\bN_m,
\end{equation}
where
$$
\overline{\Psi}_j:=\sum_{k\in\bN_m\setminus\{j\}}\Psi_k.
$$

\begin{theorem}\label{refinehilbertschmidt1}
Let $K,G$ be defined by (\ref{hilbertschmidt1}), where $B_j,C_j\in\cL_+(\Lambda)$ are invertible and $\Psi_j$, $j\in\bN_m$, are scalar-valued reproducing kernels on $X$ satisfying (\ref{exclusive}). Then $\cH_K\preceq\cH_G$ if and only if $B_j=C_j$, $j\in\bN_n$.
\end{theorem}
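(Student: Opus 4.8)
The plan is to prove both implications with the feature-map criterion of Theorem \ref{secondcharacterization}. The role of hypothesis (\ref{exclusive}) is captured by the following elementary remark: if $g_j\in\cH_{\Psi_j}$ for $j\in\bN_m$ and $\sum_{j}g_j=0$, then $g_j=0$ for every $j$. Indeed, $g_j=-\sum_{k\ne j}g_k$, and each $\cH_{\Psi_k}$ with $k\ne j$ is contained in $\cH_{\overline{\Psi}_j}$ (monotonicity of the RKHS with respect to domination of kernels \cite{Aronszajn}), so $g_j\in\cH_{\Psi_j}\cap\cH_{\overline{\Psi}_j}=\{0\}$. We may also assume each $\Psi_j\ne0$, since a vanishing factor contributes nothing to $K$ or $G$.

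Step 1 (feature maps). Fix feature maps $\phi_j:X\to\cW_j$ of $\Psi_j$ with $\overline{\span}\,\phi_j(X)=\cW_j$, so $\Psi_j(x,y)=(\phi_j(x),\phi_j(y))_{\cW_j}$, and let $B_j^{1/2},C_j^{1/2}\in\cL_+(\Lambda)$ be the bounded, self-adjoint, invertible positive square roots. With $\cW:=\bigoplus_{j=1}^n(\cW_j\otimes\Lambda)$ and $\cW':=\bigoplus_{j=1}^m(\cW_j\otimes\Lambda)$, set
$$
\Phi(x)\xi:=\bigl(\phi_j(x)\otimes B_j^{1/2}\xi\bigr)_{j=1}^n,\qquad \Phi'(x)\xi:=\bigl(\phi_j(x)\otimes C_j^{1/2}\xi\bigr)_{j=1}^m .
$$
A direct computation gives $\Phi(y)^*\Phi(x)=K(x,y)$ and $\Phi'(y)^*\Phi'(x)=G(x,y)$, so $\Phi,\Phi'$ are feature maps of $K,G$. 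Furthermore $\cW_\Phi=\cW$: if $u=(u_j)_j\in\cW$ is orthogonal to all $\Phi(x)\xi$, then for each fixed $\xi$ the scalar functions $x\mapsto(u_j,\phi_j(x)\otimes B_j^{1/2}\xi)_{\cW_j\otimes\Lambda}$ lie in $\cH_{\Psi_j}$ and add up to $0$; by the remark above each of them vanishes, and then $\overline{\span}\,\phi_j(X)=\cW_j$ together with the invertibility of $B_j^{1/2}$ forces $u_j=0$. The same argument gives $\cW'_{\Phi'}=\cW'$, so Theorem \ref{secondcharacterization} is applicable.

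Step 2 (sufficiency, and the structure of the transition operator). If $B_j=C_j$ for all $j\in\bN_n$, the orthogonal projection $T:\cW'\to\cW$ onto the first $n$ summands satisfies $T\Phi'(x)=\Phi(x)$ and $T^*$ is an isometric inclusion, so $\cH_K\preceq\cH_G$ by Theorem \ref{secondcharacterization}. Conversely, assume $\cH_K\preceq\cH_G$ and let $T:\cW'\to\cW$ be the operator provided by Theorem \ref{secondcharacterization}. Writing $T$ as a block operator $[T_{lj}]$, $l\in\bN_n$, $j\in\bN_m$, with $T_{lj}:\cW_j\otimes\Lambda\to\cW_l\otimes\Lambda$, the identity $T\Phi'(x)=\Phi(x)$ says $\sum_{j=1}^m T_{lj}(\phi_j(x)\otimes C_j^{1/2}\xi)=\phi_l(x)\otimes B_l^{1/2}\xi$. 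Pairing with an arbitrary $w\otimes\zeta\in\cW_l\otimes\Lambda$ makes the $j$-th summand a function of $x$ in $\cH_{\Psi_j}$ while the right-hand side lies in $\cH_{\Psi_l}$; the remark of the first paragraph then forces $T_{lj}=0$ for $j\ne l$ and $T_{ll}(\phi_l(x)\otimes C_l^{1/2}\xi)=\phi_l(x)\otimes B_l^{1/2}\xi$. Since $\{\phi_l(x)\otimes C_l^{1/2}\xi:x\in X,\ \xi\in\Lambda\}$ is total in $\cW_l\otimes\Lambda$, we get $T_{ll}=I_{\cW_l}\otimes(B_l^{1/2}C_l^{-1/2})$, so $T^*$ acts on the $l$-th summand of $\cW$ as $I_{\cW_l}\otimes(C_l^{-1/2}B_l^{1/2})$ and is zero on the remaining summands of $\cW'$. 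The requirement that $T^*$ be isometric now forces $C_l^{-1/2}B_l^{1/2}$ to be isometric on $\Lambda$, i.e. $B_l^{1/2}C_l^{-1}B_l^{1/2}=I_\Lambda$, equivalently $B_l=C_l$, for every $l\in\bN_n$.

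The hard part is the converse in Step 2. The naive route via Proposition \ref{firstcharacterization} --- that $G-K=\sum_{j\le n}(C_j-B_j)\Psi_j+\sum_{j>n}C_j\Psi_j$ must be a reproducing kernel --- yields at best (using (\ref{exclusive})) that $C_j-B_j\in\cL_+(\Lambda)$, which is strictly weaker than the asserted equality; one is genuinely forced to use the norm-preservation, i.e. the isometry of $T^*$, and the clean way to do so is to pin $T$ down block by block, which is precisely where hypothesis (\ref{exclusive}) is indispensable. One could alternatively pass to the scalar kernels $\tilde K,\tilde G$ on $X\times\Lambda$ via Proposition \ref{thirdcharacterization}, but then $B_j,C_j$ reappear as the kernels $(\xi,\eta)\mapsto(B_j\xi,\eta)_\Lambda$ on $\Lambda$ and essentially the same computation is required.
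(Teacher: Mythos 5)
Your proof is correct and follows essentially the same route as the paper's: the same tensor-product feature space $\bigoplus_{j}\cW_j\otimes\Lambda$ with $j$-th component built from $\phi_j(x)$ and $B_j^{1/2}\xi$, the same double use of (\ref{exclusive}) (first for density of the feature map's range, then to decouple the components of the intertwining identity), and the same final identity $B_j^{1/2}C_j^{-1}B_j^{1/2}=I_\Lambda$. The only difference is packaging: you invoke Theorem \ref{secondcharacterization} and pin down the transition operator $T$ block by block, whereas the paper scalarizes via Proposition \ref{thirdcharacterization} and applies Lemma \ref{rkhsfeature} directly --- two formulations of the same computation, as you yourself observe in your closing remark.
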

\begin{proof}
We first find a feature map for $\tilde{K}$ and $\tilde{G}$. Let $\phi_j:X\to \cW_j$ be an arbitrary feature map for $\Psi_j$ such that $\span\phi_j(X)$ is dense in $\cW_j$, and denote by $\Lambda\otimes\cW_j$ the tensor product of Hilbert spaces $\Lambda$ and $\cW_j$, $j\in\bN_m$. The space $\Lambda\otimes\cW_j$ is a Hilbert space with the inner product
$$
(\xi\otimes u,\eta\otimes v)_{\Lambda\otimes\cW_j}:=(\xi,\eta)_\Lambda(u,v)_{\cW_j},\ \ \xi,\eta\in\Lambda,\ u,v\in\cW_j.
$$
Set $\cW$ the orthogonal direct sum of $\Lambda\otimes\cW_j$, $j\in\bN_n$, whose inner product is defined by
$$
((\xi_j\otimes u_j:j\in\bN_n),(\eta_j\otimes v_j:j\in\bN_n))_{\cW}:=\sum_{j=1}^n(\xi_j,\eta_j)_\Lambda(u_j,v_j)_{\cW_j},\ \ \xi_j,\eta_j\in\Lambda,\ u_j,v_j\in\cW_j,\ \ j\in\bN_n.
$$
We claim that $\Phi:X\times \Lambda\to\cW$ defined by
$$
\Phi(x,\xi):=(\sqrt{B_j}\xi\otimes\phi_j(x):j\in\bN_n),\ \ x\in X,\ \xi\in\Lambda
$$
is a feature map for $\tilde{K}$. Here, $\sqrt{B_j}$ denotes the unique operator $A$ in $\cL_+(\Lambda)$ such that $A^2=B_j$. We verify for all $x,y\in X$ and $\xi,\eta\in\Lambda$ that
$$
\begin{array}{rl}
(\Phi(x,\xi),\Phi(y,\eta))_\cW&\displaystyle{=\sum_{j=1}^n(\sqrt{B_j}\xi,\sqrt{B_j}\eta)_\Lambda(\phi_j(x),\phi_j(y))_{\cW_j}}=\sum_{j=1}^n(B_j\xi,\eta)_\Lambda \Psi_j(x,y)\\
&\displaystyle{=(K(x,y)\xi,\eta)=\tilde{K}((x,\xi),(y,\eta))}.
\end{array}
$$
We next show that the denseness condition
\begin{equation}\label{refinehilbertschmidt1eq1}
\overline{\span}\{\phi(x,\xi):\ x\in X,\ \xi\in\Lambda\}=\cW
\end{equation}
is satisfied. To this end, suppose that we have $\eta_j\otimes u_j\in\Lambda\otimes \cW_j$, $j\in\bN_n$ such that
$$
((\eta_j\otimes u_j:j\in\bN_n),\phi(x,\xi))_{\cW}=\sum_{j=1}^n (\eta_j,\sqrt{B_j}\xi)_\Lambda(u_j,\phi_j(x))_{\cW_j}=0\mbox{ for all }x\in X\mbox{ and }\xi\in\Lambda.
$$
Note that $(u,\phi_j(\cdot))_{\cW_j}\in\cH_{\Psi_j}$ for each $j\in\bN_n$. We hence obtain by (\ref{exclusive}) that
$$
(\eta_j,\sqrt{B_j}\xi)_\Lambda(u_j,\phi_j(x))_{\cW_j}=0\mbox{ for all }j\in\bN_n,\ x\in X\mbox{ and }\xi\in\Lambda.
$$
By the denseness of $\phi_j(X)$ in $\cW_j$,
$$
(\eta_j,\sqrt{B_j}\xi)_\Lambda u_j=0\mbox{ for all }j\in\bN_n\mbox{ and }\xi\in\Lambda.
$$
We thus have for all $j\in\bN_n$ either $u_j=0$ or
$$
(\sqrt{B_j}\eta_j,\xi)_\Lambda=(\eta_j,\sqrt{B_j}\xi)_\Lambda=0\mbox{ for all }\xi\in\Lambda.
$$
In the latter case, we have that $\sqrt{B_j}\eta_j=0$. As $B_j$ is invertible, so is $\sqrt{B_j}$, following that $\eta_j=0$. In either case, we have that $\eta_j\otimes u_j=0$ for all $j\in\bN_n$. Equation (\ref{refinehilbertschmidt1eq1}) hence holds true. Similar facts hold for $\tilde{G}$.

By Proposition \ref{thirdcharacterization}, $\cH_K\preceq\cH_G$ is equivalent to $\cH_{\tilde{K}}\preceq\cH_{\tilde{G}}$, which by the above discussion and Lemma \ref{rkhsfeature} holds true if and only if for all $\xi_j\otimes u_j\in\Lambda\otimes\cW_j$, $j\in\bN_n$ there exist unique $\eta_j\otimes v_j\in\Lambda\otimes\cW_j$, $j\in\bN_m$ such that
\begin{equation}\label{refinehilbertschmidt1eq2}
\sum_{j=1}^n (\xi_j,\sqrt{B_j}\xi)_\Lambda(u_j,\phi_j(x))_{\cW_j}=\sum_{j=1}^m (\eta_j,\sqrt{C_j}\xi)_\Lambda(v_j,\phi_j(x))_{\cW_j} \mbox{ for all }\xi\in\Lambda\mbox{ and }x\in X
\end{equation}
and
\begin{equation}\label{refinehilbertschmidt1eq3}
\sum_{j=1}^n(\xi_j,\xi_j)_\Lambda(u_j,u_j)_{\cW_j}=\sum_{j=1}^m(\eta_j,\eta_j)_\Lambda(v_j,v_j)_{\cW_j}.
\end{equation}
Let $\xi_j\otimes u_j\in\Lambda\otimes\cW_j$, $j\in\bN_n$. If $B_j=C_j$ for $j\in\bN_n$ then we set $\eta_j:=\xi_j$ and $v_j:=u_j$ for $j\in\bN_n$, and $\eta_j=0$ and $v_j=0$ for $n+1\le j\le m$. Clearly, such a choice satisfies equations (\ref{refinehilbertschmidt1eq2}) and (\ref{refinehilbertschmidt1eq3}). Therefore, $\cH_{K}\preceq\cH_G$. Conversely, suppose that $\cH_K\preceq\cH_G$, that is, there exist $\eta_j\otimes v_j\in\Lambda\otimes\cW_j$, $j\in\bN_m$ that satisfy equations (\ref{refinehilbertschmidt1eq2}) and (\ref{refinehilbertschmidt1eq3}). Note that such $\eta_j\otimes v_j$ are unique by the denseness condition satisfied by the feature map for $\tilde{G}$. By (\ref{exclusive}), equation (\ref{refinehilbertschmidt1eq2}) implies that
$$
(\xi_j,\sqrt{B_j}\xi)_\Lambda(u_j,\phi_j(x))_{\cW_j}=(\eta_j,\sqrt{C_j}\xi)_\Lambda(v_j,\phi_j(x))_{\cW_j}\mbox{ for all }\xi\in\Lambda\mbox{ and }x\in X,\ \ j\in\bN_n
$$
and
$$
(\eta_j,\sqrt{C_j}\xi)_\Lambda(v_j,\phi_j(x))_{\cW_j}=0\mbox{ for all }\xi\in\Lambda\mbox{ and }x\in X,\ \ n+1\le j\le m.
$$
By the uniqueness of $\eta_j\otimes v_j\in\Lambda\otimes\cW_j$, $j\in\bN_m$, we must have that $\eta_j\otimes v_j=(\sqrt{C_j}^{\ -1}\sqrt{B_j}\xi_j)\otimes u_j$ for $j\in\bN_n$, and $\eta_j\otimes v_j=0$ for $n+1\le j\le m$. This together with (\ref{refinehilbertschmidt1eq3}) yields that
$$
\sum_{j=1}^n(\xi_j,\xi_j)_\Lambda(u_j,u_j)_{\cW_j}=\sum_{j=1}^n(\sqrt{B_j}C_j^{-1}\sqrt{B_j}\xi_j,\xi_j)_\Lambda(u_j,u_j)_{\cW_j}.
$$
By successively making $\xi_j\otimes u_j\ne0$ and $\xi_k\otimes u_k=0$ for $k\in\bN_n\setminus\{j\}$, for $j\in\bN_n$, we reach that
$$
(\xi_j,\xi_j)_\Lambda=(\sqrt{B_j}C_j^{-1}\sqrt{B_j}\xi_j,\xi_j)_\Lambda\mbox{ for all }\xi_j\in\Lambda\mbox{ and }j\in\bN_n.
$$
As $\sqrt{B_j}C_j^{-1}\sqrt{B_j}$ is hermitian, it equals the identity operator on $\Lambda$. It follows that $B_j=C_j$ for all $j\in\bN_n$. The proof is complete.
\end{proof}

As a corollary of Theorem \ref{refinehilbertschmidt1}, we obtain an orthogonal decomposition of $\cH_K$.

\begin{coro}\label{multidecomposition}
Let $K$ be defined by (\ref{hilbertschmidt1}), where $B_j$ are invertible and $\Psi_j$, $j\in\bN_n$ satisfy (\ref{exclusive}). Then
$$
\cH_K=\bigoplus_{j=1}^n \cH_{B_j\Psi_j}
$$
and
$$
\cH_{\sum_{j=1}^k B_j\Psi_j}\preceq \cH_{\sum_{j=1}^{k+1} B_j\Psi_j}\mbox{ for }k\in\bN_{n-1}.
$$
\end{coro}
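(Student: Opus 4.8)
The plan is to deduce both assertions from Theorem \ref{refinehilbertschmidt1} together with Proposition \ref{firstcharacterization}, proceeding by induction on the number of summands. For $k\in\bN_n$ write $K_k:=\sum_{j=1}^k B_j\Psi_j$, so that $K_1=B_1\Psi_1$ and $K_n=K$. Each $K_k$ is an $\cL(\Lambda)$-valued reproducing kernel of the form (\ref{hilbertschmidt1}) with $k$ terms, and the chain $\cH_{K_1}\preceq\cH_{K_2}\preceq\cdots\preceq\cH_{K_n}$ is precisely the second displayed assertion; once it is in hand, the orthogonal decomposition will fall out of the last sentence of Proposition \ref{firstcharacterization}.

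First I would establish the refinement chain. Fix $k\in\bN_{n-1}$ and apply Theorem \ref{refinehilbertschmidt1} to the pair $K_k$ and $K_{k+1}$, which play the roles of the kernels called $K$ (with $k$ summands, coefficients $B_1,\dots,B_k$) and $G$ (with $k+1\ge k$ summands, coefficients $B_1,\dots,B_{k+1}$) there. The coefficients are invertible operators in $\cL_+(\Lambda)$ by hypothesis, so the only thing requiring attention is that $\Psi_1,\dots,\Psi_{k+1}$ still satisfy the exclusivity condition (\ref{exclusive}). For $j\in\bN_{k+1}$, the kernel $\sum_{i\in\bN_{k+1}\setminus\{j\}}\Psi_i$ is one summand of $\sum_{i\in\bN_n\setminus\{j\}}\Psi_i$ (possibly together with the extra terms $\Psi_{k+2},\dots,\Psi_n$), so by the sum theorem for reproducing kernels (Theorem 1 on page 44 of \cite{Pedrick}, applied to scalar-valued kernels) we have $\cH_{\sum_{i\in\bN_{k+1}\setminus\{j\}}\Psi_i}\subseteq\cH_{\sum_{i\in\bN_n\setminus\{j\}}\Psi_i}$; intersecting with $\cH_{\Psi_j}$ and using (\ref{exclusive}) for the full family gives $\cH_{\Psi_j}\cap\cH_{\sum_{i\in\bN_{k+1}\setminus\{j\}}\Psi_i}=\{0\}$. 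Thus Theorem \ref{refinehilbertschmidt1} applies, and since the two coefficient sets agree on $\bN_k$, its conclusion yields $\cH_{K_k}\preceq\cH_{K_{k+1}}$.

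With the chain established, the decomposition follows by induction. Since $\cH_{K_k}\preceq\cH_{K_{k+1}}$ and $K_{k+1}-K_k=B_{k+1}\Psi_{k+1}$ is itself an $\cL(\Lambda)$-valued reproducing kernel (the product of a positive operator with a scalar-valued reproducing kernel; cf. Proposition \ref{generalkernelis}), the last sentence of Proposition \ref{firstcharacterization} identifies $\cH_{B_{k+1}\Psi_{k+1}}$ with the orthogonal complement of $\cH_{K_k}$ inside $\cH_{K_{k+1}}$, whence $\cH_{K_{k+1}}=\cH_{K_k}\oplus\cH_{B_{k+1}\Psi_{k+1}}$ orthogonally. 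Because every inclusion in the chain is isometric, orthogonality computed inside an intermediate $\cH_{K_k}$ persists in $\cH_K$; hence, starting from $\cH_{K_1}=\cH_{B_1\Psi_1}$ and iterating for $k=1,\dots,n-1$, the subspaces $\cH_{B_j\Psi_j}$, $j\in\bN_n$, are mutually orthogonal in $\cH_K$ and $\cH_K=\cH_{K_n}=\bigoplus_{j=1}^n\cH_{B_j\Psi_j}$.

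The only genuinely non-routine point is the inheritance of condition (\ref{exclusive}) by the truncated families $\{\Psi_1,\dots,\Psi_{k+1}\}$, which is exactly why the monotonicity $\cH_{\Psi'}\subseteq\cH_{\Psi'+\Psi''}$ of scalar RKHS under kernel addition is the crucial ingredient; the rest is a direct appeal to Theorem \ref{refinehilbertschmidt1} and Proposition \ref{firstcharacterization}. (One could instead bypass Theorem \ref{refinehilbertschmidt1} and verify the hypotheses of Proposition \ref{firstcharacterization} directly, checking $\cH_{K_k}\cap\cH_{B_{k+1}\Psi_{k+1}}=\{0\}$ via the feature-map description of $\cH_{B\Psi}$ as $\{\sqrt{B}\,F:F\in\cH_{\Psi}\otimes\Lambda\}$ and again reducing to (\ref{exclusive}), but the route through Theorem \ref{refinehilbertschmidt1} is shorter.)
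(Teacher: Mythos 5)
Your argument is correct and follows exactly the route the paper intends: the refinement chain comes from applying Theorem \ref{refinehilbertschmidt1} to the truncated sums $K_k$ and $K_{k+1}$ (whose coefficient sets agree on $\bN_k$), and the orthogonal decomposition then follows from the last sentence of Proposition \ref{firstcharacterization} together with the isometric nature of the embeddings. The one point the paper leaves implicit --- that the truncated family $\{\Psi_1,\dots,\Psi_{k+1}\}$ inherits condition (\ref{exclusive}) via the monotonicity $\cH_{P}\subseteq\cH_{P+Q}$ of scalar RKHS under kernel addition --- is handled correctly in your write-up.
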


A simplest case of (\ref{hilbertschmidt1}) occurs when $\cH_{\Psi_j}$ is of dimension $1$ for $j\in\bN_m$, which is covered below.

\begin{coro}\label{dimension1}
Let $B_j,C_k\in\cL_+(\Lambda)$ be invertible for $j\in\bN_n$ and $k\in\bN_m$, and $\psi_k:X\to\bC$, $k\in\bN_m$, be linearly independent. Set
$$
K(x,y):=\sum_{j=1}^n B_j\psi_j(x)\overline{\psi_j(y)},\ \ G(x,y):=\sum_{k=1}^m C_k\psi_k(x)\overline{\psi_k(y)},\ \ x,y\in X.
$$
Then $\cH_K\preceq\cH_G$ if and only if $B_j=C_j$ for all $j\in\bN_n$.
\end{coro}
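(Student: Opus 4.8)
The plan is to read Corollary \ref{dimension1} off directly from Theorem \ref{refinehilbertschmidt1}. First I would set $\Psi_j(x,y):=\psi_j(x)\overline{\psi_j(y)}$ for $j\in\bN_m$. Each such $\Psi_j$ is a scalar-valued reproducing kernel on $X$: it is Hermitian and $\sum_{i,l}c_i\overline{c_l}\Psi_j(x_i,x_l)=\bigl|\sum_i c_i\psi_j(x_i)\bigr|^2\ge 0$. Taking $\Phi_j(x):=\psi_j(x)$, viewed as an element of $\cL(\bC,\bC)$, as a feature map with one-dimensional feature space and invoking Lemma \ref{rkhsfeature}, I would identify $\cH_{\Psi_j}$ as a one-dimensional space spanned by a single nonzero function built from $\psi_j$. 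With this identification the kernels $K$ and $G$ in the corollary are exactly of the form (\ref{hilbertschmidt1}) with the prescribed invertible operators $B_j$ and $C_k$.

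It then remains only to check that $\{\Psi_j:j\in\bN_m\}$ satisfies the exclusivity hypothesis (\ref{exclusive}), that is, $\cH_{\Psi_j}\cap\cH_{\overline{\Psi}_j}=\{0\}$ for every $j\in\bN_m$, where $\overline{\Psi}_j=\sum_{k\in\bN_m\setminus\{j\}}\Psi_k$. For this I would use the feature map $x\mapsto(\psi_k(x):k\in\bN_m\setminus\{j\})$ into $\bC^{m-1}$; its range spans all of $\bC^{m-1}$ precisely because $\{\psi_k:k\in\bN_m\}$ is linearly independent, so Lemma \ref{rkhsfeature} identifies $\cH_{\overline{\Psi}_j}$ with the span of the $m-1$ functions attached to the indices $k\ne j$. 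Since $\{\psi_k:k\in\bN_m\}$ is linearly independent (hence so is the family of their complex conjugates), the one-dimensional space attached to $\psi_j$ meets the span of the remaining ones only at $0$, which is exactly (\ref{exclusive}).

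With every hypothesis of Theorem \ref{refinehilbertschmidt1} now verified, that theorem gives immediately that $\cH_K\preceq\cH_G$ if and only if $B_j=C_j$ for all $j\in\bN_n$, completing the proof. (The statement tacitly assumes $n\le m$; otherwise $\psi_{m+1},\dots,\psi_n$ lie in $\cH_K$ but not in $\cH_G$ by linear independence, so $\cH_K\preceq\cH_G$ fails, and $C_j$ is undefined for $j>m$.) There is no real obstacle here: the only step demanding any care is the bookkeeping in Lemma \ref{rkhsfeature} to pin down the one-dimensional spaces $\cH_{\Psi_j}$ and the fullness of the feature-map ranges, after which the conclusion is a direct citation.
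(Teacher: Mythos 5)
Your proposal is correct and follows exactly the route the paper intends: Corollary \ref{dimension1} is stated as an immediate specialization of Theorem \ref{refinehilbertschmidt1} with $\Psi_j(x,y)=\psi_j(x)\overline{\psi_j(y)}$, and the only content is verifying that each $\cH_{\Psi_j}$ is one-dimensional and that linear independence of the $\psi_k$ yields the exclusivity condition (\ref{exclusive}), which you do properly via Lemma \ref{rkhsfeature}. Your parenthetical remark about the tacit hypothesis $n\le m$ (inherited from the setup of (\ref{hilbertschmidt1})) is a fair observation, but otherwise there is nothing to add.
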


More generally, we might consider $K,G$ defined by two distinct classes of linearly independent functions from $X$ to $\bC$. The result below can be proved using arguments similar to those for Theorem \ref{refinehilbertschmidt1}.

\begin{prop}\label{dimension1general}
Let $n\le m\in\bN_n$, $B_j,C_k\in\cL_+(\Lambda)$ be invertible for $j\in\bN_n$ and $k\in\bN_m$, and $\{\psi_j:j\in\bN_n\}$ and $\{\varphi_k:k\in\bN_m\}$ be two classes of linearly independent functions from $X$ to $\bC$. Set
$$
K(x,y):=\sum_{j=1}^n B_j\psi_j(x)\overline{\psi_j(y)},\ \ G(x,y):=\sum_{k=1}^m C_k\varphi_k(x)\overline{\varphi_k(y)},\ \ x,y\in X.
$$
Then $\cH_K\preceq\cH_G$ if and only if
\begin{description}
\item[(1)] $\psi_j\in\span\{\varphi_k:k\in\bN_m\}$ for all $j\in\bN_n$;

\item[(2)] the coefficients $\lambda_{jl}\in\bC$ in the linear span
$$
\psi_j=\sum_{l=1}^m \lambda_{jl}\varphi_l,\ \ j\in\bN_n
$$
satisfy
$$
\sum_{l=1}^m \lambda_{jl}\lambda_{kl}C_l^{-1}=\delta_{j,k}B_j^{-1}\mbox{ for all }j,k\in\bN_n.
$$
\end{description}
\end{prop}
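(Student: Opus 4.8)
The plan is to follow the pattern of the proof of Theorem \ref{refinehilbertschmidt1}, turning the refinement question into a finite block-operator identity by exhibiting finite-dimensional feature spaces. Put $\cW:=\bigoplus_{j\in\bN_n}\Lambda$ and $\cW':=\bigoplus_{k\in\bN_m}\Lambda$ with the natural inner products, and define $\Phi:X\to\cL(\Lambda,\cW)$ and $\Phi':X\to\cL(\Lambda,\cW')$ by
$$
\Phi(x)\xi:=\bigl(\sqrt{B_j}\,\xi\,\psi_j(x):j\in\bN_n\bigr),\qquad\Phi'(x)\xi:=\bigl(\sqrt{C_k}\,\xi\,\varphi_k(x):k\in\bN_m\bigr),
$$
where $\sqrt{B_j},\sqrt{C_k}\in\cL_+(\Lambda)$ are the positive square roots of $B_j,C_k$. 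A direct computation gives $\Phi(y)^*\Phi(x)=K(x,y)$ and $\Phi'(y)^*\Phi'(x)=G(x,y)$, so $\Phi,\Phi'$ are feature maps of $K,G$. The density requirements $\cW_\Phi=\cW$ and $\cW'_{\Phi'}=\cW'$ of Theorem \ref{secondcharacterization} hold: if $(\eta_j:j\in\bN_n)$ is orthogonal to every $\Phi(x)\xi$, then $\sum_{j\in\bN_n}(\eta_j,\sqrt{B_j}\xi)_\Lambda\psi_j(x)=0$ for all $x$, so the linear independence of $\{\psi_j\}$ forces $(\sqrt{B_j}\eta_j,\xi)_\Lambda=0$ for all $\xi$, hence $\eta_j=0$ by invertibility of $\sqrt{B_j}$; the argument for $\Phi'$ is identical.

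By Theorem \ref{secondcharacterization}, $\cH_K\preceq\cH_G$ if and only if there is a bounded $T:\cW'\to\cW$ with $T\Phi'(x)=\Phi(x)$ for all $x\in X$ and $T^*:\cW\to\cW'$ isometric. Writing $T=(T_{jk})_{j\in\bN_n,\,k\in\bN_m}$ with $T_{jk}\in\cL(\Lambda)$, the $j$-th component of $T\Phi'(x)=\Phi(x)$ reads
$$
\sum_{k\in\bN_m}\bigl(T_{jk}\sqrt{C_k}\,\xi\bigr)\varphi_k(x)=\bigl(\sqrt{B_j}\,\xi\bigr)\psi_j(x)\quad\mbox{for all }x\in X,\ \xi\in\Lambda.
$$
Pairing this with a fixed vector of the form $\sqrt{B_j}\xi_0\neq0$ expresses $\psi_j$ as a linear combination of the $\varphi_k$, which is condition (1). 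Conversely, assuming (1) and substituting $\psi_j=\sum_{l\in\bN_m}\lambda_{jl}\varphi_l$ into the displayed identity, the linear independence of $\{\varphi_k\}$ (tested against an arbitrary functional on $\Lambda$) forces $T_{jk}\sqrt{C_k}=\lambda_{jk}\sqrt{B_j}$, i.e.\ $T_{jk}=\lambda_{jk}\sqrt{B_j}\,\sqrt{C_k}^{\,-1}$. Hence an intertwining operator $T$ exists if and only if (1) holds, in which case it is unique and automatically bounded since each $\sqrt{C_k}^{\,-1}$ is bounded.

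It remains to see that, under (1), ``$T^*$ isometric'' is equivalent to condition (2). Since $T^*$ is isometric if and only if $TT^*=I_\cW$ and $TT^*$ is self-adjoint, it suffices to compute its block entries. Using $(T^*)_{kj}=T_{jk}^*=\overline{\lambda_{jk}}\,\sqrt{C_k}^{\,-1}\sqrt{B_j}$ and $\sqrt{C_k}^{\,-1}\sqrt{C_k}^{\,-1}=C_k^{-1}$, the $(i,j)$ block of $TT^*$ equals $\sqrt{B_i}\bigl(\sum_{k\in\bN_m}\lambda_{ik}\overline{\lambda_{jk}}\,C_k^{-1}\bigr)\sqrt{B_j}$. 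Setting this block equal to $\delta_{ij}I_\Lambda$ and cancelling the invertible factors $\sqrt{B_i}$ and $\sqrt{B_j}$ yields, for all $i,j\in\bN_n$, the identity $\sum_{l\in\bN_m}\lambda_{il}\overline{\lambda_{jl}}\,C_l^{-1}=\delta_{ij}B_j^{-1}$, which is condition (2); conversely these relations give $TT^*=I_\cW$. Combining the three steps yields the claimed equivalence.

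I expect the only real difficulty to be organizational: keeping the indices in the block computation straight and justifying the two cancellation steps — namely that two self-adjoint block operators with the same quadratic form on $\cW$ coincide, and that left/right multiplication by the invertible operators $\sqrt{B_j}$ may be stripped off. No new idea beyond the proof of Theorem \ref{refinehilbertschmidt1} is required; the sole novelty is that the $\psi_j$ need not appear among the $\varphi_k$, which is exactly what condition (1) records and what makes the coefficient matrix $(\lambda_{jk})$ enter the picture.
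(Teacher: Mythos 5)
Your proof is correct and complete. It does, however, take a visibly different route from the one the paper points to: the paper gives no proof of this proposition, only the remark that it ``can be proved using arguments similar to those for Theorem \ref{refinehilbertschmidt1}'', whose proof scalarizes the kernels via Proposition \ref{thirdcharacterization}, identifies $\cH_{\tilde K}$ and $\cH_{\tilde G}$ through Lemma \ref{rkhsfeature} with tensor-product feature spaces $\Lambda\otimes\cW_j$, and then matches elements and norms by hand. You instead stay at the operator level and invoke Theorem \ref{secondcharacterization} directly, with the direct sums $\bigoplus_{j}\Lambda$ and $\bigoplus_k\Lambda$ as feature spaces. Since each $\cH_{\psi_j\overline{\psi_j}}$ is one-dimensional, the two feature spaces are essentially the same, but your packaging has a concrete payoff: the whole equivalence collapses to the existence of the block operator $T=(\lambda_{jk}\sqrt{B_j}\,\sqrt{C_k}^{\,-1})$ (which is exactly condition (1)) together with the single finite identity $TT^*=I_{\cW}$ (which is exactly condition (2)). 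The density verifications, the extraction of $T_{jk}$ from linear independence of the $\varphi_k$ tested against functionals on $\Lambda$, and the two cancellation steps you flag (polarization to pass from the quadratic form of $TT^*$ to $TT^*=I$, and stripping the invertible factors $\sqrt{B_i},\sqrt{B_j}$) are all sound.

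One point worth recording: your computation yields $\sum_{l=1}^m\lambda_{jl}\overline{\lambda_{kl}}\,C_l^{-1}=\delta_{j,k}B_j^{-1}$, with a complex conjugate on the second coefficient, whereas the proposition as printed has $\lambda_{jl}\lambda_{kl}$. Your version is the correct one: the left-hand side must be the $(j,k)$ block of the positive operator $TT^*$ (already for $j=k$ the unconjugated sum $\sum_l\lambda_{jl}^2C_l^{-1}$ need not be positive, and the scalar case $n=m=1$, $\psi_1=\lambda\varphi_1$ forces $|\lambda|^2C_1^{-1}=B_1^{-1}$). So the discrepancy is a typographical slip in the statement, not a gap in your argument; it also stays consistent with Corollary \ref{dimension1}, where $\lambda_{jl}=\delta_{jl}$ and the condition reduces to $B_j=C_j$.
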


We close this section with several concrete examples of finite Hilbert-Schmidt reproducing kernels of the form described in Corollary \ref{dimension1} and Proposition \ref{dimension1general}:
\begin{itemize}
\item polynomial kernels:
$$
K(x,y):=\sum_{j=1}^n x^{\alpha_j}\cdot y^{\alpha_j}B_j,\ \ x,y\in\bR^d
$$
where $\alpha_j$ are multi-indices and $B_j$ are invertible operators in $\cL_+(\Lambda)$,
or
$$
K(x,y):=\sum_{j=1}^n (x\cdot y)^{\beta_j}B_j,\ \ x,y\in\bR^d
$$
where $\beta_j$ are nonnegative integers.
\item exponential kernels:
$$
K(x,y):=\sum_{j=1}^n e^{i(x-y)\cdot t_j}B_j,\ \ x,y\in\bR^d
$$
where $t_j\in\bR^d$.
\end{itemize}

\section{Existence}
\setcounter{equation}{0}
This section is devoted to the existence of nontrivial refinement of operator-valued reproducing kernels. Most of the results to be presented here are straightforward extensions of those in the scalar-valued case \cite{XZ2}.

Let $X$ be the input space and $\Lambda$ be a Hilbert space. The reproducing kernels under consideration are $\cL(\Lambda)$-valued.

\begin{prop}\label{trivialexistence}
There does not exist a nontrivial refinement of an $\cL(\Lambda)$-valued reproducing kernel $K$ on $X$ if and only if $\cH_K=\Lambda^X$, the set of all the functions from $X$ to $\Lambda$. If the cardinality of $X$ is infinite then every $\cL(\Lambda)$-valued reproducing kernel on $X$ has a nontrivial refinement.
\end{prop}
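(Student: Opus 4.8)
The plan is to prove the two equivalences in turn, and to deduce the second assertion from the first.

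For the first equivalence, the implication ``$\cH_K=\Lambda^X$ $\Rightarrow$ no nontrivial refinement'' is immediate: every refinement $G$ of $K$ has $\cH_K\subseteq\cH_G\subseteq\Lambda^X=\cH_K$, so $\cH_K=\cH_G$ as sets, and since the two norms agree on $\cH_K$ they coincide as Hilbert spaces; by the bijective correspondence between $\Lambda$-valued RKHS and $\cL(\Lambda)$-valued reproducing kernels, $K=G$. For the converse I would argue by contraposition. Given $f_0\in\Lambda^X\setminus\cH_K$, which is necessarily nonzero, put $\cH_L:=\{cf_0:c\in\bC\}$ with the inner product making $f_0$ a unit vector; a one-line estimate shows every point evaluation is bounded on $\cH_L$, so $\cH_L$ is a $\Lambda$-valued RKHS whose reproducing kernel $L$ is nonzero. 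Since $\cH_K$ is a subspace, $\cH_K\cap\cH_L=\{0\}$ is equivalent to $f_0\notin\cH_K$, which holds. By Proposition~\ref{firstcharacterization} applied to $G:=K+L$, whose difference kernel $G-K$ is $L$, we get $\cH_K\preceq\cH_{K+L}$, and $K+L\neq K$; hence $K+L$ is a nontrivial refinement of $K$.

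For the second assertion it suffices, by the first equivalence, to show $\cH_K\subsetneq\Lambda^X$ whenever $X$ is infinite (we may assume $\Lambda\neq\{0\}$, the other case being trivial). Suppose to the contrary $\cH_K=\Lambda^X$. Choose distinct $x_n\in X$, $n\in\bN$, and a unit vector $\xi_0\in\Lambda$. The restriction map $R:f\mapsto(f(x_n))_{n\in\bN}$ carries $\cH_K$ \emph{onto} $\Lambda^{\bN}$ (extend any sequence by zero off $\{x_n\}$), and its kernel $N=\bigcap_n\ker\delta_{x_n}$ is closed because each $\delta_{x_n}:\cH_K\to\Lambda$ is continuous. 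Transporting the Hilbert structure of $N^{\perp}$ along the induced bijection $N^{\perp}\to\Lambda^{\bN}$ makes $\Lambda^{\bN}$ a Hilbert space on which every coordinate map $\pi_m$ is bounded, with operator norm $C_m\in(0,\infty)$. Testing against $\eta^{\ast}\in\Lambda^{\bN}$ defined by $\eta^{\ast}_m:=mC_m\xi_0$ gives $m=\|\eta^{\ast}_m\|_\Lambda/C_m\le\|\eta^{\ast}\|<\infty$ for all $m$, a contradiction. Alternatively this step can be replaced by invoking Proposition~\ref{thirdcharacterization} together with the scalar-valued statement in \cite{XZ2}, since $\tilde X=X\times\Lambda$ is infinite when $X$ is.

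The routine ingredients are the boundedness of point evaluations on the one-dimensional $\cH_L$ and the surjectivity of $R$. The one genuinely delicate point is the last one, namely that the full product $\Lambda^{\bN}$ admits no Hilbert norm with continuous coordinate evaluations; this is where the growth argument above (or the reduction to \cite{XZ2}) is essential.
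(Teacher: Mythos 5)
Your proposal is correct and complete. The paper itself gives no proof of this proposition, remarking only that the results of Section 6 are ``straightforward extensions of those in the scalar-valued case \cite{XZ2}''; your argument is precisely the expected extension, and both halves are sound. The first equivalence via the rank-one kernel $L(x,y)\xi=(\xi,f_0(x))_\Lambda f_0(y)$ built from $f_0\in\Lambda^X\setminus\cH_K$ and Proposition~\ref{firstcharacterization} is exactly the right mechanism, and your treatment of the infinite-cardinality case---transporting the Hilbert structure to $\Lambda^{\bN}$ along the bijection from $N^{\perp}$ and then defeating the bounded coordinate maps with the sequence $\eta^*_m=mC_m\xi_0$---is a clean, self-contained way to show $\cH_K\subsetneq\Lambda^X$ (you correctly secure $C_m>0$ from the surjectivity of $R$ and $\Lambda\neq\{0\}$). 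One caveat: the parenthetical ``alternative'' route via Proposition~\ref{thirdcharacterization} and the scalar result of \cite{XZ2} does not actually go through as stated. The scalar statement yields a nontrivial scalar-valued refinement of $\tilde K$ on $\tilde X$, but Proposition~\ref{thirdcharacterization} only translates refinements of $\tilde K$ that are of the form $\tilde G$ for an operator-valued $G$; moreover $\cH_{\tilde K}\subsetneq\bC^{\tilde X}$ holds trivially whenever $\dim\Lambda\ge 1$ (elements of $\cH_{\tilde K}$ are conjugate-linear in the $\Lambda$-variable), so that inclusion carries no information about whether $\cH_K=\Lambda^X$. Since this remark is offered only as an aside and your main argument does not rely on it, it does not affect the validity of the proof, but you should either delete it or replace it with a genuine reduction (e.g., to the scalar kernel $(K(\cdot,\cdot)\xi_0,\xi_0)_\Lambda$ for a fixed unit vector $\xi_0$).
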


Surprisingly, nontrivial results about the existence appear when $X$ is of finite cardinality.

\begin{prop}\label{finiteexistence}
Let $X$ consist of finitely many points $x_j$, $j\in\bN_n$ for some $n\in\bN_n$. A necessary condition for an $\cL(\Lambda)$-valued reproducing kernel on $X$ to have no nontrivial refinements is that
\begin{equation}\label{finitenecessary}
\sum_{j=1}^n\sum_{k=1}^n(K(x_j,x_k)\xi_j,\xi_k)_\Lambda>0\mbox{ for all }\xi_j\in\Lambda, j\in\bN_n\mbox{ with }\sum_{j=1}^n\|\xi_j\|_\Lambda>0.
\end{equation}
A sufficient condition for $K$ to have no nontrivial refinements is that
\begin{equation}\label{finitesufficient}
\sum_{j=1}^n\sum_{k=1}^n(K(x_j,x_k)\xi_j,\xi_k)_\Lambda\ge\lambda \sum_{j=1}^n\|\xi_j\|_\Lambda^2\mbox{ for all }\xi_j\in\Lambda, j\in\bN_n
\end{equation}
for some constant $\lambda>0$. Consequently, if $\Lambda$ is finite-dimensional then $K$ does not have a nontrivial refinement if and only if (\ref{finitenecessary}) holds true.
\end{prop}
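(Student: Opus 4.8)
The plan is to reduce the statement to a question about a single operator by means of Proposition~\ref{trivialexistence}, which says that $K$ has no nontrivial refinement exactly when $\cH_K=\Lambda^X$. Since $X=\{x_1,\dots,x_n\}$, I would introduce the block operator $\mathbf{K}\in\cL_+(\Lambda^n)$ determined by the bounded form
$$
(\mathbf{K}\vec\xi,\vec\eta)_{\Lambda^n}:=\sum_{j=1}^n\sum_{k=1}^n(K(x_j,x_k)\xi_j,\eta_k)_\Lambda,\qquad \vec\xi=(\xi_j)_{j=1}^n,\ \vec\eta=(\eta_j)_{j=1}^n,
$$
which is Hermitian because $K(x,y)=K(y,x)^*$ and positive by (\ref{positivity}). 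The purpose of this bookkeeping is that condition (\ref{finitenecessary}) says exactly that $\mathbf{K}$ is injective, whereas (\ref{finitesufficient}) says exactly that $\mathbf{K}\ge\lambda I$ for some $\lambda>0$, i.e.\ (using self-adjointness) that $\mathbf{K}$ is boundedly invertible; also $\Lambda^X$ is naturally identified with $\Lambda^n$.

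The key step is to identify $\cH_K$, as a subset of $\Lambda^X$, with $\ran(\mathbf{K}^{1/2})$. For this I would take the feature space $\cW:=\Lambda^n$ and the feature map $\Phi(x_j)\xi:=\mathbf{K}^{1/2}(\iota_j\xi)$, where $\iota_j\xi\in\Lambda^n$ is $\xi$ inserted in the $j$-th coordinate. A direct computation, using that $\mathbf{K}^{1/2}$ is self-adjoint, gives $\Phi(x_k)^*v=(\mathbf{K}^{1/2}v)_k$ and hence $\Phi(x_k)^*\Phi(x_j)=K(x_j,x_k)$, so $\Phi$ is a feature map of $K$ in the sense of (\ref{featureK}), with $\cW_\Phi=\overline{\ran(\mathbf{K}^{1/2})}$. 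Lemma~\ref{rkhsfeature} then identifies $\cH_K$ with $\{\Phi(\cdot)^*u:u\in\Lambda^n\}$, and since $\Phi(\cdot)^*u$ is the function $x_i\mapsto(\mathbf{K}^{1/2}u)_i$, i.e.\ the vector $\mathbf{K}^{1/2}u\in\Lambda^n$, this yields $\cH_K=\ran(\mathbf{K}^{1/2})$.

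With that in hand the three claims come quickly. If (\ref{finitesufficient}) holds, then $\mathbf{K}$ and hence $\mathbf{K}^{1/2}$ is boundedly invertible, so $\cH_K=\ran(\mathbf{K}^{1/2})=\Lambda^X$, and Proposition~\ref{trivialexistence} yields that $K$ has no nontrivial refinement — this gives sufficiency of (\ref{finitesufficient}). Conversely, if $K$ has no nontrivial refinement, then $\cH_K=\Lambda^X$, so $\mathbf{K}^{1/2}$ is surjective; a surjective self-adjoint operator is boundedly invertible by the open mapping theorem, hence so is $\mathbf{K}=(\mathbf{K}^{1/2})^2$, and in particular $\mathbf{K}$ is injective — this gives necessity of (\ref{finitenecessary}) (and, in fact, shows (\ref{finitesufficient}) is necessary too). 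Finally, when $\dim\Lambda<\infty$, $\mathbf{K}$ acts on the finite-dimensional space $\Lambda^n$, where a positive operator is injective iff it is bounded below by its least eigenvalue, so (\ref{finitenecessary}) $\Leftrightarrow$ (\ref{finitesufficient}); combined with the two implications just proved, all three statements are then equivalent.

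I expect the only genuine subtlety to be conceptual rather than technical: in infinite dimensions an injective positive $\mathbf{K}$ need not be bounded below, so $\ran(\mathbf{K}^{1/2})$ may be only a proper dense subspace of $\Lambda^X$ — a diagonal $\mathbf{K}$ with eigenvalues tending to $0$ already exhibits this — and this is precisely why (\ref{finitenecessary}) is necessary but not sufficient in general, and why finite-dimensionality of $\Lambda$ is needed for the two conditions to coincide. Everything else is a routine assembly of Proposition~\ref{trivialexistence}, Lemma~\ref{rkhsfeature}, and the standard equivalence — for bounded self-adjoint operators — between surjectivity, being bounded below, and bounded invertibility.
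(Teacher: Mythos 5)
Your argument is correct, but it takes a genuinely different route from the paper's. The paper works directly with the reproducing property: for the necessity of (\ref{finitenecessary}), a nonzero tuple $(\xi_j)$ with vanishing quadratic form gives $\sum_{j}K(x_j,\cdot)\xi_j=0$, hence the nonzero functional $f\mapsto\sum_j(f(x_j),\xi_j)_\Lambda$ annihilates $\cH_K$ and so $\cH_K\ne\Lambda^X$; for sufficiency it assumes $\cH_K\subsetneq\Lambda^n$, picks a nonzero vector of $\Lambda^n$ orthogonal to $\{(f(x_k))_k:f\in\cH_K\}$, and tests it against $f=\sum_jK(x_j,\cdot)\xi_j$ to contradict (\ref{finitesufficient}). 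Your reduction to the Gram operator $\mathbf{K}\in\cL_+(\Lambda^n)$, together with the identification $\cH_K=\ran(\mathbf{K}^{1/2})$ via Lemma \ref{rkhsfeature} (the computation $\Phi(x_k)^*\Phi(x_j)=(\mathbf{K}\iota_j\xi)_k=K(x_j,x_k)$ does match the index convention of (\ref{featureK})), converts the three claims into the standard trichotomy \emph{injective} / \emph{bounded below} / \emph{boundedly invertible} for a positive operator. This buys several things: it shows in passing that (\ref{finitesufficient}) is itself necessary, it explains structurally why the two conditions collapse exactly when $\dim\Lambda<\infty$, and it meshes cleanly with the paper's subsequent counterexample (a diagonal $\mathbf{K}$ with eigenvalues $1/j$ is injective but $\ran(\mathbf{K}^{1/2})$ is a proper dense subspace). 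It also quietly repairs a delicate point in the paper's sufficiency argument: the existence of a nonzero vector orthogonal to $\cH_K$ inside $\Lambda^n$ presupposes that $\cH_K$ is not a proper \emph{dense} subspace, which is not automatic for $\ran(\mathbf{K}^{1/2})$; your route derives surjectivity directly from $\mathbf{K}\ge\lambda I$ and the open mapping theorem and needs no such step.
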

\begin{proof}
Suppose that there exist $\xi_j\in\Lambda$, $j\in\bN_n$, at least one of which is nonzero, such that
$$
\sum_{j=1}^n\sum_{k=1}^n(K(x_j,x_k)\xi_j,\xi_k)_\Lambda=0.
$$
This implies that
$$
\sum_{j=1}^nK(x_j,\cdot)\xi_j=0.
$$
We get by (\ref{reproducing}) that for all $f\in\cH_K$
$$
\sum_{j=1}^n(f(x_j),\xi_j)_\Lambda=\biggl(f,\sum_{j=1}^nK(x_j,\cdot)\xi_j\biggr)_{\cH_K}=0.
$$
As a consequence, $\cH_K$ does not contain the function $f:X\to\Lambda$ taking values $f(x_j)=\xi_j$ for $j\in\bN_n$. By Proposition \ref{trivialexistence}, there exist nontrivial refinements for $K$ on $X$.

Suppose that (\ref{finitesufficient}) holds true for some positive constant $\lambda$. Assume that $\cH_K$ is a proper subset of $\Lambda^X$. Then there exists some nonzero vector $(\xi_k:k\in\bN_n)\in\Lambda^n$ orthogonal to $(f(x_k):k\in\bN_n)$ in $\Lambda^n$ for all $f\in\cH_K$. Letting $f=\sum_{j=1}^nK(x_j,\cdot)\xi_j$ yields that
$$
\sum_{j=1}^n\sum_{k=1}^n(K(x_j,x_k)\xi_j,\xi_k)_\Lambda=\sum_{k=1}^n(f(x_k),\xi_k)_\Lambda=0,
$$
contradicting (\ref{finitesufficient}).

We complete the proof by pointing out that when $\Lambda$ is finite-dimensional, (\ref{finitenecessary}) and (\ref{finitesufficient}) are equivalent.
\end{proof}

It is worthwhile to note that when $\Lambda$ is infinite-dimensional, condition (\ref{finitenecessary}) might not be sufficient for $K$ to not have a nontrivial refinement. We give a concrete example to illustrate this.

Let $X$ be a singleton $\{x\}$, $\Lambda:=\ell^2(\bN)$ consisting of square-summable sequences indexed by $\bN$, and $K(x_1,x_1)$ be the operator $T$ on $\ell^2(\bN)$ defined by
$$
Ta:=\left(\frac{a_j}j:j\in\bN\right),\ \ a\in\ell^2(\bN).
$$
Apparently, $T\in\cL_+(\ell^2(\bN))$ and condition (\ref{finitenecessary}) is satisfied. Let $f\in\cH_K$. Then there exist $a_n\in\ell^2(\bN)$, $n\in\bN$ such that $K(x,\cdot)a_n$ converges to $f$ in $\cH_K$. Being a Cauchy sequence in $\cH_K$, $\{K(x,\cdot)a_n:n\in\bN\}$ satisfies
$$
\lim_{n,m\to\infty}\|K(x,\cdot)a_n-K(x,\cdot)a_m\|^2_{\cH_K}=0.
$$
By (\ref{reproducing}),
$$
\begin{array}{rl}
\|K(x,\cdot)a_n-K(x,\cdot)a_m\|^2_{\cH_K}&=(K(x,\cdot)(a_n-a_m),K(x,\cdot)(a_n-a_m))_{\cH_K}\\
&=(K(x,x)(a_n-a_m),a_n-a_m)_{\ell^2(\bN)}
=(T(a_n-a_m),a_n-a_m)_{\ell^2(\bN)}\\
&=\|\sqrt{T}a_n-\sqrt{T}a_m\|_{\ell^2(\bN)}^2.
\end{array}
$$
Combining the above two equations yields $\sqrt{T}a_n$ converges to some $b\in\ell^2(\bN_n)$. We now have for each $c\in\ell^2(\bN)$ that
$$
\begin{array}{rl}
(f(x),c)_{\ell^2(\bN)}&\displaystyle{=(f,K(x,\cdot)c)_{\cH_K}=\lim_{n\to\infty}(K(x,\cdot)a_n,K(x,\cdot)c)_{\cH_K}}\\
&\displaystyle{=\lim_{n\to\infty}(K(x,x)a_n,c)_{\ell^2(\bN)}=\lim_{n\to\infty}(Ta_n,c)_{\ell^2(\bN)}}\\
&\displaystyle{=\lim_{n\to\infty}(\sqrt{T}a_n,\sqrt{T}c)_{\ell^2(\bN)}=(b,\sqrt{T}c)_{\ell^2(\bN)}}\\
&=(\sqrt{T}b,c)_{\ell^2(\bN)},
\end{array}
$$
which implies that $f(x)=\sqrt{T}b$. Since this is true for an arbitrary function $f\in\cH_K$, the function $g:X\to\Lambda$ defined by
$$
g(x):=\left(\frac1j:j\in\bN\right)
$$
is not in $\cH_K$. Thus, $K$ has a nontrivial refinement on $X$.

In the process of refining an operator-valued reproducing kernel, it is usually desirable to preserve favorable properties of the original kernel. We shall show that this is feasible as far as continuity and universality of operator-valued reproducing kernels are concerned. Let $X$ be a metric space and $K$ an $\cL(\Lambda)$-valued reproducing kernel that is continuous from $X\times X$ to $\cL(\Lambda)$ when the latter is equipped with the operator norm. Then one sees that $\cH_K$ consists of continuous functions from $X$ to $\Lambda$. For each compact subset $\cZ\subseteq X$, denote by $\cC(\cZ,\Lambda)$ the Banach space of all the continuous functions from $\cZ$ to $\Lambda$ with the norm
$$
\|f\|_{\cC(\cZ,\Lambda)}:=\max_{x\in \cZ}\|f(x)\|_{\Lambda},\ \ f\in\cC(\cZ,\Lambda).
$$
Following \cite{MXZ} and \cite{CMPY}, we call $K$ a {\it universal kernel} on $X$ if for all compact sets $\cZ\subseteq X$ and all continuous functions $f:X\to\Lambda$ there exist
$$
f_n\in\span\{K(x,\cdot)\xi:x\in \cZ,\ \xi\in\Lambda\},\ \ n\in\bN,
$$
such that
$$
\lim_{n\to\infty}\|f_n-f\|_{\cC(\cZ,\Lambda)}=0.
$$
In other words, $K$ is universal if for all compact subsets $\cZ\subseteq X$, the closure of $\span\{K(x,\cdot)\xi:x\in\cZ,\ \xi\in\Lambda\}$ in $\cC(\cZ,\Lambda)$ equals the whose space $\cC(\cZ,\Lambda)$.

For the preservation of continuity, we have the following affirmative result, whose proof is similar to the scalar-valued case \cite{XZ2}.

\begin{prop}\label{continuity}
Let $X$ be a metric space with infinite cardinality. Then every continuous $\cL(\Lambda)$-valued reproducing kernel on $X$ has a nontrivial continuous refinement.
\end{prop}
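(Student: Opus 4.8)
The plan is to reduce to the scalar-valued case (the corresponding theorem in \cite{XZ2}) and then add a rank-one perturbation, applying Proposition \ref{firstcharacterization}. Unless $K\equiv0$ (a case disposed of at the end), I would first fix a unit vector $\xi_0\in\Lambda$ for which the scalar-valued function $k_0(x,y):=(K(x,y)\xi_0,\xi_0)_\Lambda$ is not identically zero; such a $\xi_0$ exists because $K(x,x)=0$ for every $x$ would force $K\equiv0$ via the Cauchy--Schwarz inequality for reproducing kernels. Since $K$ is continuous from $X\times X$ into $\cL(\Lambda)$ with the operator norm, $k_0$ is a continuous scalar-valued reproducing kernel on $X$, and by the reproducing property (\ref{reproducing}) every $g\in\cH_K$ satisfies $(g(\cdot),\xi_0)_\Lambda\in\cH_{k_0}$; this is immediate from $(g,K(x,\cdot)\xi_0)_{\cH_K}=(g(x),\xi_0)_\Lambda$, or from Lemma \ref{rkhsfeature} applied to the feature map $x\mapsto K(x,\cdot)\xi_0$ with feature space $\cH_K$.

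Because $X$ is an infinite metric space, the scalar-valued analogue of this proposition proved in \cite{XZ2} provides a nontrivial continuous refinement $g_0$ of $k_0$. Since $g_0\ne k_0$ and reproducing kernels are in bijection with their RKHSs, the inclusion $\cH_{k_0}\subseteq\cH_{g_0}$ must be strict; moreover $\cH_{g_0}$ consists of continuous functions. Hence there is a nonzero continuous $\psi:X\to\bC$ with (after replacing $\psi$ by $\overline\psi$ if necessary) $\overline\psi\notin\cH_{k_0}$. I would then define $R:X\times X\to\cL(\Lambda)$ by $R(x,y)\eta:=\psi(x)\overline{\psi(y)}(\eta,\xi_0)_\Lambda\,\xi_0$. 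A direct check gives $R(x,y)=R(y,x)^*$ and $\sum_{j=1}^n\sum_{k=1}^n(R(x_j,x_k)\xi_j,\xi_k)_\Lambda=\bigl|\sum_{j=1}^n\psi(x_j)(\xi_j,\xi_0)_\Lambda\bigr|^2\ge0$, so $R$ is an $\cL(\Lambda)$-valued reproducing kernel; it has the one-dimensional feature space $\bC$ (feature map $x\mapsto(\eta\mapsto\psi(x)(\eta,\xi_0)_\Lambda)$), and Lemma \ref{rkhsfeature} identifies $\cH_R$ as the line spanned by the continuous function $x\mapsto\overline{\psi(x)}\xi_0$.

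Now set $G:=K+R$. Then $G$ is continuous, and $G\ne K$ since $R(x,x)\xi_0=|\psi(x)|^2\xi_0\ne0$ at any point where $\psi$ does not vanish. The key point is that $\cH_K\cap\cH_R=\{0\}$: if $c\,\overline{\psi(\cdot)}\xi_0\in\cH_K$ with $c\ne0$, then $\overline{\psi(\cdot)}\xi_0\in\cH_K$, and pairing with $\xi_0$ yields $\overline\psi\in\cH_{k_0}$, contradicting the choice of $\psi$. By Proposition \ref{firstcharacterization}, $\cH_K\preceq\cH_{K+R}$, so $G$ is a nontrivial continuous refinement of $K$. If instead $K\equiv0$, then $\cH_K=\{0\}$, and since the infinite metric space $X$ carries a nonzero continuous scalar function $\psi$, the same $R$ (now $G=R$) does the job.

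The one genuinely non-routine ingredient is the scalar-valued theorem borrowed from \cite{XZ2} --- that the reproducing kernel Hilbert space of a continuous scalar kernel on an infinite metric space cannot be all of $\cC(X,\bC)$ --- which is exactly where the metric structure and the infinite cardinality of $X$ enter; I would quote it rather than reprove it. Everything else (the reduction through a single $\xi_0$, the rank-one construction of $R$, and the verification of its properties) parallels the scalar-valued argument, as the phrasing of the proposition anticipates.
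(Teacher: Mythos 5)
Your argument is correct. The paper omits the proof of this proposition, saying only that it is ``similar to the scalar-valued case \cite{XZ2}''; the route it gestures at is presumably to repeat the scalar argument inside the vector-valued setting, i.e., to show directly that some continuous $\Lambda$-valued function lies outside $\cH_K$ and then add a rank-one kernel built from it. You instead invoke the scalar theorem of \cite{XZ2} as a black box and transport its output into the operator-valued setting through one fixed unit direction $\xi_0$: the observation that $g\mapsto(g(\cdot),\xi_0)_\Lambda$ maps $\cH_K$ into (in fact, by Lemma \ref{rkhsfeature}, onto) $\cH_{k_0}$ is exactly what makes the rank-one perturbation $R(x,y)\eta=\psi(x)\overline{\psi(y)}(\eta,\xi_0)_\Lambda\xi_0$ transversal to $\cH_K$, and Proposition \ref{firstcharacterization} then finishes the job. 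All the individual verifications (positivity and hermitian symmetry of $R$, the identification $\cH_R=\span\{\overline{\psi(\cdot)}\xi_0\}$, continuity of $G=K+R$, strictness of $\cH_{k_0}\subsetneq\cH_{g_0}$ from $g_0\ne k_0$) check out. What your version buys is brevity and a clean reduction; what it gives up is only that the added component of the refinement lives entirely along the single direction $\xi_0$. Two pedantic remarks: the statement implicitly requires $\Lambda\ne\{0\}$ (otherwise the zero kernel is the only kernel and has no nontrivial refinement), and the phrase ``after replacing $\psi$ by $\overline\psi$ if necessary'' is superfluous --- you may simply define $\psi:=\overline{h}$ for any $h\in\cH_{g_0}\setminus\cH_{k_0}$.
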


The following lemma about universality has been proved in \cite{CMPY}, and in \cite{MXZ} in the scalar-valued case. We provide a simplified proof here.

\begin{lemma}\label{universal}
Let $K$ be a continuous $\cL(\Lambda)$-valued reproducing kernel on $X$ with the feature map representation (\ref{featureK}), where $\Phi:X\to\cL(\Lambda,\cW)$ is continuous. Then for each compact subset $\cZ\subseteq X$,
$$
\overline{\span}\{K(x,\cdot)\xi:x\in \cZ,\ \xi\in\Lambda\}=\overline{\{\Phi(\cdot)^*u:u\in\cW\}},
$$
where the closures are relative to the norm in $\cC(\cZ,\Lambda)$.
\end{lemma}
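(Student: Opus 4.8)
The plan is to prove the two inclusions between these closed subspaces of $\cC(\cZ,\Lambda)$ separately, the only real issue being to transfer a closure taken in the Hilbert space $\cW$ to a closure taken in the sup-norm space $\cC(\cZ,\Lambda)$. First I would record the elementary consequence of the feature-map identity (\ref{featureK}): for every $x\in X$ and $\xi\in\Lambda$ the function $K(x,\cdot)\xi$ equals $\Phi(\cdot)^*u$ with $u:=\Phi(x)\xi\in\cW$, since $K(x,y)=\Phi(y)^*\Phi(x)$. Restricting to $\cZ$, this gives $\span\{K(x,\cdot)\xi:x\in\cZ,\ \xi\in\Lambda\}\subseteq\{\Phi(\cdot)^*u:u\in\cW\}$ inside $\cC(\cZ,\Lambda)$, and hence, after taking closures, $\overline{\span}\{K(x,\cdot)\xi:x\in\cZ,\ \xi\in\Lambda\}\subseteq\overline{\{\Phi(\cdot)^*u:u\in\cW\}}$.

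For the reverse inclusion I would proceed in three small steps. First, observe that the linear map $u\mapsto\Phi(\cdot)^*u$, with the functions restricted to $\cZ$, is bounded from $\cW$ into $\cC(\cZ,\Lambda)$: continuity of $\Phi$ in the operator norm and compactness of $\cZ$ yield a finite bound $M_\cZ:=\sup_{y\in\cZ}\|\Phi(y)\|_{\cL(\Lambda,\cW)}$, so that $\|\Phi(\cdot)^*u\|_{\cC(\cZ,\Lambda)}\le M_\cZ\|u\|_\cW$, while continuity of $y\mapsto\Phi(y)^*u$ shows $\Phi(\cdot)^*u\in\cC(\cZ,\Lambda)$. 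Second, set $\cW_\cZ:=\overline{\span}\{\Phi(x)\xi:x\in\cZ,\ \xi\in\Lambda\}$ (closure in $\cW$) and decompose an arbitrary $u\in\cW$ as $u=u_0+u_1$ with $u_0\in\cW_\cZ$ and $u_1\perp\cW_\cZ$; since $(\Phi(y)^*u_1,\xi)_\Lambda=(u_1,\Phi(y)\xi)_\cW=0$ for all $y\in\cZ$ and $\xi\in\Lambda$, the function $\Phi(\cdot)^*u_1$ vanishes on $\cZ$, whence $\Phi(\cdot)^*u=\Phi(\cdot)^*u_0$ as elements of $\cC(\cZ,\Lambda)$, and therefore $\{\Phi(\cdot)^*u:u\in\cW\}=\{\Phi(\cdot)^*u_0:u_0\in\cW_\cZ\}$. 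Third, apply the bounded operator $u_0\mapsto\Phi(\cdot)^*u_0$ to $\cW_\cZ$: its image on the dense subspace $\span\{\Phi(x)\xi:x\in\cZ,\ \xi\in\Lambda\}$ of $\cW_\cZ$ is, by linearity and (\ref{featureK}), exactly $\span\{K(x,\cdot)\xi:x\in\cZ,\ \xi\in\Lambda\}$, so by boundedness the image of all of $\cW_\cZ$ lies in the $\cC(\cZ,\Lambda)$-closure of the latter. Taking closures gives $\overline{\{\Phi(\cdot)^*u:u\in\cW\}}\subseteq\overline{\span}\{K(x,\cdot)\xi:x\in\cZ,\ \xi\in\Lambda\}$, and combining the two inclusions yields the asserted equality.

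The main obstacle, though a mild one, is precisely this last continuity argument: without the uniform bound $M_\cZ$ over $\cZ$, density in $\cW$ would not pass to density in the uniform norm of $\cC(\cZ,\Lambda)$, and it is exactly here that the continuity hypothesis on $\Phi$ together with compactness of $\cZ$ is used in an essential way. The remaining ingredients are routine bookkeeping with the orthogonal projection onto $\cW_\cZ$ and with the identity (\ref{featureK}).
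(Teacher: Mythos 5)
Your proof is correct and takes essentially the same route as the paper's: the paper applies Lemma \ref{rkhsfeature} to the restricted kernel $K_\cZ$ to identify $\{\Phi(\cdot)^*u:u\in\cW\}$ with $\cH_{K_\cZ}$ and then uses that convergence in the RKHS norm implies convergence in $\cC(\cZ,\Lambda)$. Your orthogonal decomposition $u=u_0+u_1$ onto $\cW_\cZ$ together with the uniform bound $M_\cZ$ is exactly the content of that lemma and of the continuous embedding, unpacked by hand rather than cited.
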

\begin{proof}
All the closures to appear in the proof are relative to the norm in $\cC(\cZ,\Lambda)$. Let $K_\cZ$ be the restriction of $K$ on $\cZ$. Then the restriction of $\Phi$ on $\cZ$ remains a feature map for $K_\cZ$. By Lemma \ref{rkhsfeature},
\begin{equation}\label{universaleq1}
\cH_{K_\cZ}=\{\Phi(\cdot)^*u:u\in\cW\}.
\end{equation}
It hence suffices to show that
$$
\overline{\span}\{K(x,\cdot)\xi:x\in \cZ,\ \xi\in\Lambda\}=\overline{\span}\{K_\cZ(x,\cdot)\xi:x\in \cZ,\ \xi\in\Lambda\}=\overline{\cH_{K_\cZ}}.
$$
As $\span\{K_\cZ(x,\cdot)\xi:x\in \cZ,\ \xi\in\Lambda\}\subseteq\cH_{K_\cZ}$,
\begin{equation}\label{universaleq2}
\overline{\span}\{K_\cZ(x,\cdot)\xi:x\in \cZ,\ \xi\in\Lambda\}\subseteq\overline{\cH_{K_\cZ}}.
\end{equation}
On the other hand, for each $f\in\cH_{K_\cZ}$ there exist $f_n\in\span\{K_\cZ(x,\cdot)\xi:x\in \cZ,\ \xi\in\Lambda\}$, $n\in\bN$ that converges to $f$ in the norm of $\cH_{K_\cZ}$. It follows that $f_n$ converges to $f$ in the norm of $\cC(\cZ,\Lambda)$. Therefore, $f\in \overline{\span}\{K_\cZ(x,\cdot)\xi:x\in \cZ,\ \xi\in\Lambda\}$, implying that
\begin{equation}\label{universaleq3}
\overline{\cH_{K_\cZ}}\subseteq\overline{\span}\{K_\cZ(x,\cdot)\xi:x\in \cZ,\ \xi\in\Lambda\}.
\end{equation}
Combining equations (\ref{universaleq1}), (\ref{universaleq2}), and (\ref{universaleq3}) proves the result.
\end{proof}

The following positive result about universality can be proved by Lemma \ref{universal} and arguments similar to those used in Proposition 14 of \cite{XZ2}.

\begin{prop}
Let $X$ be a metric space and $K$ a continuous $\cL(\Lambda)$-valued reproducing kernel on $X$. Then every continuous refinement of $K$ on $X$ remains universal.
\end{prop}

\section{Numerical Experiments}
\setcounter{equation}{0}

We present in this final section two numerical experiments on the application of refinement of operator-valued reproducing kernels to multi-task learning. Suppose that $f_0$ is a function from the input space $X$ to the output space $\Lambda$ that we desire to learn from its finite sample data $\{(x_j,\xi_j):j\in\bN_m\}\subseteq X\times \Lambda$. Here $m$ is the number of sampling points and
$$
\xi_j=f_0(x_j)+\delta_j,\ \ j\in\bN_m
$$
where $\delta_j\in\Lambda$ is the noise dominated by some unknown probability measure. To deal with the noise and have an acceptable generalization error, we use the following regularization network
\begin{equation}\label{regularizationworks1}
\min_{f\in\cH_K}\frac1m\sum_{j=1}^m\|f(x_j)-\xi_j\|_\Lambda^2+\sigma \|f\|_{\cH_K}^2,
\end{equation}
where $K$ is a chosen $\Lambda$-valued reproducing kernel on $X$. Our experiments will be designed so that underfitting and overfitting both have the chance to occur. To echo with the motivations in Section 2, when underfitting happens in the first experiment, we shall find a refinement $G$ of $K$ aiming at improving the performance of the minimizer of (\ref{regularizationworks1}) in prediction. On the other hand, when overfitting appears in the second experiment, we shall then find a $\Lambda$-valued reproducing kernel $L$ on $X$ such that $\cH_L\preceq\cH_K$ with the same purpose.

Before moving on to the experiments, we make a remark on how (\ref{regularizationworks1}) can be solved. The issue has been understood in the work \cite{MP2005}. We say that $K$ is {\it strictly positive-definite} if for all finite $y_j\in X$, $j\in\bN_p$, and for all $\eta_j\in\Lambda$, $j\in\bN_p$ all of which are not zero
$$
\sum_{j=1}^p\sum_{k=1}^p(K(y_j,y_k)\eta_j,\eta_k)_\Lambda>0.
$$
If $K$ is strictly positive-definite then the minimizer $f_K$ of (\ref{regularizationworks1}) has the form
\begin{equation}\label{representer}
f_K=\sum_{j=1}^mK(x_j,\cdot)\eta_j
\end{equation}
where $\eta_j$'s satisfy
\begin{equation}\label{linearsystem}
\sum_{k=1}^m K(x_k,x_j)\eta_k +m\sigma \eta_j=\xi_j,\ \ j\in\bN_m.
\end{equation}

\subsection{Experiment one: underfitting}
The vector-valued function to be learned from finite examples is from the input space $X=[-1,1]$ to output space $\Lambda=\bR^n$, where $n\in\bN$. Specifically, it has the form
\begin{equation}\label{functionf01}
f_0(x):=\left[a_k|x-b_k|+c_k e^{-d_k x}:k\in\bN_n\right],\ \ x\in[-1,1],
\end{equation}
where $a,b,c,d$ are constant vectors to be randomly generated. The $\cL_+(\bR^n)$-valued reproducing kernel that we shall use in the regularization network (\ref{regularizationworks1}) is a Gaussian kernel
$$
K(x,y):=S\exp\left(-\frac{(x-y)^2}2\right),\ \ x,y\in [-1,1],
$$
where $S\in \cL_+(\bR^n)$ is strictly positive-definite. It can be identified by Lemma \ref{rkhsfeature} that functions in $\cH_K$ are of the form $\sqrt{S} v$, where $v$ is an $\bR^n$-valued function on $[-1,1]$ such that for each $k\in\bN_n$, its $k$-th component $v_k$ is the restriction on $[-1,1]$ of a square Lebesgue integrable function $u_k$ on $\bR$ such that
$$
\int_{\bR}\left|\hat{u_k}(t)\right|^2\exp\left(\frac{t^2}2\right)dt<+\infty.
$$
Here $\hat{u_k}$ denotes the Fourier transform of $u_k$ given as
$$
\hat{u_k}(t):=\frac1{\sqrt{2\pi}}\int_\bR e^{-ixt}u_k(x)dx,\ \ t\in\bR.
$$
Therefore, such a function $u_k$ can be extended to an analytic function of finite order on the complex plane. In particular, it implies that each component $v_k$ of $v$ is real-analytic on $[-1,1]$. As a result, components of functions in $\cH_K$ are real-analytic. The function $f_0$ to be approximated is defined by (\ref{functionf01}). We see that while the exponential component $e^{-d_k x}$ is real-analytic, the first component $|x-b_k|$ is not even continuously differentiable. Underfitting is hence expected. If this is indeed observed then a remedy is to use the refinement of $K$ given by
$$
G(x,y):=S\exp\left(-\frac{(x-y)^2}2\right)+T(1+xy)^3,\ \ x,y\in[-1,1],
$$
where $T\in \cL_+(\bR^n)$ is also strictly positive-definite. It can be verified that $\cH_K\cap\cH_{G-K}=\{0\}$. By Proposition \ref{firstcharacterization}, $G$ is a nontrivial refinement of $K$. Furthermore, as low order polynomials are introduced, the ability for functions in $\cH_G$ to approximate the function $|x-b_k|$ is expected to be superior to those in $\cH_K$. We perform extensive numerical simulations to confirm these conjectures.

The dimension $n$ will be chosen from $\{2,4,8,16\}$. The number $m$ of sampling points will be set to be $30$. The sampling points $x_j$, $j\in\bN_m$ will be randomly sampled from $[-1,1]$ by the uniform distribution and the outputs $\xi_j$ are generated by
\begin{equation}\label{noisedoutput}
\xi_j=f_0(x_j)+\delta_j,\ \ j\in\bN_m,
\end{equation}
where $\delta_j$ are vectors whose components will be randomly generated by the uniform distribution on $[-\delta,\delta]$ with $\delta$ being the noise level selected from $\{0.1,0.3,0.5\}$. For each dimension $n\in\{2,4,8,16\}$ and noise level $\delta\in\{0.1,0.3,0.5\}$, we run 50 simulations. In each of the simulations, we do the followings:
\begin{enumerate}
\item the components of the coefficient vectors $a,b,c,d$ in the function $f_0$ given by (\ref{functionf01}) are randomly generated by the uniform distribution on $[1,3]$, $[-1,1]$, $[-2,2]$, and $[0,3]$, respectively;

\item the sampling points are randomly sampled from $[-1,1]$ by the uniform distribution and the outputs $\xi_j$ are then generated by (\ref{noisedoutput});

\item the matrices $S$ and $T$ are given by $S=A'A$ and $T=B'B$ where $A,B$ are $n\times n$ real matrices whose components are randomly sampled from $[1,3]$ by the uniform distribution;

\item we then solve the minimizer $f_K$ of (\ref{regularizationworks1}) by (\ref{representer}) and (\ref{linearsystem});

\item for the refinement kernel $G$, we also obtain $f_G$ as the minimizer of
\begin{equation}\label{regularizationworks2}
\min_{f\in\cH_G}\frac1m\sum_{j=1}^m\|f(x_j)-\xi_j\|_\Lambda^2+\sigma \|f\|_{\cH_G}^2,
\end{equation}

\item the regularization parameters in (\ref{regularizationworks1}) and (\ref{regularizationworks2}) are optimally chosen so that the relative square approximation errors
    \begin{equation}\label{relativeerror}
    \cE_K:=\frac{\int_{-1}^1\|f_K(t)-f_0(t)\|^2dt}{\int_{-1}^1\|f_0(t)\|^2dt},\ \ \cE_G:=\frac{\int_{-1}^1\|f_G(t)-f_0(t)\|^2dt}{\int_{-1}^1\|f_0(t)\|^2dt}.
    \end{equation}
    are minimized, respectively.
\end{enumerate}

We call $(\cE_K,\cE_G)$ obtained in each simulation an instance of approximation errors. Hence, we have 50 instances for each pair of $(n,\delta)$. They are said to form a group. There are 12 groups of instances of approximation errors. For each $(n,\delta)$, we shall calculate the mean and standard deviation of the difference $\cE_K-\cE_G$ in the corresponding group as a measurement of the difference in the performance of learning schemes (\ref{regularizationworks1}) and (\ref{regularizationworks2}). Before that, outliers of instances should be excluded. Although we do not know the distributions of $\cE_K$ and $\cE_G$, we shall use the three-sigma rule in statistics. In other words, we regard an instance $(\cE_K,\cE_G)$ as an outlier if the deviation of $\cE_K$ or $\cE_G$ to their respective mean in the group exceeds three times their respective standard deviation. There are 32 outliers among the entire 600 instances, which are listed below in Table 7.1.\newpage

\noindent{\bf Table 7.1} {\bf Outliers of instances of approximation errors $(\cE_K,\cE_G)$}. {\it An instance $(\cE_K,\cE_L)$ is considered to be an outlier if the deviation of one of its components to the respective mean in the group is more than three times the standard deviation of the group. Outliers are listed in an independent table because they should be excluded from the calculation of the mean and standard deviation of the approximation errors. Another reason is that adding them will make the plot of the approximation errors highly disproportional.}
$$
\begin{array}{|c|c|c|c|c|}\hline
&n=2&n=4&n=8&n=16\\\hline
\multirow{4}{*}{$\delta=0.1$}&(0.1024,0.0084)&(0.0215,0.0182)&(0.0230,0.0070)&(0.0712,0.0015)\\
                             &(0.0091,0.0081)&(0.4095,0.0034)&(0.0513,0.0091)&(0.0364,0.0124)\\
                             &(0.4128,0.0006)&               &(0.1554,0.0011)&               \\
                             &(0.6783,0.0025)&               &(0.1464,0.0026)&               \\\hline
\multirow{3}{*}{$\delta=0.3$}&(0.0286,0.0228)&(0.0663,0.0321)&(0.0407,0.0194)&(0.1592,0.0018)\\
                             &(0.4811,0.0020)&(0.1892,0.0041)&(0.1809,0.0023)&(0.0309,0.0127)\\
                             &               &(0.1674,0.0095)&               &(0.0229,0.0099)\\\hline
\multirow{3}{*}{$\delta=0.5$}&(0.2053,0.0020)&(0.0377,0.0376)&(0.2445,0.0028)&(0.1612,0.0043)\\
                             &(0.1267,0.0034)&(0.3547,0.0033)&(0.2762,0.0020)&(0.0541,0.0081)\\
                             &(0.0669,0.0465)&               &(0.0119,0.0264)&               \\\hline
  \end{array}
$$

We make a few observations from Table 7.1. Firstly, $\cE_G$ is smaller than $\cE_K$ except for only one instance. For a large portion of the outliers, the approximation error $\cE_K$ is considerably large (larger than 10\%), a sign of underfitting of the kernel $K$. Those instances are of the greatest interest to us as we desire to see if the refinement kernel $G$ can make a remedy when overfitting does happen. We see from Table 7.1 that for all of those outliers, the refinement kernel $G$ always brings down the relative approximation error to be less than $1\%$. The improvement brought by $G$ for other instances is also significant. The observations indicate that (\ref{regularizationworks2}) performs significantly better in learning the function (\ref{functionf01}) from finite examples than (\ref{regularizationworks1}). For further comparison, we compute the mean and standard deviation of the difference $\cE_K-\cE_G$ of the approximation errors after excluding the above outliers. The results are tabulated below. Note that a positive value of the mean implies that (\ref{regularizationworks2}) performs better than (\ref{regularizationworks1}). It is worthwhile to point out that among all the rest 568 instances excluding the outliers, there are only 33 where $\cE_G$ is larger than $\cE_K$. The largest value of $\cE_G-\cE_K$ is $0.0020$. Therefore, we conclude that for all the $(n,\delta)$, (\ref{regularizationworks2}) is superior to (\ref{regularizationworks1}), and the larger the standard deviation in Table 7.2 is, the greater improvement the refinement kernel $G$ brings.\newline

\noindent {\bf Table 7.2 The mean and standard deviation (in parentheses) of $\cE_K-\cE_G$.} {\it The outliers of instances listed in Table 7.1 are not counted toward these calculations. If they were added, the improvement brought by the refinement kernel $G$ would have been more dramatic.}
$$
\begin{array}{|c|c|c|c|c|}\hline
          & n=2         &n=4      &n=8       &n=16\\\hline
\multirow{2}{*}{$\delta=0.1$}&0.0098  &  0.0139 &0.0160  & 0.0108 \\
                             &(0.0182)& (0.0335)&(0.0241)&(0.0135)\\\hline
\multirow{2}{*}{$\delta=0.3$}&0.0076  &  0.0141 &0.0143  & 0.0188\\
                             &(0.0144)& (0.0245)&(0.0208)&(0.0259)\\\hline
\multirow{2}{*}{$\delta=0.5$}& 0.0054 &  0.0127 &0.0103  & 0.0091\\
                             &(0.0121)& (0.0307)&(0.0186)&(0.0102)\\\hline
\end{array}
$$\newline

We shall also plot the 12 groups of approximation errors $\cE_K,\cE_G$ for a visual comparison. To this end, we take out the instances for which $\cE_K$ is too large to have an appropriate range in the vertical axes in the figures. Therefore, Figures 7.1 and 7.2 are not full embodiment of the improvement of (\ref{regularizationworks2}) over (\ref{regularizationworks1}). Nevertheless, one sees that the improvement brought by the refinement kernel $G$ in these relatively well-behaved instances is still dramatic.\newline

\noindent{\bf Figure 7.1 Relative approximation errors $\cE_K,\cE_G$ for $n=2,4$ and $\delta=0.1,0.3,0.5$.} {\it The outliers listed in Table 7.1 are not plotted here as they would make the figure highly disproportional.}
\begin{center}
\scalebox{0.5}[1]{\includegraphics*{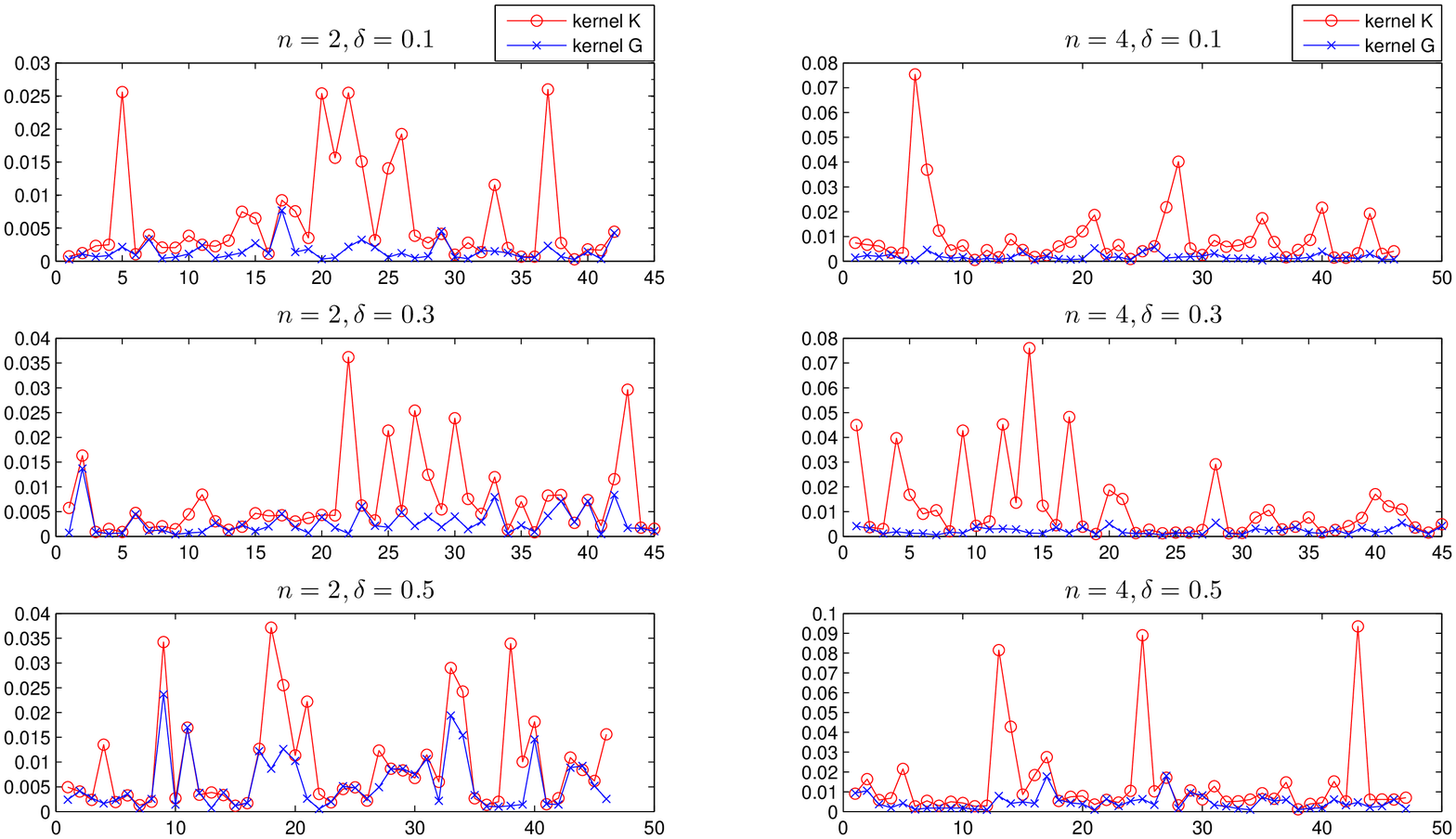}}
\end{center}
\newpage
\noindent{\bf Figure 7.2 Relative approximation errors $\cE_K,\cE_G$ for $n=8,16$ and $\delta=0.1,0.3,0.5$.} {\it The outliers listed in Table 7.1 are not plotted in the figure here.}
\begin{center}
\scalebox{0.5}[1.1]{\includegraphics*{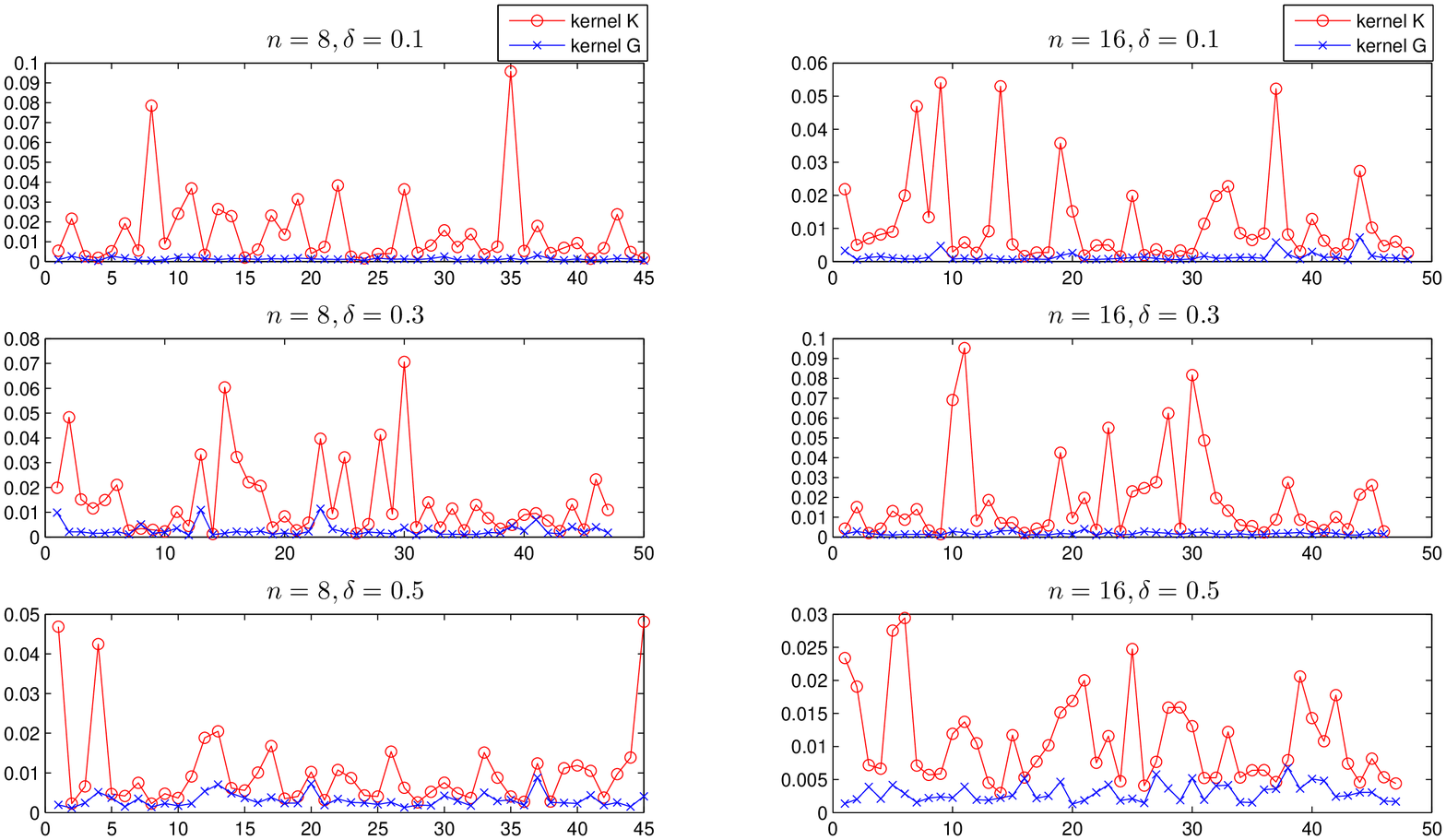}}
\end{center}

\subsection{Experiment 2: overfitting}

The target function we consider in the second experiment is
\begin{equation}\label{functionf02}
f_0(x)=\left[\frac{a_k}{1+25(x-b_k)^2}+c_ke^{-d_k x}:k\in\bN_n\right],\ \ x\in[-1,1],
\end{equation}
where the components of the vectors $a,b,c,d\in\bR^n$ will be randomly sampled by the uniform distribution from $[1,4]$, $[0,\frac12]$, $[-2,2]$, and $[0,2]$ respectively in the numerical simulations. The dimension $n$ will be chosen from $\{2,4,8,16\}$. We fix $m:=20$ and shall sample the inputs $x_j$, $j\in\bN_m$ randomly by the uniform distribution from $[-1,1]$. Similarly, the outputs $\xi_j\in\bR^n$, $j\in\bN_m$ will be generated by (\ref{noisedoutput}) where the noise level is to be selected from $\{0.1,0.3,0.5\}$.

In the first step, we substitute the sample data $\{(x_j,\xi_j):j\in\bN_m\}$ into the regularization network (\ref{regularizationworks1}) with the following kernel
\begin{equation}\label{kernelKoverfitting}
K(x,y):=S\exp\left(-\frac{(x-y)^2}2\right)+T(1+xy)^{18},\ \ x,y\in [-1,1],
\end{equation}
where $S=A'A$ and $T=B'B$ with $A,B$ being $n\times n$ real-matrices whose components will be randomly sampled by the uniform distribution from $[1,2]$. The target function (\ref{functionf02}) contains translations of the Runge function
$$
\frac1{1+25x^2},\ \ x\in[-1,1].
$$
It is well-known that approximating the Runge function by high order polynomial interpolations leads to overfitting. One sees by (\ref{linearsystem}) that the regulation network (\ref{regularizationworks1}) might be regarded as a regularized interpolation. Note also that the order of the polynomial kernel in (\ref{kernelKoverfitting}) is $18$, which is close to the number $m=20$ of sampling points. Overfitting is hence expected. When this occurs, we propose to reduce the order of the polynomial kernel by considering
$$
L(x,y):=S\exp\left(-\frac{(x-y)^2}2\right)+T\sum_{k=0}^{10}{{18}\choose{k}}(xy)^k,\ \ x,y\in[-1,1].
$$
By Corollary \ref{multidecomposition}, $\cH_L\preceq\cH_K$, namely, $K$ is a refinement of $L$. We shall demonstrate by numerical simulations that
\begin{equation}\label{regularizationworks3}
\min_{f\in\cH_L}\frac1m\sum_{j=1}^m \|f(x_j)-\xi_j\|^2+\sigma \|f\|_{\cH_L}^2
\end{equation}
outperforms (\ref{regularizationworks1}) with the kernel (\ref{kernelKoverfitting}). To this end, we shall conduct numerical experiments similar to those in the last subsection. Let $f_K$ and $f_L$ be the minimizer of (\ref{regularizationworks1}) and (\ref{regularizationworks3}), respectively. We shall measure the performance by the relative square approximation errors $\cE_K$ and $\cE_L$, which are defined in the same way as (\ref{relativeerror}). For each pair of $(n,\delta)$, where $n\in\{2,4,8,16\}$ and $\delta\in\{0.1,0.3,0.5\}$, we run $20$ numerical simulations where the regularization parameters $\sigma$ are to be chosen so that $\cE_K$ and $\cE_L$ are minimized, respectively. As in the first experiment, we shall calculate the mean and standard deviation of $\cE_K$ and $\cE_L$ in each group after taking out some outliers. We shall also plot the relative errors for comparison. The results are shown below in the form of tables and figures.

\noindent{\bf Table 7.3 Outliers of instances of relative approximation errors $(\cE_K,\cE_L)$. }
$$
\begin{array}{|c|c|c|l|}\hline
&\delta=0.1&\delta=0.3&\delta=0.5\\\hline
\multirow{2}{*}{n=2}&(0.9000,  0.7843)&(2.9906,1.3509)&(1.8065, 0.8044),(1.1332, 0.3213)\\
                    &                 &               &(19.6416, 7.6578)\\\hline
  \multirow{4}{*}{n=4}&(8.2450,5.8717)&(1.1760,0.1354)&(4.6316,7.0497),(2.0850,1.3204)\\
                      &(1.6654,2.0466)&(  0.4591, 0.7845)&(2.4657,1.1386)\\
                      &(18.9615,12.0513)&               &(5.7967,0.6122)\\
                     &(0.9536,  1.0998)&                 &(5.1196,2.6692)\\\hline
\multirow{7}{*}{n=8}&(0.9102,1.3862)&(1.3517, 1.8339)&(0.6369, 0.3698),(0.6945,0.2878)\\
                    &(1.2233,0.9489) &(0.8450,0.2605)&(2.2371, 2.4008)\\
                    &(0.6711,0.2249)&(0.3571, 0.7221)&(1.0738,0.4172)\\
                    &&(2.2403, 2.0108)&(1.0561,0.3067)\\
                    &              &(5.6153,5.0954)&(0.6791,1.0980)\\
                    &               &(2.0763,1.3718)&(3.6689,3.9566)\\
                    &               &(2.2567,1.4024)&(1.1238,0.2467)\\\hline
\multirow{5}{*}{n=16}&(4.4905,    5.8886)&(26.0758,7.6125)&(73.0854,42.6904),(1.6070,    1.4224)\\
                    &(7.9187,    4.3445)&(1.2255,    0.3181)&(3.2674, 2.2622),(2.1632,    1.7059)\\
                    &(2.1619, 0.5061)&(0.5140,    0.1817)&(2.8067,    0.5791),(9.0120,   3.5443)\\
                    &(17.5145,   13.7894)&(2.4289,    1.9022)&(0.6064,    0.3365),(4.0484 ,   0.4220)\\
                    &                  &                   &(1.0064,    0.8287)\\\hline
  \end{array}
$$\newline

We have more outliers compared to the first experiment. Using fewer sampling points and approximating the Runge function by polynomials both contributes to this. We observe that for the majority of these outliers, $\cE_L$ is significantly smaller than $\cE_K$, showing improvement of learning scheme (\ref{regularizationworks3}) over (\ref{regularizationworks1}). For further comparison, we shall compute the mean and variances of $\cE_K-\cE_L$ and plot the relative approximation errors $\cE_K$ and $\cE_L$ for the rest of instances.\newline

\noindent {\bf Table 7.4 The mean and standard deviation (in parentheses) of $\cE_K-\cE_L$.} {\it The outliers of instances listed in Table 7.3 are not counted toward these calculations. If they were added, the improvement brought by the refinement kernel $G$ would have been more dramatic.}

$$
\begin{array}{|c|c|c|c|c|}\hline
          & n=2         &n=4      &n=8       &n=16\\\hline
\multirow{2}{*}{$\delta=0.1$}&0.0289&    0.0511&    0.0173&    0.0157\\
    &(0.0846)&(0.0587)&(0.0779)&(0.0146)\\\hline
\multirow{2}{*}{$\delta=0.3$}&0.0404& 0.0661& 0.0671& 0.0657\\
    &(0.0922)&(0.0705)&(0.0929)&(0.0918)\\\hline
\multirow{2}{*}{$\delta=0.5$}& 0.0629  &  0.0130&    0.0484&    0.0625\\
   &(0.1098)&(0.0233)&(0.0758)&(0.0821)\\\hline
\end{array}
$$\newline

A positive value of the mean in Table 7.4 implies that (\ref{regularizationworks3}) performs better than (\ref{regularizationworks1}). It is observed that kernel $L$ brings improvement for all the choices of $n\in\{2,4,8,16\}$ and $\delta\in\{0.1,0.3,0.5\}$. We also remark that among all the 188 instances counted in Table 7.4, there are only 32 for which $\cE_L>\cE_K$. The mean and standard deviation of $\cE_L-\cE_K$ for these 32 instances are $0.0264$ and $0.0306$. We conclude that compared to (\ref{regularizationworks1}), (\ref{regularizationworks3}) improves the performance considerably in learning the function (\ref{functionf02}).\newline

\noindent{\bf Figure 7.3 Relative approximation errors $\cE_K,\cE_L$ for $n=2,4$ and $\delta=0.1,0.3,0.5$.} {\it The outliers listed in Table 7.3 are not plotted here as they will make the figure highly disproportional.}
\begin{center}
\scalebox{0.5}[1]{\includegraphics*{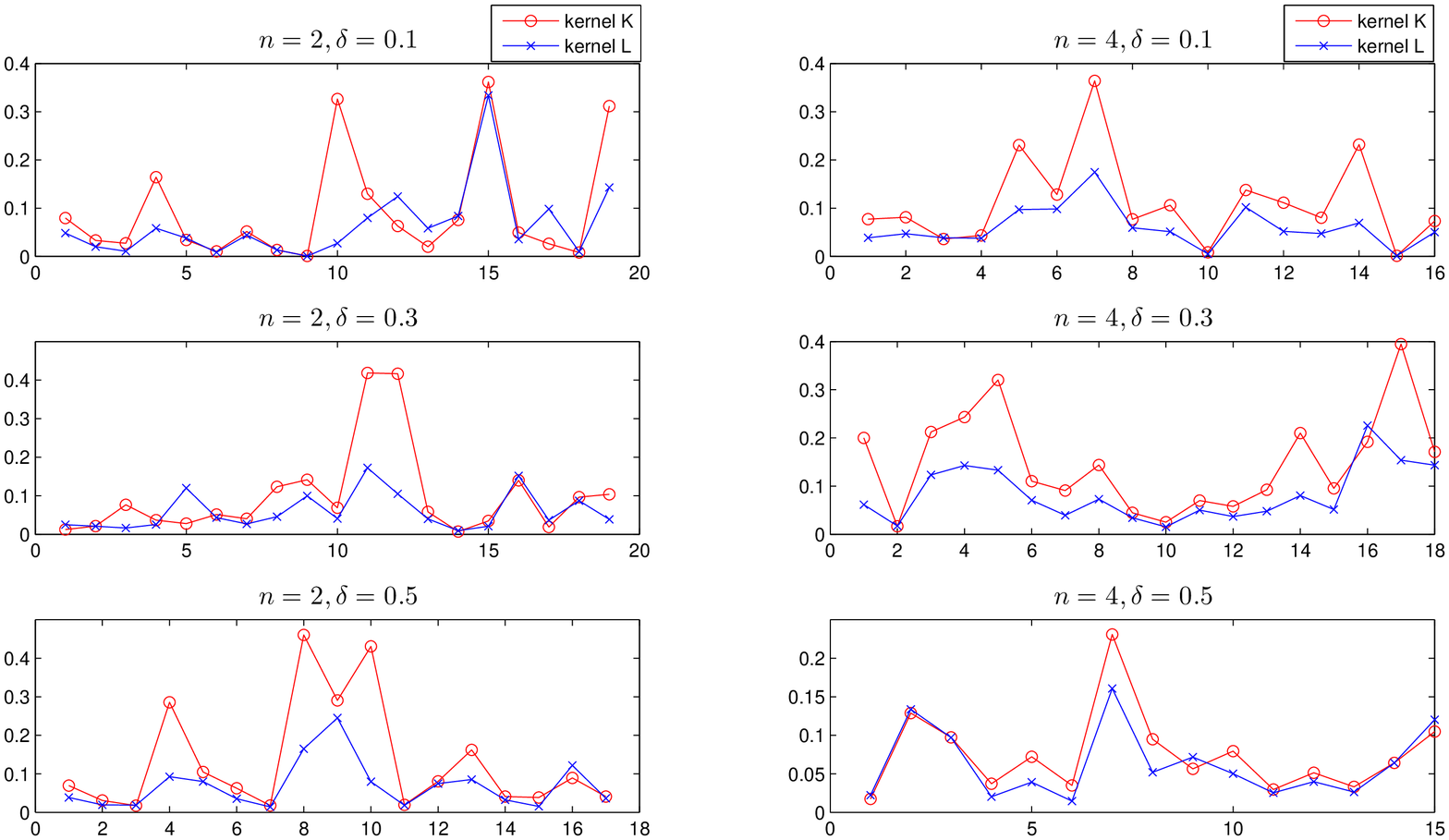}}
\end{center}

\newpage
\noindent{\bf Figure 7.4 Relative approximation errors $\cE_K,\cE_L$ for $n=8,16$ and $\delta=0.1,0.3,0.5$.} {\it The outliers listed in Table 7.3 are not plotted here.}
\begin{center}
\scalebox{0.5}[0.8]{\includegraphics*{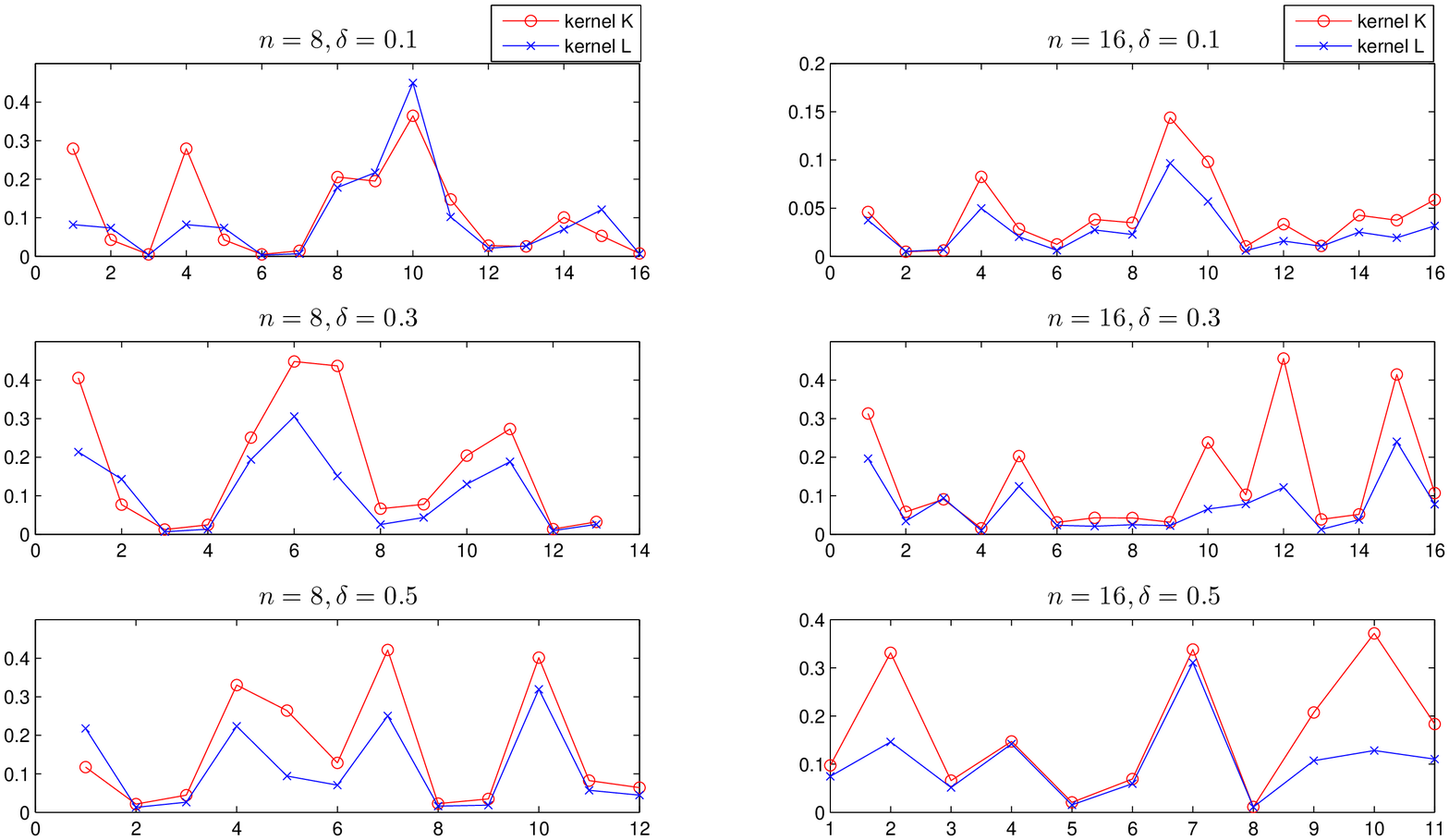}}
\end{center}

\section{Conclusion}

The refinement relationship between two operator-valued reproducing
kernels provides a promising way of updating kernels for multi-task
machine learning when overfitting or underfitting occurs. We
establish several general characterizations of the refinement
relationship. Particular attention has been paid to the case when
the kernels under investigation have a vector-valued integral
representation, the most general form of operator-valued reproducing
kernels. By the characterizations, we present concrete examples of
refining the translation invariant operator-valued reproducing
kernels, Hessian of the scalar-valued Gaussian kernel, and finite
Hilbert-Schmidt operator-valued reproducing kernels. Two numerical
experiments confirm the potential usefulness of the proposed
refinement method in updating kernels for multi-task learning. We
plan to investigate the effect of the method by real application
data in another occasion.

\end{document}